\documentclass{article}
\usepackage[margin = 1in]{geometry}
\usepackage{natbib}
\usepackage{microtype}
\usepackage{graphicx}
\usepackage{subcaption}
\usepackage{booktabs} % for professional tables
\usepackage{multirow}
\usepackage{algorithm}
\usepackage{algorithmic}
\usepackage{hyperref}
\usepackage{float}
\usepackage{overpic}
\usepackage{amsmath}
\usepackage{amssymb}
\usepackage{mathtools}
\usepackage{amsthm}
\usepackage{comment}

\usepackage[capitalize,noabbrev]{cleveref}

\theoremstyle{plain}
\newtheorem{theorem}{Theorem}[section]
\newtheorem{proposition}[theorem]{Proposition}
\newtheorem{lemma}[theorem]{Lemma}

\theoremstyle{definition}

\newtheorem{assumption}[theorem]{Assumption}
\theoremstyle{remark}

\newcommand{\x}[0]{{\mathbf{x}}}
\newcommand{\w}[0]{{\mathbf{w}}}

\renewcommand{\hat}{\widehat}
\renewcommand{\tilde}{\widetilde}

\newcommand{\E}[0]{\mathbb{E}}

\newcommand{\Sk}[0]{\mathbf{S}}
\newcommand{\Hm}[0]{\mathbf{H}}

\newcommand{\shuran}[1]{{}}
\newcommand{\dr}[1]{{}}

\newcommand{\df}[0]{\mathrm{d}}
\newcommand{\tr}[0]{\text{tr}}
\newcommand{\lh}[0]{\underline{h}}
\newcommand{\B}[0]{\mathbf{B}}
\usepackage[textsize=tiny]{todonotes}

\title{Explaining Data Mixing Scaling Laws}

\usepackage{authblk}

\author[1]{Rui Dai}
\author[2]{Shuran Zheng}

\affil[1]{Beijing Institute of Technology\\ \texttt{dairuircs@gmail.com}}
\affil[2]{IIIS, Tsinghua University\\ \texttt{shuranzheng@mail.tsinghua.edu.cn}}

\date{} 

\begin{document}
\maketitle

% this must go after the closing bracket ] following \twocolumn[ ...

% This command actually creates the footnote in the first column listing the
% affiliations and the copyright notice. The command takes one argument, which
% is text to display at the start of the footnote. The \icmlEqualContribution
% command is standard text for equal contribution. Remove it (just {}) if you
% do not need this facility.

% Use ONE of the following lines. DO NOT remove the command.
% If you have no special notice, KEEP empty braces:
  % no special notice (required even if empty)
% Or, if applicable, use the standard equal contribution text:
% \printAffiliationsAndNotice{\icmlEqualContribution}

\begin{abstract}
Recent research has established empirical scaling laws to predict model performance on multi-domain data mixtures. However, a theoretical understanding of these model loss behaviors remains absent. In this work, we propose a unified framework to explain the underlying mechanics of data mixing. Our approach extends theoretical perspectives originally developed for standard neural scaling laws (e.g., Kaplan and Chinchilla) to the multi-domain setting. Based on the distributional assumption that domains overlap on fundamental skills while diverging on specialized skills, we identify two key factors that govern the domain losses of models trained on different data mixtures: \textit{Capacity Competition}, where the allocation of finite model capacity couples domain losses globally, and \textit{Noise Reduction}, where optimal weights shift toward harder-to-learn domains to minimize overall noise.
Empirical evaluations show that our framework outperforms existing baselines by fitting the loss landscape with a lower Mean Relative Error and identifying higher-performing training mixtures. Most importantly, our model successfully extrapolates across scales, predicting highly effective mixtures for large, unseen scales using parameters fitted on smaller ones. In addition, our model achieves these results using significantly fewer free parameters than previous empirical laws. Our code is available at \url{https://github.com/meiqwq/Explaining-Data-Mixing-Scaling-Laws}.

\end{abstract}

\section{Introduction}

Large foundation models are typically trained on data from multiple domains, with the data mixture---the proportion of each domain used---playing a critical role in model performance. However, discovering the optimal data mixture is often a highly costly process that lacks principled methodologies. Practitioners often rely on expensive trial-and-error or static heuristics.

To address this, recent research has moved toward principled methods that generally fall into two distinct paradigms: online adaptation and offline prediction. Online methods attempt to adjust domain weights dynamically during the training process based on the model's ongoing learning trajectory~\citep{albalak2023efficient, chen2023skillitdatadrivenskillsframework, jiang2024adaptive, chen2024aioli, li2025pike}. While these approaches can be effective in dynamic settings, they often incur computational overhead and remain theoretically opaque regarding how dynamic weighting ultimately impacts model capabilities.

In parallel, a prominent line of offline research has focused on predicting the loss landscape \textit{a priori} through data mixing scaling laws~\citep{shukor2025scaling,ye2024data,ge2024bimix,kang2024autoscale} and determining the mixture before training begins. These empirical frameworks attempt to predict a model's test loss on specific domains as a function of the mixing weights used during training. Several functional forms have been proposed to model this relationship, as summarized in~\Cref{tab:mixing_laws_intro}. These laws deviate from the standard power-form scaling laws and present non-trivial domain interaction: the loss on a domain depends not only on the weight of the domain itself, but also the weights of other domains, and this correlation does not exhibit a simple functional form. 

\begin{table}[h]
\centering
\caption{Examples of empirical functional forms for predicting domain loss $L_{i}(h)$ based on mixture weights $h$. $N$ and $D$ represent model parameters and training tokens, respectively.}
\label{tab:mixing_laws_intro}
\begin{tabular}{@{}ll@{}}
\toprule
\textbf{Reference} & \textbf{Functional Form ($f_i(h, N, D)$)} \\ \midrule
\citep{shukor2025scaling} & 
% 使用 aligned 环境进行手动换行和对齐
$\begin{aligned} 
L_{i} &\approx E_{i}+\left(\sum_{j=1}^{K}C_{ij}h_{j}^{\gamma_{ij}}\right)^{-1} \\
\end{aligned}$ \\ 
\addlinespace % 增加一点行间距，避免拥挤
\citep{ye2024data} & $L_{i}\approx c_{i}+k_{i}\exp\left(\sum_{j=1}^{K}t_{ij}h_{j}\right)$ \\ \bottomrule
\end{tabular}
\end{table}

Despite the practical utility of both dynamic algorithms and offline empirical fits, a rigorous theoretical understanding of the mechanics driving domain interaction remains largely absent. Focusing specifically on offline mixing laws, relying solely on empirically fitted curves presents a significant bottleneck. Not only is the fitting process resource-intensive, but the resulting laws act as ``black boxes'': it is unclear whether they generalize to larger scales or different datasets, nor is it obvious how to map the predicted domain loss to downstream task performance.

In this work, we propose a unified theoretical framework to explain the underlying mechanics of data mixing. Our framework extends previous theoretical frameworks for standard neural scaling laws---specifically the Quantization Model~\citep{michaud2023quantization,liu2025physics} and the Projected Linear Regression Model~\citep{lin2024, bordelon2024dynamical}---to the multi-domain setting. Based on a natural distributional assumption that different domains overlap on fundamental skills and diverge on specialized skills, we identify two key factors that determine the loss of models trained on different data mixtures:
\begin{itemize}
\vspace{-1mm}
    \item \textbf{Model Capacity Competition:} The model has a finite capacity and can only learn a finite number of skills. The specialized skills from different domains compete for the model capacity. Adjusting domain weights will change the importance of the skills and thus change the model capacity allocated to each domain. The resulting model capacity allocation introduces a non-trivial domain interaction, and is a key factor that determines the trained model's loss on a domain. 
\vspace{-1mm}
	\item \textbf{Noise Driven by Data Amount:} For each skill within the model's capacity, the loss incurred by the skill depends on the number of times that the model has seen the skill. As skills from different domains have different difficulty levels, the loss decreases at different rates as the domain weight increases; this dynamic shifts the optimal mixture weights toward domains that are harder to learn.
\end{itemize}

Leveraging this theoretical framework, we formulate loss prediction as a convex program that yields numerical estimates for arbitrary mixtures. Furthermore, we frame the search for the optimal training mixture as a bi-level optimization problem, which can be efficiently solved using Online Mirror Descent.

Empirically, our results validate the theoretical framework  across several key dimensions:
\begin{itemize}
\vspace{-1mm}
    \item \textbf{Superior Fitting Accuracy:} Our models fit the observed loss landscape with a lower Mean Relative Error (MRE) than existing empirical scaling laws.
\vspace{-1mm}
    \item \textbf{Optimal Mixture Prediction:} Our framework effectively identifies optimal training mixtures that yield the lowest test loss on the target average distribution.
    \vspace{-1mm}
\item \textbf{Cross-Scale Extrapolation:} Most importantly, our framework successfully extrapolates across scales, predicting highly effective mixtures for large, unseen model and dataset sizes using parameters fitted exclusively on smaller ones.
\vspace{-1mm}
    \item \textbf{Parameter Efficiency:} Crucially, we achieve these results while utilizing significantly fewer free parameters compared to leading empirical laws.

\end{itemize}

\section{Related Work}\label{relatedwork}

\paragraph{Data Mixture Selection.}
Optimizing the pre-training data mixture is critical for maximizing downstream model performance.  Data mixture can operate at various granularities, ranging from fine-grained token-level selection \citep{lin2024rho} to coarser domain-level mixture. Domain-level approaches are particularly advantageous due to their superior computational efficiency.  While early domain-level data mixture approaches relied on static heuristics or expensive trial-and-error, recent research has gravitated towards principled, compute-efficient strategies. These approaches generally fall into two paradigms: offline selection prior to training and online adaptation during training.
\begin{description}
	\item[Offline: Data Mixing Laws.] This line of work establishes empirical scaling laws for data mixtures to predict the loss landscape as a function of mixture ratios, from which the optimal mixture is derived. \citet{ye2024data} proposed an exponential-form data mixing law, which is extrapolated to larger model and data scales via the standard power-law scaling. \citet{ge2024bimix} and \citet{kang2024autoscale} introduced laws that jointly consider mixture ratios and data size (or training steps). Most recently, \citet{shukor2025scaling} formulated a unified law that explicitly incorporates model size, dataset size, and domain mixture ratios into a single scaling law. Moving beyond independent domain contributions, \citet{hamidieh2025domainaware} extended domain-aware scaling laws to explicitly model dataset interactions, quantifying both direct domain-to-benchmark effects and the second-order synergy or interference that arises when multiple data domains co-occur. \citet{medvedev2025shiftgoodmismatcheddata} analytically demonstrate that training on mismatched data proportions (positive distribution shift) almost always improves test performance. Other studies address specialized settings, including high-quality domain data \citep{gu2025data}, and data-constrained scenarios involving repeated tokens \citep{Muennighoff2023scaling}.
	\item[Offline: Proxy-Based Selection.] These methods determine optimal domain weights on small proxy models and transfer them to larger target models \citep{xie2023doremi, fan2023doge, liu2024regmix, zhang2025domain2vec, diao2025climb, wettig2025organize}. Recently, \citet{magnusson2025datadecide} investigated the efficacy of proxy mixtures when scaling to large models. 
	\item[Online Approaches:] In contrast to static offline selection, online methods adjust domain weights dynamically during the training process, including ODM~\citep{albalak2023efficient}, Skill-it~\citep{chen2023skillitdatadrivenskillsframework}, Aioli~\citep{chen2024aioli}, ADO~\citep{jiang2024adaptive}, and PiKE~\citep{li2025pike}. While these approaches can be effective, they often incur computational overhead and require additional validation loops during training.
\end{description}

\paragraph{Theoretical Foundations of Neural Scaling.}
While empirical research has established that neural network loss scales as a power law with respect to model size, data size, and compute \citep{kaplan2020scaling, hoffmann2022training}, deriving these exponents from first principles remains a central challenge. Two primary lines of research seek to explain the underlying mechanics of neural scaling laws:
\begin{itemize}
    \item \textbf{Linear Model Analysis:} One stream of research has sought to derive scaling laws through the analysis of linear models. Initial efforts focused on simplified environments, such as regression on fixed-dimension manifolds \citep{sharma2020neural}. \citet{maloney2022solvable} and \citet{bahri2021} expanded this setting to include generative data models and random feature models, demonstrating that power-law scaling arises in the dual limit of infinite data and infinite parameters. A more recent body of work focused on the training dynamics, specifically tracking one-pass Stochastic Gradient Descent within linear frameworks \citep{fonseca2024exactly, atanasov2024scaling, lin2024, bordelon2024dynamical, bordelon2025feature, li2025functional}. Using a solvable one-pass SGD model, \citet{paquette2024phases} characterized four compute-optimal scaling phases and three subphases governed by model capacity, optimizer noise, and feature embedding. \citet{bordelon2024dynamical} applied dynamic mean field theory to randomly projected linear models, recovering power laws in the asymptotic limit. \citet{lin2024} utilized a similar randomly projected linear model to reconcile neural scaling with traditional statistical theory, offering an explanation for why the classical variance error is unobservable when fitting the neural scaling law empirically. 

    \item \textbf{Skill Learning:}
Another line of work abstracts from complex training dynamics, instead viewing scaling laws as a consequence of ``skill learning'' \citep{michaud2023quantization,liu2025physics,arora2023theory,pan2025understanding}. \citet{michaud2023quantization} proposed the Quantization Model, which frames learning as the sequential acquisition of discrete ``quanta'' of skills distributed according to a power law, thereby recovering the observed power-form scaling. This framework was recently extended by \citet{liu2025physics} to three models with different levels of complexity.
\end{itemize}

\section{Preliminaries and Problem Description}

This section establishes the theoretical background and introduces the problem of data mixing. We begin by reviewing two primary theoretical frameworks that explain the standard neural scaling laws in the single-domain setting: the Quantization Model and the Linear Regression Model. Subsequently, we formally define the data mixing problem and survey existing empirical data mixing laws used to predict multi-domain loss.

\subsection{Theoretical Foundations of Standard Neural Scaling Laws}
\subsubsection{Quantization Model} \label{sec:pre_quant}
Standard neural scaling laws describe the predictable power-law relationship between model performance and scale. Empirically, the test loss $L$ is typically modeled as a function of both model size $N$ and dataset size $D$ \citep{kaplan2020scaling, hoffmann2022training}:
\begin{equation*}
    L(N, D) \approx \frac{A}{N^\alpha} + \frac{B}{D^\beta} + E.
\end{equation*} 
While these laws are well-established empirically, theoretical understanding of their origin remains an active area of research. Below, we review two major frameworks that attribute this power-law scaling to the intrinsic power-law structure of the data distribution.

\citet{michaud2023quantization} propose the \textit{Quantization Model}, which posits that the knowledge contained within large-scale training corpora decomposes into discrete knowledge units or skills termed ``quanta,'' $\mathcal{Q} = \{q_1, q_2, \dots\}$. These quanta are assumed to follow a Zipfian distribution where the $k$-th most frequent quantum has probability:
\begin{equation*}
    p(q_k) \propto k^{-\alpha}, \quad (\alpha > 1).
\end{equation*}
Crucially, the framework asserts that when trained on this dataset, a model learns skills in a deterministic order based on frequency to minimize the expected loss. Specifically, a model of effective capacity $N$ is assumed to learn the top $N$ most frequent quanta (i.e., $\{q_1, \dots, q_N\}$). Assuming that each unlearned quantum contributes a constant error $c$, the total expected loss is determined by the cumulative probability mass of the unlearned quanta in the tail:
\begin{equation*}
    L(N) = c \sum_{k=N+1}^\infty p(q_k) \approx c\int_{N}^{\infty} (\alpha- 1) k^{-\alpha} \, dk = c \cdot N^{-(\alpha-1)}.
\end{equation*}
As a result, the loss scaling exponent regarding model size $N$ is a direct consequence of the power-law distribution of skills inherent in the data.

\subsubsection{The Linear Regression Model.} \label{sec:pre_linear}
Despite its elegance, the Quantization Model is limited: it does not capture the stochasticity inherent in training dynamics.
Building on similar intuitions, \citep{bordelon2024dynamical, lin2024}  provide a rigorous derivation of scaling laws by analyzing the training dynamics of linear regression under one-pass Stochastic Gradient Descent (SGD). In this framework, neural scaling laws are governed by the spectral decay of the data. As training progresses, the model ``resolves'' eigenmodes in descending order of their eigenvalues---learning the dominant patterns first before fitting the fine-grained details.
Below, we adopt the formal framework from \cite{lin2024} to provide a simplified theoretical explanation.

    \textbf{Data Generation:} Consider a linear regression problem where the input covariates $\mathbf{x} \in \mathbb{R}^d$ (where $d$ can be infinite) are drawn from a distribution with zero mean and covariance matrix $\mathbf{H} = \mathbb{E}[\mathbf{x}\mathbf{x}^\top]$. The target label $y$ is generated by a linear teacher with additive noise:
    \begin{equation*}
        y = \langle \theta_*, \mathbf{x} \rangle + \epsilon,
    \end{equation*}
    where $\theta_*$ is the ground-truth parameter and $\epsilon \sim \mathcal{N}(0, \sigma^2)$ is independent Gaussian noise.

    \textbf{Spectral Assumptions:} When training a linear regression model with one-pass SGD, the learning dynamics are determined by the spectrum of the covariance matrix $\mathbf{H}  = \mathbb{E}[\mathbf{x}\mathbf{x}^\top]$. Intuitively, the eigenvalues $\lambda_k$ of $\mathbf{H}$ represent the variance (or signal strength) of the data along the $k$-th principal component.
    Empirical studies on natural data (e.g., images and text) consistently observe that these eigenvalues follow a power distribution \citep{field1987, bahri2021}. It is therefore assumed that the eigenvalues, sorted in descending order, follow a power law:
    \begin{equation*}
        \lambda_k \propto k^{-\alpha}, \quad \text{for } \alpha > 1.
    \end{equation*}

    \textbf{Finite Parameter Projection:} To model a neural network with a finite capacity of $N$ parameters, we project the high-dimensional input $\mathbf{x}$ into a lower-dimensional feature space using a ``sketching matrix'' $\mathbf{S} \in \mathbb{R}^{N \times d}$. The model learns a weight vector $\theta \in \mathbb{R}^N$ by minimizing the squared error on the projected features $\tilde{\mathbf{x}} = \mathbf{S}\mathbf{x}$:
    \begin{equation*}
        \widehat\theta = \operatorname*{arg\,min}_{\theta \in \mathbb{R}^N} \frac{1}{D} \sum_{i=1}^D (y_i - \theta^\top \mathbf{S} \mathbf{x}_i)^2.
    \end{equation*}

    \textbf{Result:} Under this setup, \cite{lin2024} derive the scaling law for the test loss $L(N,D)$ of the projected linear model $\widehat{\theta}$ (trained via one-pass SGD) as a function of the model size $N$ and number of training samples $D$:
    \begin{equation*}
        L(N, D) \approx \underbrace{O\left(\frac{1}{N^{a_1}}\right)}_{\text{Model Scaling}} + \underbrace{O\left(\frac{1}{D^{a_2}}\right)}_{\text{Data Scaling}} + E.
    \end{equation*}

\subsection{Empirical Data Mixing Laws}

In this section, we formally define the problem of data mixing and review prior empirical work on scaling laws in the multi-domain setting.

\paragraph{Problem setup.} 
Consider a set of $K$ data domains $\mathcal{D} = \{\mathcal{D}_1, \dots, \mathcal{D}_K\}$, where distinct domains represent different data sources such as GitHub, books, and Wikipedia. We construct a training data mixture by sampling from these domains according to a domain weight vector $h \in \Delta^{K-1}$, where $\Delta^{K-1}$ denotes the probability simplex
    $\Delta^{K-1} = \left\{ h \in \mathbb{R}^K \mid \sum_{i=1}^K h_i = 1, \ h_i \ge 0 \right\}$.
Let $\theta(h)$ denote the parameters of a model trained on this mixture with fixed model size $N$ and training token count $D$.

The primary objective of data mixing is to determine the optimal domain weights $h^*$ that minimize the trained model's loss on a specific target distribution defined by importance weights $w \in \Delta^{K-1}$, under a fixed compute budget parameterized by model size $N$ and training token count $D$. Because exhaustively training large models to find $h^*$ is computationally prohibitive, practitioners fit empirical scaling laws to predict the held-out test loss $L_i$ for each domain as a function of the mixing weights $h$ given the scale parameters $N$ and $D$:
\begin{equation*}
    L_i(h \mid N, D) \approx f_i(h \mid N, D).
\end{equation*}
Once the functional form $f_i$ is fitted, the optimal training mixture $h^*$ for a target distribution defined by importance weights $w \in \Delta^{K-1}$ is estimated by solving the following minimization problem:
\begin{equation*}
    h^* = \operatorname*{arg\,min}_{h \in \Delta^{K-1}} \sum_{i=1}^K w_i f_i(h \mid N, D).
\end{equation*}

\paragraph{Empirical data mixing laws.}
Recent research has proposed various functional forms to model the relationship between the mixing weights $h$ and the resulting domain losses. The standard methodology involves sampling a diverse set of mixture configurations from the simplex $\Delta^{K-1}$, training small-to-medium-scale models on these mixtures, and fitting the coefficients of $f_i$ to the observed test losses. Table~\ref{tab:mixing_laws} summarizes the key functional forms established in the literature.

\section{Theoretical Framework for Data Mixing}

In this work, we propose a unified theoretical framework to explain the underlying mechanics of data mixing. We extend two single-domain perspectives---the Quantization Model \citep{michaud2023quantization} and the Linear Regression Model \citep{lin2024}---to the multi-domain setting. We first introduce the Extended Quantization Model, which frames training as a capacity allocation problem. This serves as the foundation for the Extended Linear Regression Model, which implicitly solves this allocation problem while incorporating data-dependent noise.

\subsection{The ``Shared Head, Disjoint Tail'' Structure} \label{subsec:sharedhead}

Building on \citep{pan2025understanding}, we introduce a natural structural assumption regarding how information overlaps across different data domains (e.g., Code, Math, English). Intuitively, most domains share a common foundation of basic knowledge while diverging in specialized topics. We formalize this as the ``Shared Head, Disjoint Tail'' structure:
\begin{itemize}
    \item \textbf{Power-Law Distribution:} Within each domain $i$, knowledge units (skills) follow a power-law distribution in terms of frequency.
    \item \textbf{Shared Head:} Different domains largely overlap in the head of the distribution---the region of high-probability, fundamental skills (e.g., basic grammar, logic, or arithmetic).
    \item \textbf{Disjoint Tail:} As we move to the tail of the distribution (rare, specialized knowledge), the domains become increasingly distinct and independent.
\end{itemize}
While real-world data is unlikely to be strictly disjoint in the tail, this idealization serves as a useful approximation. Based on our modeling in the following sections, when the data size is fixed and only the mixture weights vary, the change in loss induced by overlapping skills is likely to be small compared to the change induced by disjoint skills. In other words, the loss induced by overlapping skills will behave more like a constant compared to disjoint skills. We discuss the robustness of this approximation in detail in subsequent sections. Furthermore, we empirically validate our model against violations of this assumption through a synthetic stress test (\Cref{fig:stress_test}), demonstrating that predictive accuracy remains stable even as tail overlap increases.

\begin{figure}[t]
    \centering
    % \linewidth 会自动撑满当前的单栏宽度
    \includegraphics[width=0.5\linewidth]{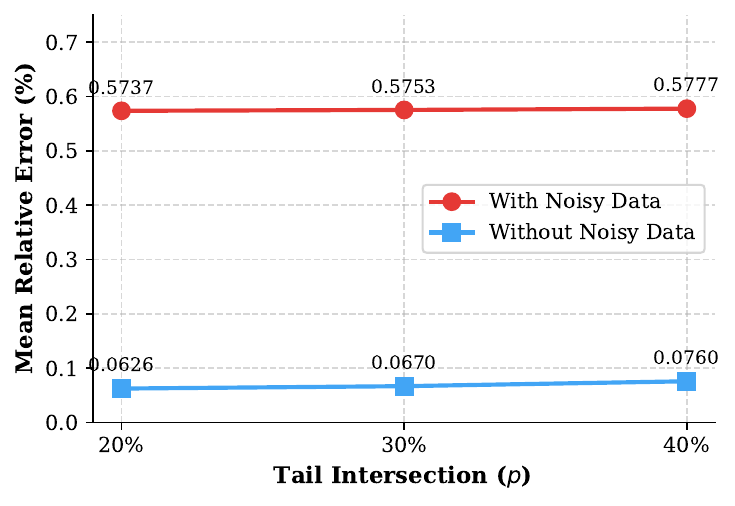}
    \caption{Fitting errors of our disjoint-tail theoretical model on synthetic data under varying degrees of actual tail overlap ($p$), where $p$ represents the proportion of tail skills randomly selected to be shared across different domains. Each skill $k$ is an overlapping skill with probability $p$. We evaluate the model under two experimental settings: a noiseless setting and a noisy setting where Gaussian noise drawn from $\mathcal{N}(0, 1)$ is added to the losses. The plot demonstrates that the Mean Relative Error (MRE) remains consistently low and stable even as tail overlap increases up to $40\%$, highlighting the robustness of our proposed model.}
    \label{fig:stress_test}
\end{figure}

\subsection{Extended Quantization Model} \label{sec:theory_quant}

We first extend the Quantization Model (\Cref{sec:pre_quant}) to a multi-domain formulation. This extension serves as a foundation for the Extended Linear Regression Model in the next section. 
 
\paragraph{Single-Domain Skill Distribution and Loss.} We associate each domain $\mathcal{D}_i$ with a continuous space of skills indexed by $k_i \in [1, \infty)$ with power-law density
\begin{equation*}
    p_i(k_i) = (\alpha_i - 1) k_i^{-\alpha_i}, \quad \text{for } \alpha_i > 1.
\end{equation*}
We consider a model with effective capacity $N$, representing the total volume of unique skills it can learn. Following the Quantization Model, we frame training as a \textbf{capacity allocation problem}. For each domain $i$, the model selects a coverage threshold $x_i \ge 1$,  learning all high-frequency skills up to this cutoff ($k_i \le x_i$) while discarding the tail. Assuming that each unlearned quantum in domain $i$ contributes a constant error $c_i$, the training loss incurred on domain $i$ is determined by the probability mass of the unlearned tail ($k_i > x_i$):
\begin{equation*}
  c_i \int_{x_i}^{\infty} (\alpha_i - 1) k_i^{-\alpha_i} \, dk_i = c_i x_i^{-(\alpha_i - 1)} = c_i x_i^{-b_i},
\end{equation*}
where $b_i = \alpha_i - 1$.

\paragraph{Multi-Domain Data Assumptions and Loss Minimization.} Under the ``Shared Head, Disjoint Tail'' assumption, we assume that the skill spaces are aligned such that the interval $k_i \in [1, H]$ corresponds to foundational knowledge shared across all domains (the ``base'' skills). We assume that the model capacity is sufficient to acquire all shared base skills. Consequently, the capacity allocation problem reduces to allocating the remaining model capacity to the disjoint tails. When trained on a data mixture $h = (h_1, \dots, h_K)$, the model determines the coverage thresholds $x_i$ to minimize the expected training loss:
\begin{equation}
\label{eq:quant_opt}
\begin{aligned}
    \min_{x} \quad & \sum_{i=1}^K h_i c_i x_i^{-b_i} \\
    \text{s.t.} \quad & \sum_{i=1}^K (x_i - H) \le N - H, \\
    & x_i \ge H, \quad \forall i.
\end{aligned}
\end{equation}
%Here, the constraint $\sum (x_i - H) \le N -H$ reflects that learning the unique, domain-specific skills (the ``tails'' beyond $H$) consumes the finite effective capacity $N$ after the shared baseline is accounted for.
Let $x^*(h) = (x^*_1(h), \dots, x^*_K(h))$ denote the optimal solution to the optimization problem in Eq.~\ref{eq:quant_opt}, which depends on the mixture weights $h$. Under this model, the expected test loss on domain $i$ is
\begin{equation*}
    L_i(h) = c_i (x^*_i(h))^{-b_i} + E_i,
\end{equation*}
where $E_i$ is an additional irreducible loss.

\paragraph{Domain Interaction: Capacity Competition.} Under this formulation, domain interaction is driven by model capacity competition. The constraint $\sum (x_i - H) \le N -H$ couples the domains together in a competition for model resources.  In particular, when the scaling exponents are similar ($b_i \approx \bar{b}$) and the model capacity is large ($x_i \gg H$), we can derive an approximate closed-form solution using Lagrange multipliers:
\begin{equation*}
    x^*_i(h) \approx \frac{\big(b_i c_i h_i N^{\bar{b} +1}\big)^{\frac{1}{b_i+1}}}{ \left( \sum_{k=1}^K (b_k c_k h_k)^{\frac{1}{b_k+1}} \right)^{\frac{\bar{b}+1}{b_i+1} }} .
\end{equation*}
The resulting approximate loss on domain $i$ is given by $L_i(h) = c_i (x^*_i(h))^{-b_i}$.
Crucially, this solution shows that the loss $L_i(h)$ on each domain depends on the ``aggregate demand'' of the mixture, represented by the denominator term $\sum (b_k c_k h_k)^{\frac{1}{b_k+1}}$. This term creates an explicit coupling: domain $i$'s loss is driven not just by its own properties, but by the weights and complexities of every competing domain (e.g., $h_j$ for $j \neq i$).

\paragraph{The Effect of Tail Overlap.} Now suppose the tails of different domains overlap. To minimize expected loss, the model learns the $N$ skills with the highest expected loss, denoted by $c(\text{skill}) \cdot p(\text{skill})$, where $p(\text{skill})$ is the aggregate probability density across all domains and $c(\text{skill})$ is the loss incurred by not learning the skill. So the status of a skill is binary: learned vs.\ unlearned. Consequently, when the mixture weights vary,  the loss fluctuation caused by overlapping skills is likely to be smaller than that of disjoint skills. This is because the aggregate probability density $p(\text{skill})$ of an overlapping skill will be relatively stable as the mixture shifts, making its status (learned vs.\ unlearned) less sensitive to weight changes.  In other words, the loss induced by overlapping skills will behave more like a constant compared to disjoint skills. 

\paragraph{Limitation.}
While this formulation naturally extends the Quantization Model, it has a critical limitation when predicting the optimal mixture. 
We formulate the search for the optimal mixture as a bi-level optimization problem. The outer objective minimizes the loss on a target distribution defined by importance weights $w = (w_1, \dots, w_K)$, while the inner optimization determines the resource allocation $x^*(h)$ given the training mixture $h$:
\begin{align*}
    h^* &= \arg\min_h \sum_{i=1}^K w_i L_i(h) \\
        &= \arg\min_h \sum_{i=1}^K w_i
        \left(c_i (x^*_i(h))^{-b_i} + E_i\right).
\end{align*}
Here, $x^*_i(h)$ is the solution to the capacity allocation problem in Eq.~\eqref{eq:quant_opt}, and $\sum_i w_i E_i$ is a constant and can thus be removed, so we have
\begin{equation} \label{eqn:opt_mixture_quan}
\begin{aligned}
    h^* &= \arg\min_h \sum_{i=1}^K w_i L_i(h) \\
        &= \arg\min_h \sum_{i=1}^K w_i c_i (x^*_i(h))^{-b_i}.
\end{aligned}
\end{equation}
Crucially, because the inner optimization (Eq.~\ref{eq:quant_opt}) minimizes a weighted sum of losses with respect to $h$, and the outer objective (Eq.~\ref{eqn:opt_mixture_quan}) minimizes a weighted sum of the same losses with respect to $w$, the bi-level problem is trivially solved by setting the training weights equal to the target weights $h^* \equiv w.$

This result contradicts empirical observations, where the optimal training mixture often deviates significantly from the target distribution. To resolve this discrepancy, we proceed to introduce the Extended Linear Regression Model.

\subsection{Extended Linear Regression Model}\label{sec:elrm}

To address the limitations of the Extended Quantization Model and incorporate training dynamics, we extend the linear regression framework of \citep{lin2024} to the multi-domain setting. This Extended Linear Regression Model can be viewed as an extension of the previous Extended Quantization Model as well: the training process implicitly solves the capacity allocation problem defined in~\eqref{eq:quant_opt}, while introducing an additional noise term.

\paragraph{Problem Formulation.}
Following~\Cref{sec:pre_linear}, we consider a linear regression problem over a union of $K$ domains. For each domain $i \in \{1, \dots, K\}$, input covariates $\mathbf{x}_i \in \mathbb{R}^d$ (where $d$ can be infinite) are drawn from a distribution with zero mean and a covariance matrix defined as $\mathbf{H}_i = \mathbb{E}_{\mathcal{P}_i}[\mathbf{x}_i \mathbf{x}_i^\top]$. The label $y_i$ is generated by a global linear teacher $y_i = \langle \theta^*, \mathbf{x}_i \rangle + \epsilon_i$, where $\theta^* \sim \mathcal{N}(0,\mathbf{I})$ is the ground-truth parameter and noise $\epsilon_i \sim \mathcal{N}(0,\sigma_i^2)$.
We consider a mixture distribution $\mathcal{P}(h)=\sum\limits_{i\in [K]}h_i \mathcal{P}_i$ defined by weights $h \in \Delta^{K-1}$  with covariance matrix $\mathbf{H}(h) = \sum_{i=1}^K h_i \mathbf{H}_i$.
%satisfying Assumption~\ref{assum:hyper} and $\mathbb{E}_{\mathbf{x} \sim \mathcal{P}(h)}[\mathbf{x}\mathbf{x}^{\top}]=\mathbf{H}(h)$.  
Following~\citep{li2025functional}, we model a neural network with $N$ parameters by projecting the high-dimensional input $\mathbf{x}$ into an $N$-dimensional feature space using a ``top-$N$ sketching matrix'' $\mathbf{S} \in \mathbb{R}^{N \times d}$ (detailed in~\Cref{app:theory_des}). The model learns a weight vector $\theta \in \mathbb{R}^N$ by minimizing the squared error on the projected features $\tilde{\mathbf{x}} = \mathbf{S}\mathbf{x}$.

\paragraph{Spectral Assumption: Shared Head and Disjoint Tails.}
We then formalize the ``Shared Head, Disjoint Tail'' assumption. In spectral analysis, an eigenvector represents a pattern of variation in the data (e.g., a specific texture in images or topic in text), while its corresponding eigenvalue quantifies the variance (or strength) of that pattern.
Intuitively, we assume real-world data consists of \textit{universal patterns} shared across all domains (the head) and \textit{specialized nuances} unique to each domain (the tails).
To make the analysis tractable, we assume that all covariance matrices $\{\mathbf{H}_i\}_{i=1}^K$ share a common orthonormal basis of eigenvectors $\mathbf{U} = [u_1, \dots, u_d]$ (i.e., they are simultaneously diagonalizable). In this basis, each matrix $\mathbf{H}_i$ is diagonal with eigenvalues denoted by $\lambda^{(i)}_k$. We classify these eigenvectors into two categories:

\begin{itemize}
    \item \textbf{Shared Head ($k \le H$):} The first $H$ eigenvectors $\{u_1, \dots, u_H\}$ represent universal components. We assume all domains possess non-zero variance along these directions ($\lambda^{(i)}_k > 0$ for all $i$), meaning these patterns are present in every domain.
    
    \item \textbf{Disjoint Tail ($k > H$):} The remaining eigenvectors represent domain-specific components. We assume each domain $i$ possesses a unique set of eigenvectors $\{u_k^{(i)}\}$ that are orthogonal to the specific components of other domains. Following~\Cref{sec:pre_linear}, we assume that the variance along these directions follows a domain-specific power law:
    \begin{equation*}
        u_k^{(i)\top} \mathbf{H}_j u_k^{(i)} = 
        \begin{cases}
            k^{-\alpha_i} & \text{if } j = i \quad  \\
            0 & \text{if } j \neq i \quad .
        \end{cases}
    \end{equation*}
    Consequently, for $k > H$, the spectral structures are completely decoupled: each domain $i$ has positive eigenvalues $\lambda^{(i)}_k = k^{-\alpha_i}$ along its own unique eigenvectors and zero along the eigenvectors of others.
\end{itemize}
Under this assumption, the mixture covariance matrix $\mathbf{H}(h) = \sum h_j \mathbf{H}_j$ exhibits a decoupled structure in the tail. Because the domain-specific eigenvectors do not overlap, the eigenvalue of a unique component from domain $i$ in the mixture is simply its original variance scaled by the domain's proportion $h_i$. 
Specifically, for a tail eigenvector $u_k^{(i)}$ belonging to domain $i$, the corresponding eigenvalue in the mixture is:
\begin{equation*}
    \lambda(\mathbf{H}(h), u_k^{(i)}) = \sum_{j=1}^K h_j \cdot u_k^{(i)\top} \mathbf{H}_j u_k^{(i)} = h_i k^{-\alpha_i}.
\end{equation*}

\paragraph{Connection with the Extended Quantization Model.}
Here, each eigenvector $u_k^{(i)}$ (a pattern of variation in the data) can be viewed as a skill to be learned, and its corresponding mixture eigenvalue $\lambda(\mathbf{H}(h),u_k^{(i)}) = h_i k^{-\alpha_i}$ (the variance of that pattern) can be viewed as the expected loss incurred if this skill is not learned (analogous to the $c(\text{skill}) \cdot p(\text{skill})$ term in our previous discussion). Therefore, our spectral assumption parallels the skill distribution assumption in the Extended Quantization Model. Furthermore, adjusting a domain's weight $h_i$ produces an equivalent effect in both frameworks: reducing $h_i$ linearly shrinks all mixture eigenvalues associated with domain $i$; similarly, in the Extended Quantization Model, reducing $h_i$ linearly shrinks the expected loss of skills in domain $i$ by shrinking $p(\text{skill})$.

\paragraph{Loss Analysis.}
Under the spectral assumptions established above, we analyze the expected test loss $L(h, N, D)$ of a projected linear model $\theta \in \mathbb{R}^N$ trained via one-pass SGD on a dataset of size $D$, sampled from a mixture distribution with weights $h \in \Delta^{K-1}$. 
The expected domain loss is governed by two primary factors. First, the training process implicitly solves the capacity allocation problem defined in the Extended Quantization Model~\eqref{eq:quant_opt}, distributing the capacity $N$ across domains to the most valuable skills; consequently, the unlearned skills in each domain contribute a loss similar to $L_i(h) = c_i (x^*_i(h))^{-b_i}$ in the Extended Quantization Model. Second, for each skill within the model's capacity, the loss will depend on the number of times it has been observed; in other words, the stochastic nature of one-pass SGD introduces a noise term to the domain test loss $L_i$, which is determined by the number of training samples drawn from that domain.

\begin{theorem}[Informal]
\label{thm:multidomain_loss}
Given data size $D$ and model size $N$, assume domains have mutually disjoint tails with eigenvalues $\propto k^{-\alpha_i}$ ($\alpha_i > 1$) and negligible shared head error. For any training mixture $h \in \Delta^{K-1}$, let $b_i = \alpha_i - 1$, $c_i=1/(\alpha_i-1)$, and let $x^*(h,N)$ be the optimal solution of the Extended Quantization Model \eqref{eq:quant_opt} defined by model capacity $N$, mixture weights $h$, parameters $\{b_i\}$ and $\{c_i\}$. 
For a projected linear model $\theta \in \mathbb{R}^N$ trained via one-pass SGD on $D$ samples drawn from a mixture $h$, its expected test loss on domain $i$, denoted $L_i(h \mid N, D)$, satisfies
\begin{equation} \label{eqn:mixture_obj}
    L_i(h \mid N, D) \approx c_i x_i^*(h,N)^{-b_i} + A_i (D h_i)^{-a_i} + E_i
\end{equation}
 where $a_i,A_i,E_i$ are constants that depend on $\alpha_i$.
\end{theorem}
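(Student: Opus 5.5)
The plan is to reduce the multi-domain problem to the single-domain bias--variance analysis of \citet{lin2024} applied to the mixture covariance $\mathbf{H}(h)$, and then to read off the per-domain test loss from the block structure.

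\textbf{Step 1: decomposition.} Because all $\mathbf{H}_j$ are simultaneously diagonalizable and the tails are disjoint, $\mathbf{H}(h)$ is diagonal in the basis $\mathbf{U}$. Its eigenvalues are the shared-head values $\sum_j h_j\lambda^{(j)}_k$ for $k\le H$, and $h_i k^{-\alpha_i}$ along each tail direction $u^{(i)}_k$. The test loss on domain $i$ is $\E\|\mathbf{H}_i^{1/2}(\mathbf{S}^\top\theta-\theta^*)\|^2+\sigma_i^2$. I will split it, as in \citet{lin2024}, into three parts:
\begin{itemize}
\item an \emph{approximation} part, from directions that the top-$N$ sketch does not span;
\item a \emph{bias} part, from finite SGD steps on spanned directions;
\item a \emph{variance} part, from label noise.
\end{itemize}
The shared-head contribution is absorbed into $E_i$ together with $\sigma_i^2$, by the ``negligible head error'' assumption.

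\textbf{Step 2: approximation term equals the quantization loss.} With the top-$N$ sketch of \citet{li2025functional}, the model spans the $N$ eigendirections of $\mathbf{H}(h)$ with the largest mixture eigenvalues. These are the $H$ head directions plus, in each domain, the tail indices $k\le x_i$ for which $h_ik^{-\alpha_i}$ exceeds a common threshold $\tau$. Here the $x_i$ satisfy $\sum_i(x_i-H)=N-H$.

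I will show this thresholding rule is exactly the KKT condition of \eqref{eq:quant_opt} with $b_i=\alpha_i-1$ and $c_i=1/(\alpha_i-1)$. The derivative of $h_ic_ix_i^{-b_i}$ is $-h_ix_i^{-\alpha_i}$, so equal marginal values across domains is equivalent to equal mixture eigenvalues at the cutoffs. Hence $x_i=x^*_i(h,N)$.

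The unspanned tail of domain $i$ then contributes $\sum_{k>x_i^*}\E[(\theta^*_k)^2]\lambda^{(i)}_k\approx\int_{x_i^*}^\infty k^{-\alpha_i}dk=c_i(x_i^*)^{-b_i}$, using $\theta^*\sim\mathcal{N}(0,\mathbf{I})$. The per-domain loss weighs by $\mathbf{H}_i$ rather than $\mathbf{H}(h)$, which is why no $h_i$ factor appears.

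\textbf{Step 3: SGD bias and variance give $A_i(Dh_i)^{-a_i}$.} On the spanned directions, one-pass SGD with step size $\gamma$ for $D$ steps contracts direction $k$ of domain $i$ by $(1-\gamma h_ik^{-\alpha_i})^{D}$. Its effective learning horizon is therefore governed by $\gamma Dh_ik^{-\alpha_i}$, which depends on $D$ and $h_i$ only through $Dh_i$.

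\begin{itemize}
\item \emph{Bias.} Summing $k^{-\alpha_i}(1-\gamma h_ik^{-\alpha_i})^{2D}$ over $k\le x_i^*$ gives roughly $\int_{(\gamma Dh_i)^{1/\alpha_i}}^{\infty}k^{-\alpha_i}dk\propto(Dh_i)^{-(\alpha_i-1)/\alpha_i}$. This is the standard effective-dimension calculation of \citet{lin2024}, done domain by domain.
\item \emph{Variance.} The noise contribution scales like (effective dimension)$/D$ restricted to domain $i$. It is dominated, or of the same $(Dh_i)^{-a_i}$ order, in the regime considered there.
\end{itemize}

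This yields $a_i=1-1/\alpha_i$, with $A_i$ depending on $\alpha_i$, $\gamma$ and $\sigma_i$.

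\textbf{Main obstacle.} The hardest part is the cross-domain coupling in the SGD analysis. The fourth-moment and variance operators of the mixture are not block-diagonal across domains in general. Also, the shared head and the other domains' noise $\sigma_j$ feed into domain $i$'s variance, and interactions appear near the cutoff, where the bias integral is truncated at $x_i^*$ rather than at $\infty$.

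I would first impose the standard \citet{lin2024} assumptions (a fourth-moment bound and a Gaussian-like design) to get upper and lower bounds matching up to constants. That would justify ``$\approx$'' as equality up to constant factors. The truncation and head effects would then be handled in the regime $(\gamma Dh_i)^{1/\alpha_i}\ll x_i^*$, with residual constants absorbed into $E_i$ and $A_i$.
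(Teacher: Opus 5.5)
Your skeleton is the paper's own. You use the same Approx/Bias/Var split and the same identification of the top-$N$ sketch with the common KKT threshold $h_i(x_i^*)^{-\alpha_i}=\lambda^*$; the paper adds that the integer cutoffs lie within $K$ of $x_i^*$. You reach the same exponent $a_i=1-1/\alpha_i$, and you work in the same regime: the paper's assumption $N^{\alpha_i}\ge D^{1+\varepsilon}$ is what enforces your $(\gamma Dh_i)^{1/\alpha_i}\ll x_i^*$.

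\textbf{First gap: the truncation cannot be absorbed into constants.} Your Step~3 sums the bias over $k\le x_i^*$ but evaluates it by an integral to $\infty$. The discrepancy $\int_{x_i^*}^\infty k^{-\alpha_i}e^{-2\gamma Dh_ik^{-\alpha_i}}\,dk$ equals $c_i(x_i^*)^{-b_i}(1+o(1))$ in your regime. It depends on $h$ through $x_i^*$, so it is not a constant. In fact, Approx plus the correctly truncated bias is exactly $\sum_k k^{-\alpha_i}e^{-2\gamma Dh_ik^{-\alpha_i}}+\sum_{k>x_i^*}k^{-\alpha_i}\bigl(1-e^{-2\gamma Dh_ik^{-\alpha_i}}\bigr)$. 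The first sum depends on $h$ only through $Dh_i$. The second is $O\bigl(\gamma Dh_i(x_i^*)^{1-2\alpha_i}\bigr)=o\bigl(c_i(x_i^*)^{-b_i}\bigr)$. So the Step~2 capacity term is cancelled to leading order: Steps 2 and 3 together double-count the unspanned tail. The paper does not absorb this piece. It bounds it by $\frac{1}{\alpha_\tau-1}(x_\tau^*)^{1-\alpha_\tau}$ inside the bias error, then uses $N^{\alpha_i}\ge D^{1+\varepsilon}$ to show it is an $o(1)$ relative error of the bias main term. Accordingly, the formal result is only termwise: $L_\tau\in c_\tau(x_\tau^*)^{-b_\tau}(1\pm\epsilon)+A(Dh_\tau)^{-a_\tau}(1\pm\epsilon)+\sigma_\tau^2$, with $E_\tau=\sigma_\tau^2$. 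In this regime the capacity term is no larger than the admissible error on the bias, and your statement must be phrased in that termwise sense too.

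\textbf{Second gap: precision, tied to the coupling you defer as the main obstacle.} Bounds matching only up to constant factors give no fixed $A_i$. In this regime they collapse to $L_i-E_i\asymp(Dh_i)^{-a_i}$, which loses the capacity term entirely. The paper obtains $(1\pm\epsilon)$ accuracy in three moves:
\begin{itemize}
\item It applies It\^o's formula, after an SDE approximation, to each $\E[w_k^2]$ in the common eigenbasis. The multiplicative-noise term is then a nonnegative quadratic form, so dropping it gives the lower bound with no Gaussian-design assumption.
\item For the upper bound, hypercontractivity in each domain together with $h_i\ge\lh$ bounds that term by $\frac{C_0}{\lh}\lambda_k\,\E[\w^\top\Hm\w]$. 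You need $h_i\ge\lh$ anyway, for $x_i^*>H$. All cross-domain coupling then passes through this single scalar training excess loss.
\item A crude a priori bound on that loss, fed back into the variance integral, makes its contribution lower order than $(Dh_\tau)^{-a_\tau}$.
\end{itemize}
This self-bounding step is the idea your plan lacks.

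\textbf{Smaller points.}
\begin{itemize}
\item With a constant step $\gamma$, the final-iterate variance plateaus at order $\gamma\sigma_i^2$ instead of decaying, so you need a decaying schedule. The paper uses cosine decay. Its endpoint bound $\gamma(T-s)\lesssim\eta_0(s/T)^{2/3}$ is where the restriction $\alpha_i<3$ enters.
\item A variance ``of the same order'' yields an $h$-independent $A_i$ only if its prefactor depends on $h$ solely through $Dh_i$. The paper's leading variance carries an extra factor $h_\tau$, which it handles by a separate assumption that the variance constant is negligible next to the bias constant.
\end{itemize}
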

We defer the formal theorem to \Cref{app:main_theorem}.

\paragraph{Optimal Mixture and Symmetry Breaking.} While capacity competition $x_i^*(h,N)^{-b_i}$ drives domain interaction, the noise term $A_i (D h_i)^{-a_i}$ depends solely on $h_i$. When finding the optimal training mixture $h^*$ for a target $w$:
\begin{equation} \label{eqn:opt_mixture_linear}
    h^* = \arg\min_{h} \sum_{i=1}^K w_i \big(c_i x_i^*(h,N)^{-b_i} + A_i (D h_i)^{-a_i} + E_i \big)
\end{equation}
the data-dependent noise term $A_i (D h_i)^{-a_i}$  breaks the symmetry with Eq.~\ref{eq:quant_opt}. Consequently, $h^*$ deviates from $w$, shifting weight toward domains that are ``harder to learn'' (larger $A_i$ and smaller $\alpha_i$). 

We solve this bi-level problem using Online Mirror Descent (OMD) based on the following characterization:

\begin{proposition}[Gradient Characterization]
\label{thm:bilevel_gradient}
Let $x^*(h)$ and $\lambda(h)$ be the optimal solution and Lagrange multiplier of the capacity allocation problem (Eq.~\ref{eq:quant_opt}). The gradient of the outer objective $\mathcal{J}(h) := \sum_{i=1}^K w_i L_i(h \mid N, D)$ with respect to $h_k$ is:
\begin{align*}
    \nabla_k \mathcal{J}(h)
    &= -w_k a_k A_k D^{-a_k} h_k^{-a_k-1} \\
    &\quad + \frac{\lambda x_k^*}{h_k(b_k+1)}
    \left(\bar{R}-\frac{w_k}{h_k}\right).
\end{align*}
where $\bar{R} = \frac{\sum_{j=1}^K \frac{x_j^*}{b_j+1} \left( \frac{w_j}{h_j} \right)}{\sum_{j=1}^K \frac{x_j^*}{b_j+1}}$.
\end{proposition}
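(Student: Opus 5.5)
The plan is to differentiate the outer objective term by term, using the KKT conditions of the capacity allocation problem (Eq.~\ref{eq:quant_opt}) and implicit differentiation for the dependence of $x^*(h)$ on $h$. Throughout I work in the regime where the characterization makes sense: $h$ lies in the interior of the simplex, every coverage threshold is strictly above the shared head ($x_i^*>H$, so the lower-bound constraints are inactive), and the capacity constraint is active with multiplier $\lambda>0$. The latter holds automatically because the objective $\sum_i h_i c_i x_i^{-b_i}$ is strictly decreasing in each $x_i$.

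Writing $\mathcal{J}(h)=\sum_i w_i\big(c_i x_i^*(h)^{-b_i}+A_i(Dh_i)^{-a_i}+E_i\big)$, the noise term depends on $h_k$ only through $i=k$. Its derivative is therefore $-w_k a_k A_k D^{-a_k}h_k^{-a_k-1}$, which is the first term of the claim. All remaining work concerns the capacity term $\sum_i w_i c_i x_i^{*-b_i}$.

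For the capacity term, the stationarity conditions read $h_i c_i b_i x_i^{-b_i-1}=\lambda$ for all $i$, together with $\sum_i x_i = N+(K-1)H$. The Jacobian of this system in $(x,\lambda)$ is nonsingular: the diagonal Hessian entries $h_ic_ib_i(b_i+1)x_i^{-b_i-2}$ are positive and the constraint gradient is nonzero. The implicit function theorem then makes $x^*(h)$ and $\lambda(h)$ smooth. Taking logarithms of stationarity gives $\log h_i+\log(c_ib_i)-(b_i+1)\log x_i=\log\lambda$. Differentiating in $h_k$, with $\mu:=\partial_{h_k}\log\lambda$, yields
\[
\partial_{h_k}x_i^*=\frac{x_i^*}{b_i+1}\Big(\frac{\delta_{ik}}{h_k}-\mu\Big).
\]
Since $\sum_i x_i^*$ is constant, we have $\sum_i\partial_{h_k}x_i^*=0$. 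This pins down $\mu = \frac{x_k^*}{(b_k+1)h_k S}$ with $S=\sum_j x_j^*/(b_j+1)$.

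Then, by the chain rule and stationarity, $w_ic_ib_ix_i^{-b_i-1}=\lambda w_i/h_i$, so
\[
\partial_{h_k}\sum_i w_ic_ix_i^{*-b_i}=-\lambda\sum_i\frac{w_i}{h_i}\,\partial_{h_k}x_i^*
=-\lambda\frac{w_k}{h_k}\frac{x_k^*}{(b_k+1)h_k}+\lambda\mu\sum_i\frac{w_i}{h_i}\frac{x_i^*}{b_i+1}.
\]
Substituting $\mu$ turns the last sum into $\frac{\lambda x_k^*}{(b_k+1)h_k}\bar R$, which gives exactly $\frac{\lambda x_k^*}{h_k(b_k+1)}\big(\bar R-\frac{w_k}{h_k}\big)$.

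The computation itself is routine. The main obstacle is justifying the regime: differentiability can fail where some $x_i^*$ hits the boundary $H$ or some $h_i\to0$, because the active set then changes. I would handle this by stating the result on the open set where the active set is as described. At points where a lower bound is active, the same argument restricted to the free coordinates gives a one-sided or generalized gradient, which suffices for Online Mirror Descent on the simplex.
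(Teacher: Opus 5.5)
Your proposal is correct and follows the paper's proof essentially step for step: you log-differentiate the stationarity condition $\lambda = h_i b_i c_i x_i^{-(b_i+1)}$, use $\sum_i \partial_{h_k} x_i^* = 0$ to solve for $\partial_{h_k}\log\lambda$, and substitute $c_i b_i x_i^{-b_i-1} = \lambda/h_i$ into the chain rule. Your implicit-function-theorem argument and your explicit restriction to interior $h$ with $x_i^*>H$ supply the differentiability justification that the paper's proof leaves implicit.
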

We defer the full algorithm to \Cref{app:algorithm}.

\paragraph{Tail Overlap.} 
    As established in \Cref{sec:theory_quant},  overlapping skills contribute less to the fluctuation of $c_i x_i^*(h,N)^{-b_i}$ compared to disjoint skills. Similarly, an overlapping skill's aggregate observation count $D \cdot p(\text{skill})$ is more stable against mixture shifts, so its contribution to the noise term $A_i (D h_i)^{-a_i}$ will be more stable as well. Thus, the overall loss fluctuation induced by overlapping skills remains small relative to disjoint skills.

\section{Experiments}
\label{sec:experiments}

In this section, we empirically validate our proposed theoretical framework. We focus on three primary objectives:
\begin{enumerate}
    \item \textbf{Predictive Accuracy:} We evaluate how well our theoretical model fits the observed loss landscape under various data mixtures compared to existing empirical baselines. We assess accuracy using the domain-averaged Mean Relative Error (MRE) and Mean Absolute Error (MAE).
    \item \textbf{Optimal Mixture Identification:} We leverage the fitted scaling laws to predict the optimal training mixture $h^*$ and evaluate the test loss of the resulting models on the target distribution.
   \item \textbf{Cross-Scale Extrapolation:} We test our model's ability to extrapolate to larger, unseen scales. Specifically, we fit our model parameters exclusively using small-scale test losses. We then substitute the target scale variables (e.g., model size $N$ and token budget $D$) into the fitted model to predict the optimal data mixture at the target scale. Finally, we empirically test this extrapolated mixture to validate its performance.
    
\end{enumerate}

\subsection{Experimental Setup}

\paragraph{Models and Data.} We first introduce the models and datasets used in our experiments. 
\begin{itemize}
\item\textbf{Fitting Accuracy.}	
To evaluate predictive accuracy, we reuse the experimental data from \citep{liu2024regmix}, which contain checkpoints for 64 1B-parameter models trained on different random mixtures. These 1B-parameter models were trained on 25B tokens spanning 17 domains from the training split of the Pile dataset~\citep{gao2020pile} (see Table 1 in \citep{liu2024regmix} for full specifications). Their held-out losses are evaluated on the validation split of the same Pile dataset.\footnote{For four domains where validation splits were unavailable in the original configuration~\citep{liu2024regmix}, we use the training loss as a proxy for the held-out loss.} The models follow the TinyLlama architecture~\citep{zhang2024tinyllama}, which uses the Llama 2 tokenizer and architecture, and were trained using a cosine schedule with a learning rate of $4 \times 10^{-4}$.
\item \textbf{Optimal Mixture.} 
To evaluate the optimal data mixtures predicted by various scaling laws, we conduct experiments across three distinct settings: (1) A 4-domain configuration (Wikipedia, GitHub, StackExchange, and PG-19) adapted from Section 6 of \citep{shukor2025scaling}. Here, we train 25 domain mixtures—sampled from Pile-CC subsets—on 200M-parameter LLaMA-style models \citep{brown2020language} for 8B tokens using a cosine learning rate schedule. (2) The primary experimental setup from Section 3 of \citep{shukor2025scaling}, which evaluates mixtures across varying model and token scales on the SlimPajama dataset with 7 domains using a constant learning rate. For this setting, we utilize the (122M parameters, 10B tokens) and (310M parameters, 20B tokens) configurations. (3) The aforementioned \textbf{17-domain} Pile setting \citep{liu2024regmix}, where we optimize the training mixture to minimize test loss on the target Pile-CC distribution.  
\item \textbf{Cross-Scale Extrapolation.} 
To evaluate the extrapolation capabilities of our theoretical model, we conduct experiments across two distinct settings: (1) Using our 4-domain configuration (Wikipedia, GitHub, StackExchange, and PG-19), we extrapolate from a 200M-parameter model to a 700M-parameter model trained on 16B tokens. (2) Using the primary experimental setup from Section 3 of \citep{shukor2025scaling} with the 7-domain SlimPajama dataset, we extrapolate from the (122M parameters, 10B tokens) configuration to a larger (1B parameters, 30B tokens) configuration.
\end{itemize}

\paragraph{Baselines and Metrics.}
We benchmark our model against four empirical scaling laws from prior work: the Additive Law~\citep{shukor2025scaling}, RegMix~\citep{liu2024regmix}, the Exponential Law~\citep{ye2024data}, and BiMix~\citep{ge2024bimix}. Fitting accuracy is evaluated via Mean Relative Error (MRE), while the quality of the predicted optimal mixtures is measured using the held-out metric reported for each setting (test loss or perplexity).

\subsection{Fitting Accuracy} \label{sec:fit_acc}

\paragraph{Procedure.} 
To evaluate each scaling law's ability to fit the loss landscape, we partition the 64 1B-parameter models, each trained on different mixtures, into a training set for fitting the scaling laws and a test set for assessing predictive accuracy. Goodness-of-fit is measured via the Mean Relative Error (MRE) between the predicted loss $\hat{L}$ and the ground truth loss $L$ on the held-out mixtures, averaged over all domains.  Formally, the MRE is defined as
$\text{MRE} = \frac{1}{| \mathcal{H}_{test} |\cdot K} \sum_{h \in \mathcal{H}_{test}} \sum_{i=1}^K \left| \frac{\hat{L}_i(h) - L_i(h)}{L_i(h)} \right|$,
where $\mathcal{H}_{test}$ is the set of held-out mixture weights.

\paragraph{Fitting Methods.} 
To reliably fit the parameters of each scaling law, we select fitting methods based on model complexity. For simple laws (power and exponential), such as those in~\citep{ge2024bimix,ye2024data}, we estimate parameters using \texttt{curve\_fit} from the \texttt{scipy} library with multiple random initializations. For RegMix~\citep{liu2024regmix}, we use LightGBM following the original implementation, setting the number of trees to $T=100$ and the number of leaves to $L=31$. For the Additive Law~\citep{shukor2025scaling}, our Extended Quantization Model (with parameters $c_i,b_i,E_i$), and our Extended Linear Regression Model (with parameters $c_i,b_i,A_i,a_i,E_i$), we adopt the non-convex optimization strategy from~\citep{shukor2025scaling}. This approach uses the basin-hopping algorithm from \texttt{scipy}, with L-BFGS as the inner routine and multiple random initializations. When running L-BFGS for our models, we obtain the function value by solving the convex program~\eqref{eq:quant_opt} and use Mean Squared Error (MSE) as the fitting objective.
Additionally, for laws containing terms such as $(D h_i)^{-a}$, we exclude observations with $h_i=0$ to prevent division by zero.

\paragraph{Results.}
As shown in Table~\ref{tab:mre_results}, our two proposed models achieve the lowest and second-lowest MRE (and MAE), with average MRE as low as $1.53\%$ and $2.06\%$. Notably, both models outperform the leading baseline~\citep{shukor2025scaling} while utilizing significantly fewer free parameters. This confirms that our theoretically grounded models---based on capacity allocation principles---capture the underlying mechanics of data mixing more accurately than purely heuristic curve-fitting. 

\begin{table}[ht]
    \centering
    \caption{Comparison of fitting accuracy on 64 1B-parameter models trained on $K=17$ domains from the Pile dataset. Our theoretically grounded models achieve the lowest error rates (MRE and MAE) while using significantly fewer free parameters than heuristic baselines.}
    \label{tab:mre_results}
    \begin{tabular}{l c  c c}
    \toprule
    \textbf{Method} & \textbf{MRE ($\%$)} $\downarrow$ & \textbf{MAE} $\downarrow$ & \textbf{\#Param} \\
    \midrule
    \multicolumn{4}{l}{\textit{Empirical Baselines}} \\
    Additive &  2.209 & 0.052 & $K(2K+1)$\\
    Exponential &  6.990 & 0.059 & $K(K+2)$\\
    BiMix  & 2.963 & 0.144 & $2K$\\
    RegMix  & 6.480 & 0.136 & $K^2$\\
    \midrule
    \multicolumn{4}{l}{\textit{Our Models}} \\
    Ours (Eq. \eqref{eq:quant_opt})& 2.064 & 0.051 & $3K$\\
    \textbf{Ours (Eq. 3)}  & \textbf{1.533} & \textbf{0.034} & $5K$ \\
    \bottomrule
    \end{tabular}
\end{table}

\subsection{Predicting the Optimal Data Mixture}\label{exp:4dom}

We then test the ability of our Extended Linear Regression Model to predict the optimal training mixture $h^*$ that minimizes the loss on a target distribution, which is typically chosen as the uniform distribution over the $K$ domains, where $w=(\frac{1}{K}, \dots, \frac{1}{K})$. 

\paragraph{Procedure.} 

We evaluate our Extended Linear Regression Model and the baselines across four distinct experiments (one from Setting (1), two from Setting (2), and one from Setting (3)). For each experiment, we fit every applicable baseline in Table~\ref{tab:mixing_laws} and our theoretical models; we also include the uniform mixture as a non-fitted baseline. Once the laws are fitted, we compute the optimal mixture $h^*$ by solving the following optimization problem: 
\begin{equation}
    h^* = \arg\min_{h \in \Delta^{K-1}} \sum_{i=1}^K w_i \hat{L}_i(h),
\end{equation}
where $\hat{L}_i$ is the loss predicted by the fitted law. To solve this, we utilize standard solvers from \texttt{cvxpy} for convex baselines (e.g., the Exponential Law), and the L-BFGS algorithm with analytical gradients for non-convex formulations (including our Extended Linear Regression Model and the Additive Law from~\citep{shukor2025scaling}). Finally, we train new models using the derived optimal mixtures $h^*$ and evaluate their performance across the four distinct experimental settings.
\begin{figure}[ht]
    \centering
    \includegraphics[width=0.95\linewidth]{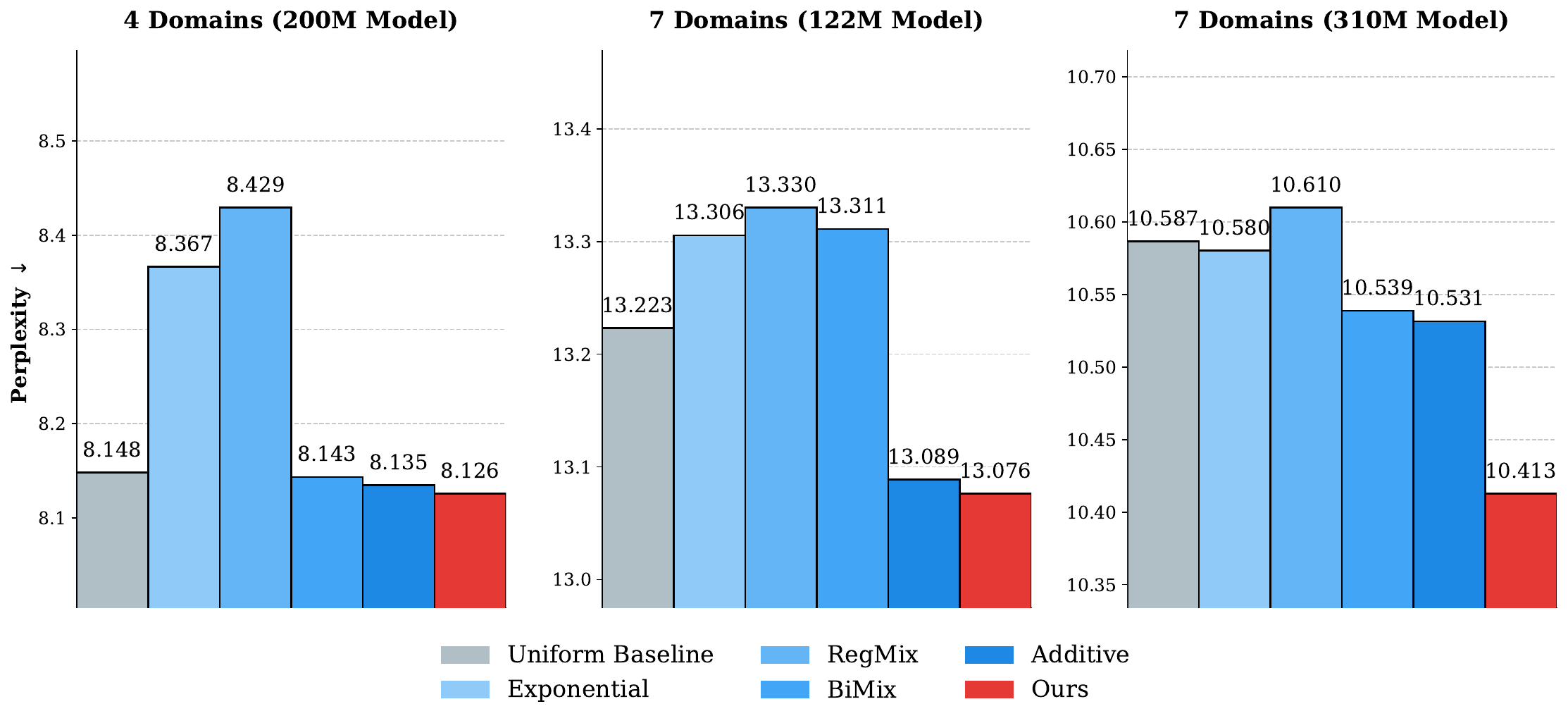}
    \caption{Performance comparison (test perplexity) of models trained with optimal mixtures predicted by different scaling laws. From left to right: 200M model on 4 domains, 122M model on 7 domains, and 310M model on 7 domains. Across all settings, our proposed method (highlighted in red) consistently achieves the lowest test perplexity compared to existing scaling laws, including the Exponential Law~\citep{ye2024data}, RegMix~\citep{liu2024regmix}, BiMix~\citep{ge2024bimix}, and the Additive Law~\citep{shukor2025scaling}.}
    \label{fig:loss_comparison}
\end{figure}
\paragraph{Results.} Figure~\ref{fig:loss_comparison} presents the final test perplexity evaluated on the target distribution for the 200M (left), 122M (middle), and 310M (right) models. For our fourth setting on the 17-domain Pile dataset, Figure~\ref{fig:mixture17} illustrates the test loss on the Pile-CC domain. %Here, we compare the optimal training mixture predicted by our fitted scaling law (from Table~\ref{tab:mre_results}) against the optimal mixture derived using the baseline law from~\citep{shukor2025scaling}, showing that our approach successfully achieves a lower loss. 
As observed, the mixture predicted by our Extended Linear Regression Model consistently yields the best held-out performance across all configurations. This demonstrates the superior accuracy and robustness of our approach in predicting optimal data mixtures compared to the uniform baseline and existing empirical laws. Crucially, we achieve this with significantly fewer free parameters.

\subsection{Extrapolation to Unseen Scales}\label{sec:extrapolation}

\paragraph{Procedure.} 
To evaluate our model's extrapolation capabilities, we compare the performance of models trained at larger, unseen target scales using three distinct data mixtures. We conduct this evaluation across two experimental settings adapted from \citep{shukor2025scaling}: a 4-domain configuration extrapolating from a small scale of 200M/8B (parameters/tokens) to a target scale of 700M/16B, and a 7-domain configuration extrapolating from 122M/10B to 1B/30B. The three evaluated mixtures are:

\begin{enumerate}
    \item \textbf{Our Static Mixture:} We first fit our Extended Linear Regression Model exclusively on the small-scale proxy regime. We then compute the optimal mixture for that same small scale by solving
    \begin{equation*}
    \begin{aligned}
    h^*_{\text{static}}
    &=\arg\min_{h\in\Delta^{K-1}}
    \sum_{i=1}^K w_i\widehat{L}_i(h\mid N,D).
    \end{aligned}
    \end{equation*}
    \item \textbf{Our Extrapolated Mixture:} Using the exact same parameters fitted exclusively on the small-scale proxy regime, we derive the optimal data mixture for the unseen target scale. We substitute $(N,D)$ with the target scales---$(700\text{M}, 16\text{B})$ for the 4-domain setting and $(1\text{B}, 30\text{B})$ for the 7-domain setting---and solve
    \begin{equation*}
    \begin{aligned}
    h^*_{\text{extrapolated}}
    &=\arg\min_{h\in\Delta^{K-1}}
    \sum_{i=1}^K w_i\widehat{L}_i(h\mid N,D).
    \end{aligned}
    \end{equation*}
    \item \textbf{Baseline Mixtures (Target Scale):} We compare our mixtures against the state-of-the-art Additive Law from \citep{shukor2025scaling}. For the 4-domain setting, because the optimal mixture predicted by the Additive Law is theoretically scale-invariant, we use the mixture predicted by fitting the 200M/8B test losses. For the 7-domain setting, we use the \textbf{full data optimal mixture} reported in Table 15 of \citep{shukor2025scaling}. This mixture acts as a strong baseline, as it was derived by fitting their Additive Law \textbf{across all available model and data sizes} ($N \in [412\text{M}, 1.4\text{B}]$, $D \in [4\text{B}, 46\text{B}]$)—crucially including the test losses from the target 1B/30B scale itself.
\end{enumerate}

\paragraph{Results.}
We train models from scratch at the target scales using the aforementioned mixtures and evaluate their final performance. 

For the 4-domain setting extrapolating to 700M/16B (Figure~\ref{fig:700average}), our extrapolated mixture achieves the lowest test loss. Notably, the extrapolated mixture specifically adjusted for the 700M/16B target scale strictly outperforms the optimal mixture derived for the smaller 200M/8B scale, confirming that our framework successfully predicts how the optimal data mixture shifts as model size increases.

This advantage extends to the 7-domain 1B/30B extrapolation task (Figure~\ref{fig:mixture_deviation}). Most importantly, our framework—fitted exclusively with 122M/10B proxy losses—achieves the same final test loss as the state-of-the-art empirical baseline. This result is highly significant because the baseline is exceptionally strong, having been fitted on a wide spectrum of model and data sizes that explicitly includes the target 1B/30B scale. Matching this benchmark demonstrates that our law accurately predicts mixture shifts and reaches state-of-the-art performance at an unseen large scale, relying entirely on small-scale proxy data.

\begin{figure}[H]
    \centering
    \includegraphics[width=0.65\textwidth]{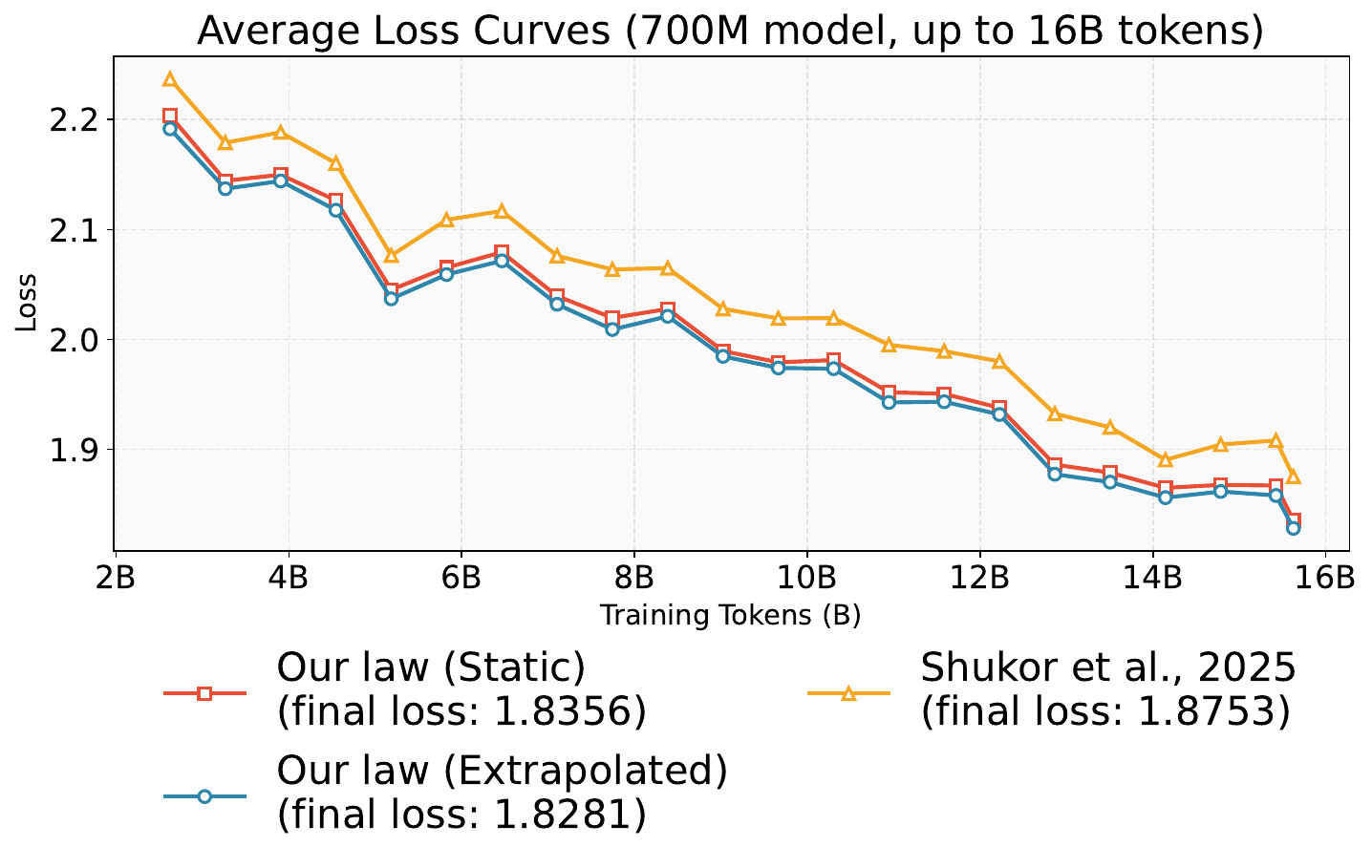} 
    \caption{\textbf{Test loss throughout training for the 4-domain 200M/8B to 700M/16B extrapolation setting.} Our extrapolated mixture for the target 700M/16B scale strictly outperforms both the static mixture derived for the 200M/8B scale and the mixture predicted by the Additive Law~\citep{shukor2025scaling}. This indicates that our framework correctly predicts the shift in the optimal data mixture as the scale increases.}
    \label{fig:700average}
\end{figure}

\begin{figure}[H]
    \centering
    \includegraphics[width=0.85\textwidth]{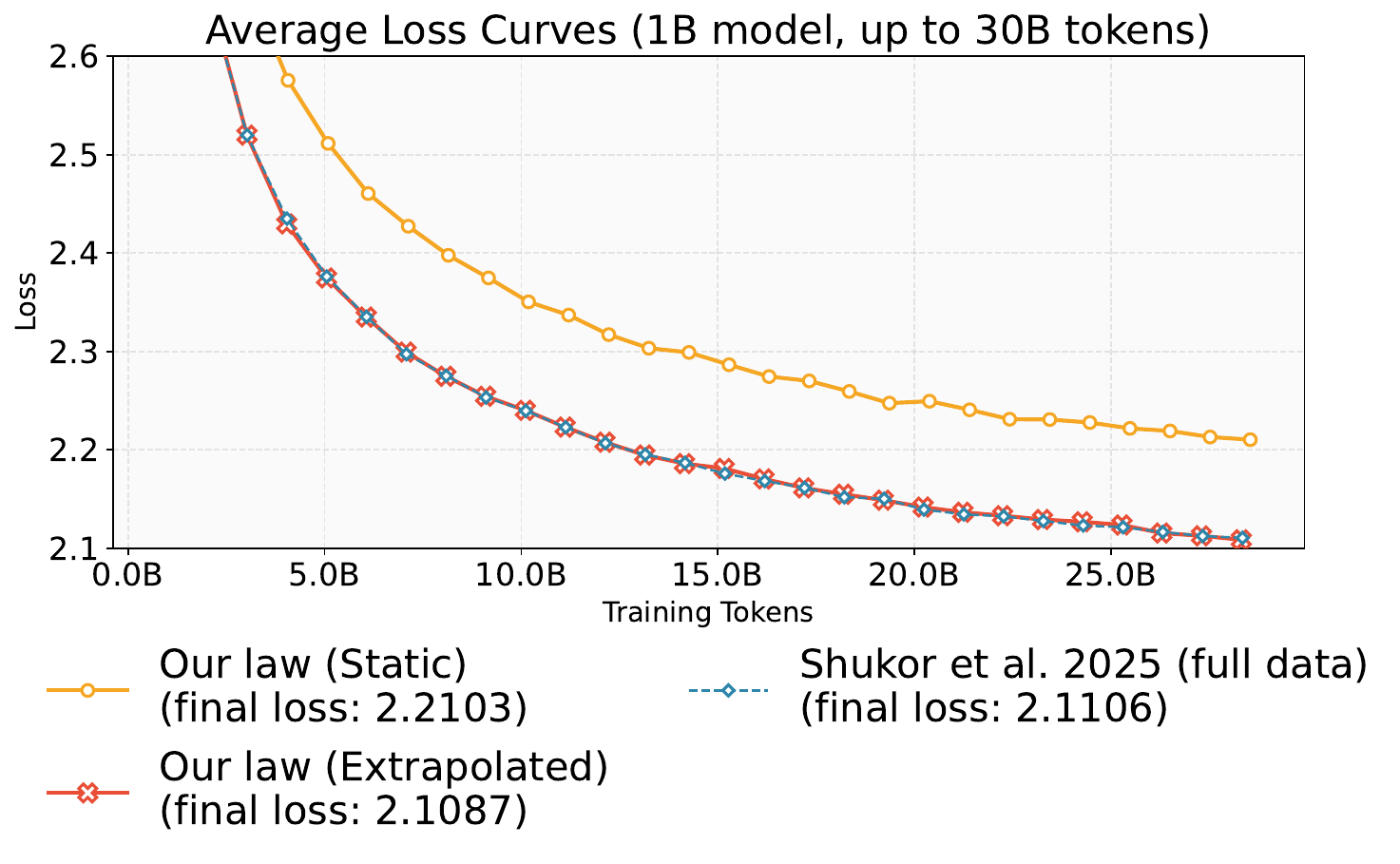}
    \caption{\textbf{Test loss throughout training for the 7-domain 122M/10B to 1B/30B extrapolation setting.} Our extrapolated mixture strictly outperforms the optimal static mixture derived for the 122M/10B scale, correctly capturing the scale-dependent shift in the optimal data mixture. Most importantly, our extrapolated law—fitted \textit{only} with 122M/10B losses—achieves the same test loss as the optimal mixture predicted by the state-of-the-art empirical law fitted with a massive range of model and data sizes (including the target 1B/30B scale itself). This demonstrates that our law not only accurately predicts mixture shifts, but also matches state-of-the-art performance at an unseen large scale using only small-scale proxy data.}
    \label{fig:mixture_deviation}
\end{figure}

\section{Conclusion and Future Work}
In this work, we proposed a theoretical model to explain data mixing scaling laws. Our framework accurately captures the loss landscape under various mixtures with low MRE and effectively identifies optimal mixtures. However, several promising directions remain for future exploration:

    \textbf{Unseen Domains and Downstream Tasks.} Our current framework primarily addresses loss prediction within the span of training domains. %However, practitioners are often concerned with generalization to unseen domains or performance on downstream tasks. 
    A valuable future direction is to extend this framework to predict the effects of different data mixtures on unseen domains or downstream tasks. 
    
    \textbf{Explicit Domain Overlap.} While the disjoint tail assumption appears effective in predicting the loss landscape, incorporating explicit modeling of information overlap may further refine predictions of the optimal mixture, particularly for highly correlated domains.
    
    \textbf{Reliable Fitting Algorithms.} Fitting our model parameters currently requires solving a non-convex optimization problem, which can be computationally intensive and sensitive to initialization. A future direction is to develop more reliable estimation techniques, such as convex relaxations or analytical approximations.

\section*{Acknowledgement}
We gratefully acknowledge Kaifeng Lyu and Xinran Gu for valuable discussions, Pierre Ablin and Xiaosen Zheng for their clarification of experimental details and data, and the ICML reviewers for their insightful feedback.

%\section*{Impact Statement}
%This work is primarily theoretical in nature. We do not foresee any specific ethical concerns or immediate negative societal consequences arising from this study.

% In the unusual situation where you want a paper to appear in the
% references without citing it in the main text, use \nocite

\newpage

\bibliography{arxiv_main}

@misc{korthikanti2022reducingactivationrecomputationlarge,
      title={Reducing Activation Recomputation in Large Transformer Models}, 
      author={Vijay Korthikanti and Jared Casper and Sangkug Lym and Lawrence McAfee and Michael Andersch and Mohammad Shoeybi and Bryan Catanzaro},
      year={2022},
      eprint={2205.05198},
      archivePrefix={arXiv},
      primaryClass={cs.LG},
      url={https://arxiv.org/abs/2205.05198}, 
}

@article{brown2020language,
  title={Language models are few-shot learners},
  author={Brown, Tom and Mann, Benjamin and Ryder, Nick and Subbiah, Melanie and Kaplan, Jared D and Dhariwal, Prafulla and Neelakantan, Arvind and Shyam, Pranav and Sastry, Girish and Askell, Amanda and others},
  journal={Advances in neural information processing systems},
  volume={33},
  pages={1877--1901},
  year={2020}
}

@article{gao2020pile,
  title={The pile: An 800gb dataset of diverse text for language modeling},
  author={Gao, Leo and Biderman, Stella and Black, Sid and Golding, Laurence and Hoppe, Travis and Foster, Charles and Phang, Jason and He, Horace and Thite, Anish and Nabeshima, Noa and others},
  journal={arXiv preprint arXiv:2101.00027},
  year={2020}
}

@article{zhang2024tinyllama,
  title={Tinyllama: An open-source small language model},
  author={Zhang, Peiyuan and Zeng, Guangtao and Wang, Tianduo and Lu, Wei},
  journal={arXiv preprint arXiv:2401.02385},
  year={2024}
}

@inproceedings{
pan2025understanding,
title={Understanding {LLM} Behaviors via Compression: Data Generation, Knowledge Acquisition and Scaling Laws},
author={Zhixuan Pan and Shaowen Wang and Liao Pengfei and Jian Li},
booktitle={The Thirty-ninth Annual Conference on Neural Information Processing Systems},
year={2026},
url={https://openreview.net/forum?id=853SwC2dMZ}
}

@article{arora2023theory,
  title={A theory for emergence of complex skills in language models},
  author={Arora, Sanjeev and Goyal, Anirudh},
  journal={arXiv preprint arXiv:2307.15936},
  year={2023}
}

@article{liu2025physics,
  title={Physics of skill learning},
  author={Liu, Ziming and Liu, Yizhou and Michaud, Eric J and Gore, Jeff and Tegmark, Max},
  journal={arXiv preprint arXiv:2501.12391},
  year={2025}
}

@inproceedings{
magnusson2025datadecide,
title={DataDecide: How to Predict Best Pretraining Data with Small Experiments},
author={Ian Magnusson and Nguyen Tai and Ben Bogin and David Heineman and Jena D. Hwang and Luca Soldaini and Akshita Bhagia and Jiacheng Liu and Dirk Groeneveld and Oyvind Tafjord and Noah A. Smith and Pang Wei Koh and Jesse Dodge},
booktitle={Forty-second International Conference on Machine Learning},
year={2025},
url={https://openreview.net/forum?id=p9YlQPF8fE}
}

@inproceedings{
zhang2025domain2vec,
title={Domain2Vec: Vectorizing Datasets to Find the Optimal Data Mixture without Training},
author={Mozhi Zhang and Howe Tissue and Lu Wang and Xipeng Qiu},
booktitle={Forty-second International Conference on Machine Learning},
year={2025},
url={https://openreview.net/forum?id=kJ5i29FejW}
}

@inproceedings{
diao2025climb,
title={Nemotron-{CLIMB}: Clustering-based Iterative Data Mixture Bootstrapping for Language Model Pre-training},
author={Shizhe Diao and Yu Yang and Yonggan Fu and Xin Dong and Dan SU and Markus Kliegl and ZIJIA CHEN and Peter Belcak and Yoshi Suhara and Hongxu Yin and Mostofa Patwary and Yingyan Celine Lin and Jan Kautz and Pavlo Molchanov},
booktitle={The Thirty-ninth Annual Conference on Neural Information Processing Systems Datasets and Benchmarks Track},
year={2026},
url={https://openreview.net/forum?id=aBlqKPkc4a}
}

@inproceedings{
liu2024regmix,
title={RegMix: Data Mixture as Regression for Language Model Pre-training},
author={Qian Liu and Xiaosen Zheng and Niklas Muennighoff and Guangtao Zeng and Longxu Dou and Tianyu Pang and Jing Jiang and Min Lin},
booktitle={The Thirteenth International Conference on Learning Representations},
year={2025},
url={https://openreview.net/forum?id=5BjQOUXq7i}
}

@inproceedings{
fan2023doge,
title={{DOGE}: Domain Reweighting with Generalization Estimation},
author={Simin Fan and Matteo Pagliardini and Martin Jaggi},
booktitle={Second Agent Learning in Open-Endedness Workshop},
year={2023},
url={https://openreview.net/forum?id=qiKqsqwYXm}
}

@article{xie2023doremi,
  title={Doremi: Optimizing data mixtures speeds up language model pretraining},
  author={Xie, Sang Michael and Pham, Hieu and Dong, Xuanyi and Du, Nan and Liu, Hanxiao and Lu, Yifeng and Liang, Percy S and Le, Quoc V and Ma, Tengyu and Yu, Adams Wei},
  journal={Advances in Neural Information Processing Systems},
  volume={36},
  pages={69798--69818},
  year={2023}
}

@inproceedings{
shukor2025scaling,
title={Scaling Laws for Optimal Data Mixtures},
author={Mustafa Shukor and Louis B{\'e}thune and Dan Busbridge and David Grangier and Enrico Fini and Alaaeldin El-Nouby and Pierre Ablin},
booktitle={The Thirty-ninth Annual Conference on Neural Information Processing Systems},
year={2026},
url={https://openreview.net/forum?id=vVU1KTOsju}
}

@inproceedings{
kang2024autoscale,
title={AutoScale: Scale-Aware Data Mixing for Pre-Training {LLM}s},
author={Feiyang Kang and Yifan Sun and Bingbing Wen and Si Chen and Dawn Song and Rafid Mahmood and Ruoxi Jia},
booktitle={Second Conference on Language Modeling},
year={2025},
url={https://openreview.net/forum?id=rujwIvjooA}
}

@misc{
ge2024bimix,
title={BiMix: Bivariate Data Mixing Law for Language Model Pretraining},
author={Ce Ge and Zhijian Ma and Daoyuan Chen and Yaliang Li and Bolin Ding},
year={2025},
url={https://openreview.net/forum?id=JsM46OZix7}
}

@inproceedings{
ye2024data,
title={Data Mixing Laws: Optimizing Data Mixtures by Predicting Language Modeling Performance},
author={Jiasheng Ye and Peiju Liu and Tianxiang Sun and Jun Zhan and Yunhua Zhou and Xipeng Qiu},
booktitle={The Thirteenth International Conference on Learning Representations},
year={2025},
url={https://openreview.net/forum?id=jjCB27TMK3}
}

@article{hoffmann2022training,
  title={Training compute-optimal large language models},
  author={Hoffmann, Jordan and Borgeaud, Sebastian and Mensch, Arthur and Buchatskaya, Elena and Cai, Trevor and Rutherford, Eliza and Casas, Diego de Las and Hendricks, Lisa Anne and Welbl, Johannes and Clark, Aidan and others},
  journal={arXiv preprint arXiv:2203.15556},
  year={2022}
}

@article{kaplan2020scaling,
  title={Scaling laws for neural language models},
  author={Kaplan, Jared and McCandlish, Sam and Henighan, Tom and Brown, Tom B and Chess, Benjamin and Child, Rewon and Gray, Scott and Radford, Alec and Wu, Jeffrey and Amodei, Dario},
  journal={arXiv preprint arXiv:2001.08361},
  year={2020}
}

@article{fonseca2024exactly,
  title={An exactly solvable model for emergence and scaling laws in the multitask sparse parity problem},
  author={Fonseca, Nayara and Lee, Seok Hyeong and Mingard, Chris and Louis, Ard and others},
  journal={Advances in Neural Information Processing Systems},
  volume={37},
  pages={39632--39693},
  year={2024}
}

@article{michaud2023quantization,
  title={The quantization model of neural scaling},
  author={Michaud, Eric and Liu, Ziming and Girit, Uzay and Tegmark, Max},
  journal={Advances in Neural Information Processing Systems},
  volume={36},
  pages={28699--28722},
  year={2023}
}

@article{atanasov2024scaling,
  title={Scaling and renormalization in high-dimensional regression},
  author={Atanasov, Alexander and Zavatone-Veth, Jacob A and Pehlevan, Cengiz},
  journal={arXiv preprint arXiv:2405.00592},
  year={2024}
}

@inproceedings{
li2025functional,
title={Functional Scaling Laws in Kernel Regression: Loss Dynamics and Learning Rate Schedules},
author={Binghui Li and Fengling Chen and Zixun Huang and Lean Wang and Lei Wu},
booktitle={The Thirty-ninth Annual Conference on Neural Information Processing Systems},
year={2026},
url={https://openreview.net/forum?id=dpllevHMbc}
}

@inproceedings{
bordelon2025feature,
title={How Feature Learning Can Improve Neural Scaling Laws},
author={Blake Bordelon and Alexander Atanasov and Cengiz Pehlevan},
booktitle={The Thirteenth International Conference on Learning Representations},
year={2025},
url={https://openreview.net/forum?id=dEypApI1MZ}
}

@article{sharma2020neural,
  title={Scaling Laws from the Data Manifold Dimension},
  author={Utkarsh Sharma},
  journal={J. Mach. Learn. Res.},
  year={2022},
  volume={23},
  pages={9:1-9:34},
  url={https://api.semanticscholar.org/CorpusID:246559072}
}

@article{maloney2022solvable,
  title={A solvable model of neural scaling laws},
  author={Maloney, Alexander and Roberts, Daniel A and Sully, James},
  journal={arXiv preprint arXiv:2210.16859},
  year={2022}
}

@inproceedings{bordelon2024dynamical,
  title={A Dynamical Model of Neural Scaling Laws},
  author={Bordelon, Blake and Atanasov, Alexander and Pehlevan, Cengiz},
  booktitle={International Conference on Machine Learning},
  pages={4345--4382},
  year={2024},
  organization={PMLR}
}

@inproceedings{
wettig2025organize,
title={Organize the Web: Constructing Domains Enhances Pre-Training Data Curation},
author={Alexander Wettig and Kyle Lo and Sewon Min and Hannaneh Hajishirzi and Danqi Chen and Luca Soldaini},
booktitle={Forty-second International Conference on Machine Learning},
year={2025},
url={https://openreview.net/forum?id=boSqwdvJVC}
}

@inproceedings{
lin2024rho,
title={Not All Tokens Are What You Need for Pretraining},
author={Zhenghao Lin and Zhibin Gou and Yeyun Gong and Xiao Liu and yelong shen and Ruochen Xu and Chen Lin and Yujiu Yang and Jian Jiao and Nan Duan and Weizhu Chen},
booktitle={The Thirty-eighth Annual Conference on Neural Information Processing Systems},
year={2024},
url={https://openreview.net/forum?id=0NMzBwqaAJ}
}

@inproceedings{
li2025pike,
title={Pi{KE}: Adaptive Data Mixing for Large-Scale Multi-Task Learning Under Low Gradient Conflicts},
author={Zeman Li and Yuan Deng and Peilin Zhong and Meisam Razaviyayn and Vahab Mirrokni},
booktitle={The Thirty-ninth Annual Conference on Neural Information Processing Systems},
year={2026},
url={https://openreview.net/forum?id=xNJenVNmzL}
}

@inproceedings{
jiang2024adaptive,
title={Adaptive Data Optimization: Dynamic Sample Selection with Scaling Laws},
author={Yiding Jiang and Allan Zhou and Zhili Feng and Sadhika Malladi and J Zico Kolter},
booktitle={The Thirteenth International Conference on Learning Representations},
year={2025},
url={https://openreview.net/forum?id=aqok1UX7Z1}
}

@inproceedings{
chen2024aioli,
title={Aioli: A Unified Optimization Framework for Language Model Data Mixing},
author={Mayee F Chen and Michael Y. Hu and Nicholas Lourie and Kyunghyun Cho and Christopher Re},
booktitle={The Thirteenth International Conference on Learning Representations},
year={2025},
url={https://openreview.net/forum?id=sZGZJhaNSe}
}

@inproceedings{
chen2023skillitdatadrivenskillsframework,
title={Skill-it! A data-driven skills framework for understanding and training language models},
author={Mayee F Chen and Nicholas Roberts and Kush Bhatia and Jue WANG and Ce Zhang and Frederic Sala and Christopher Re},
booktitle={Thirty-seventh Conference on Neural Information Processing Systems},
year={2023},
url={https://openreview.net/forum?id=IoizwO1NLf}
}

@inproceedings{
loshchilov2017sgdrstochasticgradientdescent,
title={{SGDR}: Stochastic Gradient Descent with Warm Restarts},
author={Ilya Loshchilov and Frank Hutter},
booktitle={International Conference on Learning Representations},
year={2017},
url={https://openreview.net/forum?id=Skq89Scxx}
}

@inproceedings{
qiu2025scalingcollapserevealsuniversal,
title={Scaling Collapse Reveals Universal Dynamics in Compute-Optimally Trained Neural Networks},
author={Shikai Qiu and Lechao Xiao and Andrew Gordon Wilson and Jeffrey Pennington and Atish Agarwala},
booktitle={Forty-second International Conference on Machine Learning},
year={2025},
url={https://openreview.net/forum?id=Fvq9ogLnLN}
}

@article{albalak2023efficient,
  title={Efficient online data mixing for language model pre-training},
  author={Albalak, Alon and Pan, Liangming and Raffel, Colin and Wang, William Yang},
  journal={arXiv preprint arXiv:2312.02406},
  year={2023}
}

@inproceedings{
gu2025data,
title={Data Mixing Can Induce Phase Transitions in Knowledge Acquisition},
author={Xinran Gu and Kaifeng Lyu and Jiazheng Li and Jingzhao Zhang},
booktitle={ICLR 2025 Workshop on Navigating and Addressing Data Problems for Foundation Models},
year={2025},
url={https://openreview.net/forum?id=ZKA4yiGdrA}
}

@inproceedings{Muennighoff2023scaling,
author = {Muennighoff, Niklas and Rush, Alexander M. and Barak, Boaz and Le Scao, Teven and Piktus, Aleksandra and Tazi, Nouamane and Pyysalo, Sampo and Wolf, Thomas and Raffel, Colin},
title = {Scaling data-constrained language models},
year = {2023},
publisher = {Curran Associates Inc.},
address = {Red Hook, NY, USA},
booktitle = {Proceedings of the 37th International Conference on Neural Information Processing Systems},
articleno = {2191},
numpages = {19},
location = {New Orleans, LA, USA},
series = {NIPS '23}
}

@inproceedings{lin2024,
author = {Lin, Licong and Wu, Jingfeng and Kakade, Sham M. and Bartlett, Peter L. and Lee, Jason D.},
title = {Scaling laws in linear regression: compute, parameters, and data},
year = {2024},
isbn = {9798331314385},
publisher = {Curran Associates Inc.},
address = {Red Hook, NY, USA},
booktitle = {Proceedings of the 38th International Conference on Neural Information Processing Systems},
articleno = {1937},
numpages = {51},
location = {Vancouver, BC, Canada},
series = {NIPS 24}
}

@article{field1987,
	Author = {David J. Field},
	Doi = {10.1364/JOSAA.4.002379},
	Journal = {J. Opt. Soc. Am. A},
	Month = {Dec},
	Number = {12},
	Pages = {2379--2394},
	Publisher = {Optica Publishing Group},
	Title = {Relations between the statistics of natural images and the response properties of cortical cells},
	Url = {https://opg.optica.org/josaa/abstract.cfm?URI=josaa-4-12-2379},
	Volume = {4},
	Year = {1987}}

@article{bahri2021,
	Author = {Yasaman Bahri and Ethan Dyer and Jared Kaplan and Jaehoon Lee and Utkarsh Sharma},
	Doi = {10.1073/pnas.2311878121},
	Eprint = {https://www.pnas.org/doi/pdf/10.1073/pnas.2311878121},
	Journal = {Proceedings of the National Academy of Sciences},
	Number = {27},
	Pages = {e2311878121},
	Title = {Explaining neural scaling laws},
	Url = {https://www.pnas.org/doi/abs/10.1073/pnas.2311878121},
	Volume = {121},
	Year = {2024}}

@inproceedings{
hamidieh2025domainaware,
title={Domain-Aware Scaling Laws Uncover Data Synergy},
author={Kimia Hamidieh and Lester Mackey and David Alvarez-Melis},
booktitle={NeurIPS 2025 Workshop on Evaluating the Evolving LLM Lifecycle: Benchmarks, Emergent Abilities, and Scaling},
year={2025},
url={https://openreview.net/forum?id=FndNAs9s0d}
}

@misc{medvedev2025shiftgoodmismatcheddata,
      title={Shift is Good: Mismatched Data Mixing Improves Test Performance}, 
      author={Marko Medvedev and Kaifeng Lyu and Zhiyuan Li and Nathan Srebro},
      year={2025},
      eprint={2510.25108},
      archivePrefix={arXiv},
      primaryClass={cs.LG},
      url={https://arxiv.org/abs/2510.25108}, 
}

@misc{paquette2024phases,
      title={{4+3} Phases of Compute-Optimal Neural Scaling Laws},
      author={Elliot Paquette and Courtney Paquette and Lechao Xiao and Jeffrey Pennington},
      year={2024},
      eprint={2405.15074},
      archivePrefix={arXiv},
      primaryClass={stat.ML},
      doi={10.48550/arXiv.2405.15074},
      url={https://arxiv.org/abs/2405.15074}
}
\bibliographystyle{icml2026}

%%%%%%%%%%%%%%%%%%%%%%%%%%%%%%%%%%%%%%%%%%%%%%%%%%%%%%%%%%%%%%%%%%%%%%%%%%%%%%%
%%%%%%%%%%%%%%%%%%%%%%%%%%%%%%%%%%%%%%%%%%%%%%%%%%%%%%%%%%%%%%%%%%%%%%%%%%%%%%%
% APPENDIX
%%%%%%%%%%%%%%%%%%%%%%%%%%%%%%%%%%%%%%%%%%%%%%%%%%%%%%%%%%%%%%%%%%%%%%%%%%%%%%%
%%%%%%%%%%%%%%%%%%%%%%%%%%%%%%%%%%%%%%%%%%%%%%%%%%%%%%%%%%%%%%%%%%%%%%%%%%%%%%%
\newpage
\appendix
\onecolumn
\section{Empirical Data Mixing Laws}
As detailed in Table~\ref{tab:mixing_laws}, these baselines offer distinct functional forms for predicting the domain loss $L_i(h)$ given the data mixture weights $h$. Specifically, we consider: (1) the \textbf{Additive Law}~\citep{shukor2025scaling}, which models the loss using an inverse polynomial combination of mixture weights alongside model and data scale parameters ($N$ and $D$); (2) the \textbf{Exponential Law}~\citep{ye2024data}, which assumes the loss decays exponentially based on a linear combination of the mixture weights; (3) \textbf{BiMix}~\citep{ge2024bimix}, which factors the scaling effects of the target domain's weight against the total training tokens; and (4) \textbf{RegMix}~\citep{liu2024regmix}, which employs a simple yet effective linear regression directly over the mixture proportions.

\begin{table}[!h]
    \centering
    \caption{Comparison of empirical functional forms for predicting domain loss $L_i(h)$ based on mixture weights $h$. $N$ and $D$ represent model parameters and training tokens, respectively.}
    \label{tab:mixing_laws}
    \renewcommand{\arraystretch}{1.8} 
    \begin{tabular}{p{0.3\textwidth} p{0.6\textwidth}}
        \toprule
        \textbf{Law Name} & \textbf{Functional Form ($f_i(h, N, D)$)} \\
        \midrule
        \textbf{Additive} \newline \citep{shukor2025scaling} & 
        $L_i \approx E_i + \left(\sum_{j=1}^K C_{ij} h_j^{\gamma_{ij}}\right)^{-1} + \frac{A}{N^\alpha} + \frac{B}{D^\beta}$ \\
        \midrule
        \textbf{Exponential} \newline \citep{ye2024data} & 
        $L_i \approx c_i + k_i \exp\left(\sum_{j=1}^K t_{ij} h_j\right)$ \\
        \midrule
        \textbf{BiMix} \newline \citep{ge2024bimix} & 
        $L_i \approx (\frac{B}{D^\beta} + E)\frac{C}{h_i^\gamma}$ \\
        \midrule
        \textbf{RegMix} \newline \citep{liu2024regmix} & $L_i \approx w_0+\sum_{j=1}^K w_j h_j$ \\

        \bottomrule
    \end{tabular}
\end{table}

\section{Algorithm} \label{app:algorithm}
We propose Algorithm~\ref{alg:omd} with exponentiated-gradient updates to find the optimal mixture $h^*$.

\begin{algorithm}[!h]
   \caption{Bi-Level Mixture Optimization via OMD}
   \label{alg:omd}
\begin{algorithmic}[1]
   \STATE \textbf{Input:} Problem parameters, target weights $w$, constraints $N,D$, and step size $\eta$.
   \STATE \textbf{Initialize:} $h^{(0)} \leftarrow [1/K, \dots, 1/K]$, $t \leftarrow 0$.
   
   \WHILE{not converged}
      \STATE \textit{// 1. Inner Level: Capacity Response}
      \STATE Solve the inner optimization (Eq.~\ref{eq:quant_opt}) given $h^{(t)}$ to obtain optimal allocation $x^*$ and multiplier $\lambda$.
      
      \STATE \textit{// 2. Outer Level: Gradient Calculation}
      \STATE Compute the gradient vector $\nabla \mathcal{J}(h^{(t)})$ according to the closed-form solution in \textbf{Proposition~\ref{thm:bilevel_gradient}}.
      
      \STATE \textit{// 3. Optimization: Mirror Descent Step}
      \STATE Update weights (multiplicative): $\tilde{h} \leftarrow h^{(t)} \odot \exp\left(-\eta \nabla \mathcal{J}(h^{(t)})\right)$.
      \STATE Normalize: $h^{(t+1)} \leftarrow \tilde{h} / \|\tilde{h}\|_1$.
      \STATE $t \leftarrow t + 1$.
   \ENDWHILE
   
   \STATE \textbf{Return:} Optimal mixture $h^* \leftarrow h^{(t)}$.
\end{algorithmic}
\end{algorithm}

\section{Implementation Details}

For the experiments in Section~\ref{exp:4dom}, we implement our pretraining pipeline using the Megatron-LM framework in~\cite{korthikanti2022reducingactivationrecomputationlarge} on a cluster of 8 NVIDIA H200 GPUs. To investigate the scaling laws with respect to data mixture ratios, we construct two GPT-style transformer models at different scales: a 200M-parameter model and a 700M-parameter model. 

\paragraph{Model Architecture.}
Both models follow the standard decoder-only transformer architecture. 
The 200M model consists of 24 layers, a hidden size of 768, and 12 attention heads. 
The 700M model scales the hidden size to 1536 while maintaining 24 layers and 12 heads. 
We use a sequence length of 1024 and employ FlashAttention to improve training efficiency.

\paragraph{Training Setup.}
We train our models using the AdamW optimizer with $\beta_1=0.9$, $\beta_2=0.95$, and a weight decay of $0.1$. 
We use a cosine learning-rate schedule with a peak learning rate of $3.5 \times 10^{-4}$, a minimum learning rate of $3.5 \times 10^{-5}$, and a 1\% warmup phase. 
The global batch size is fixed at 128, with a micro-batch size of 16 and gradient accumulation. 
Training is performed in BF16 precision. We train the 200M model on 8B tokens and the 700M model on 16B tokens.

\paragraph{Data Configuration.}
Our training corpus is composed of four domains: GitHub, PG-19 (Project Gutenberg), StackExchange, and Wikipedia. 
Since our primary focus is on data mixing laws, the mixture weights (ratios) of these domains vary across different experimental runs. 
Detailed hyperparameters for the model architecture and optimization are summarized in Table~\ref{tab:hyperparams}.

\begin{table}[!h]
    \centering
    \caption{Hyperparameters for the 200M and 700M models used in our data mixing experiments.}
    \label{tab:hyperparams}
    \begin{tabular}{l|cc}
    \toprule
    \textbf{Hyperparameter} & \textbf{200M Model} & \textbf{700M Model} \\
    \midrule
    \textit{Architecture} & & \\
    \quad Layers ($L$) & 24 & 24 \\
    \quad Hidden Size ($d_{\text{model}}$) & 768 & 1536 \\
    \quad Attention Heads & 12 & 12 \\
    \quad Sequence Length & 1024 & 1024 \\
    \midrule
    \textit{Optimization} & & \\
    \quad Global Batch Size & 128 & 128 \\
    \quad Micro-Batch Size & 16 & 16 \\
    \quad Learning Rate & $3.5 \times 10^{-4}$ & $3.5 \times 10^{-4}$ \\
    \quad Minimum Learning Rate & $3.5 \times 10^{-5}$ & $3.5 \times 10^{-5}$ \\
    \quad LR Schedule & Cosine & Cosine \\
    \quad Warmup Ratio & 0.01 & 0.01 \\
    \quad Optimizer & AdamW & AdamW \\
    \quad Weight Decay & 0.1 & 0.1 \\
    \quad Precision & BF16 & BF16 \\
    \quad Training Tokens & 8B & 16B \\
    \bottomrule
    \end{tabular}
\end{table}

\begin{figure}[ht]
    \centering
    \includegraphics[width=\linewidth]{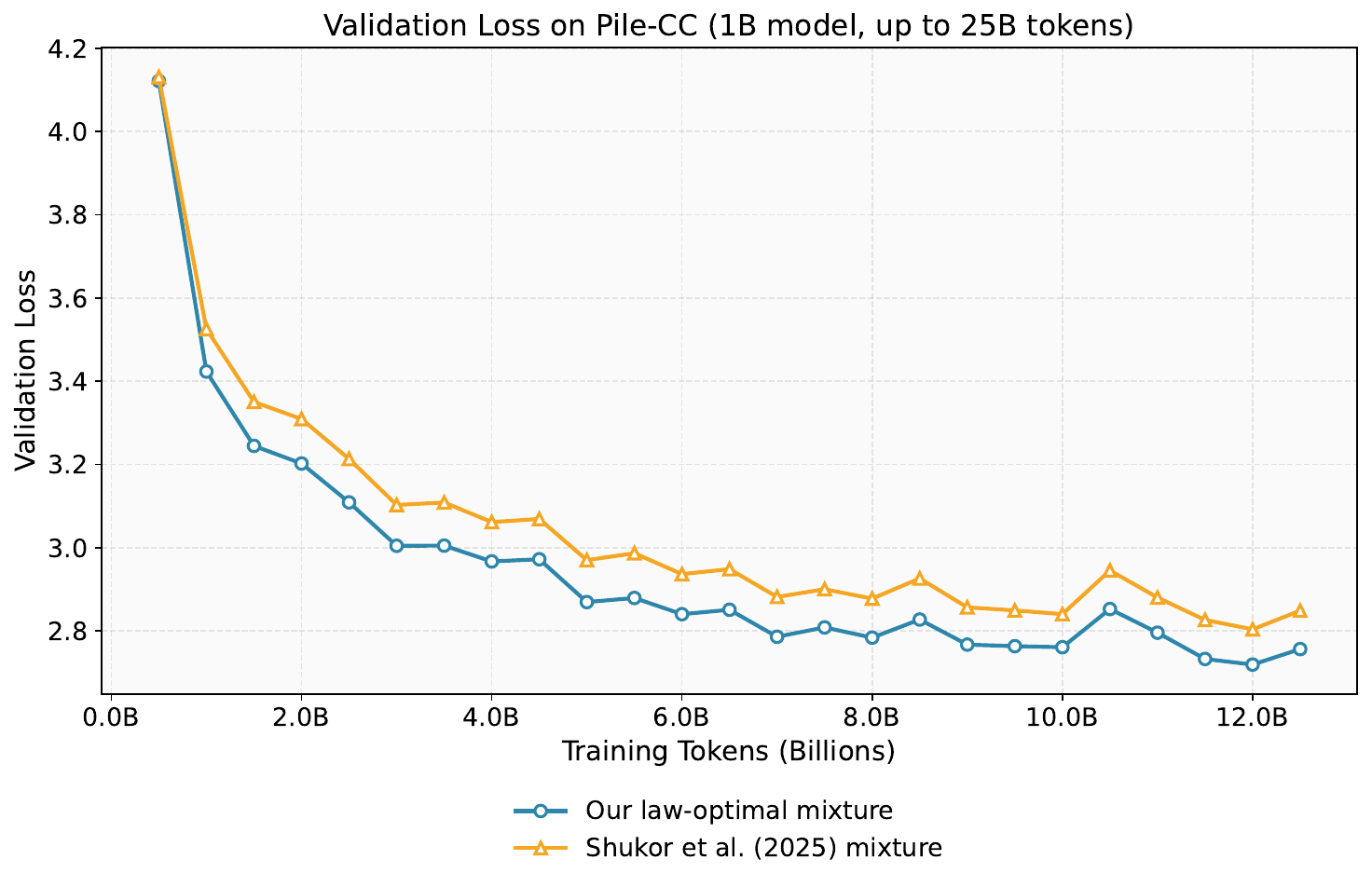}
    \caption{Test loss on Pile-CC\@. The model trained with the optimal mixture predicted by our fitted scaling law achieves a lower test loss than the mixture derived using the baseline law from~\citet{shukor2025scaling}.}
    \label{fig:mixture17}
\end{figure}

\section{Sketched Linear Regression}

Following~\cite{lin2024,bordelon2024dynamical,li2025functional}, we consider a supervised learning setting where the goal is to learn a linear relationship from a stream of data. We analyze the dynamics of stochastic gradient descent (SGD) under a \textit{sketched observation} model, which characterizes the scenario where model capacity (or feature access) is limited relative to the complexity of the data-generating process. Our theoretical framework builds upon the work of \citet{li2025functional}, who demonstrated a scaling law with respect to the learning-rate schedule (LRS). In this work, we extend their analysis to derive the scaling law with respect to the mixture ratio $h$. Crucially, we no longer assume that the entire second-order moment of $\mathbf{x}$ follows a single power law; instead, the moment is partitioned into several blocks, with each block exhibiting distinct power-law scaling. In this section, we compute the test loss on each domain given a data mixture $h \in \Delta^{K-1}$. We investigate the linear model proposed in Section~\ref{sec:elrm}. Following the notation in Section~\ref{sec:elrm}, we use $\lambda^{(i)}_k$ to represent the $k$-th eigenvalue in domain $i$.

\subsection{Single-domain Framework} \label{app:LR_single}

In the single-domain setting, \citet{bordelon2024dynamical,lin2024} provide a rigorous derivation of scaling laws by analyzing the training dynamics of linear regression under one-pass SGD\@. In this framework, neural scaling laws are governed by the spectral decay of the data. As training progresses, the model ``resolves'' eigenmodes in descending order of their eigenvalues---learning the dominant patterns before fitting the fine-grained details.
Below, we adopt the formal framework from \citet{lin2024} to provide a simplified theoretical explanation.

\begin{itemize}

    \item  \textbf{Data Generation:} Consider a linear regression problem where the input covariates $\mathbf{x} \in \mathbb{R}^d$ (where $d$ can be infinite) are drawn from a distribution with zero mean and covariance matrix $\mathbf{H} = \mathbb{E}[\mathbf{x}\mathbf{x}^\top]$. The target label $y$ is generated by a linear teacher with additive noise:
    \begin{equation*}
        y = \langle \theta_*, \mathbf{x} \rangle + \epsilon,
    \end{equation*}
    where $\theta_*$ is the ground-truth parameter and $\epsilon \sim \mathcal{N}(0, \sigma^2)$ is independent Gaussian noise.

    \item \textbf{Spectral Assumptions:} When training a linear regression model with one-pass SGD, the learning dynamics are determined by the spectrum of the covariance matrix $\mathbf{H}  = \mathbb{E}[\mathbf{x}\mathbf{x}^\top]$. Intuitively, the eigenvalues $\lambda_k$ of $\mathbf{H}$ represent the variance (or signal strength) of the data along the $k$-th principal component.
    Empirical studies on natural data (e.g., images and text) consistently observe that these eigenvalues follow a power distribution \citep{field1987, bahri2021}. It is therefore assumed that the eigenvalues, sorted in descending order, follow a power law:
    \begin{equation*}
        \lambda_k \propto k^{-\alpha}, \quad \text{for } \alpha > 1.
    \end{equation*}

    \item \textbf{Finite Parameter Projection:}  To model a neural network with a finite capacity of $N$ parameters, we project the high-dimensional input $\mathbf{x}$ into a lower-dimensional feature space using a ``sketching matrix'' $\mathbf{S} \in \mathbb{R}^{N \times d}$. The model learns a weight vector $\theta \in \mathbb{R}^N$ by minimizing the squared error on the projected features $\tilde{\mathbf{x}} = \mathbf{S}\mathbf{x}$:
    \begin{equation*}
        \widehat\theta = \operatorname*{arg\,min}_{\theta \in \mathbb{R}^N} \frac{1}{D} \sum_{i=1}^D (y_i - \theta^\top \mathbf{S} \mathbf{x}_i)^2.
    \end{equation*}

    \item \textbf{Result:} Under this setup, \cite{lin2024} derive the scaling law for the test loss $L(N,D)$ of the projected linear model $\widehat{\theta}$ (trained via one-pass SGD) as a function of the model size $N$ and number of training samples $D$:
    \begin{equation*}
        L(N, D) \approx \underbrace{O\left(\frac{1}{N^{a_1}}\right)}_{\text{Model Scaling}} + \underbrace{O\left(\frac{1}{D^{a_2}}\right)}_{\text{Data Scaling}} + E.
    \end{equation*}
\end{itemize}
\subsection{Multi-domain Framework}\label{app:theory_des}

In this section, we formally describe the projected linear regression model in the multi-domain setting. 

Following~\Cref{sec:pre_linear}, we consider a linear regression problem over a union of $K$ domains. For each domain $i \in \{1, \dots, K\}$, input covariates $\mathbf{x}_i \in \mathbb{H}$ are feature vectors in a countably infinite-dimensional Hilbert space $\mathbb{H}$, drawn from a distribution with zero mean and covariance matrix $\mathbf{A}_i = \mathbb{E}_{\mathcal{P}_i}[\mathbf{x}_i \mathbf{x}_i^\top]$. The label $y_i$ is generated by a global linear teacher,
\begin{equation*}
    y_i = \langle \theta^*, \mathbf{x}_i \rangle + \epsilon_i,
\end{equation*}
where $\theta^* \sim \mathcal{N}(0,\mathbf{I})$ is the ground-truth parameter and $\epsilon_i \sim \mathcal{N}(0,\sigma^2_i)$ is independent Gaussian noise.
%We consider a mixture distribution $\mathcal{P}(h)=\sum\limits_{i\in [K]}h_i \mathcal{P}_i$ defined by weights $h \in \Delta^{K-1}$  with covariance matrix $\mathbf{H}(h) = \sum_{i=1}^K h_i \mathbf{H}_i$.
%satisfying Assumption~\ref{assum:hyper} and $\mathbb{E}_{\mathbf{x} \sim \mathcal{P}(h)}[\mathbf{x}\mathbf{x}^{\top}]=\mathbf{H}(h)$.  

Given a data mixture weight $h \in \Delta^{K-1}$, the input data $\mathbf{x} \in \mathbb{H}$ are drawn from a distribution $\mathcal{D}=\sum\limits_{i=1}^K h_i\mathcal{P}_i$ with zero mean and covariance matrix $\mathbf{H}(h) := \mathbb{E}_{\mathbf{x} \sim \mathcal{D}}[\mathbf{x}\mathbf{x}^\top]=\sum\limits_{i=1}^K h_i \mathbf{A}_i$, where $\mathbf{A}_i:=\mathbb{E}_{\mathbf{x} \sim \mathcal{P}_i}[\mathbf{x}\mathbf{x}^\top]$.
We assume that every domain contributes a non-zero proportion to data mixture. \begin{assumption}\label{assm:d2}
    There is a universal constant $\lh>0$ such that $h_i\ge \lh$ holds for $i\in [K].$ For example, we can set $\lh=10^{-2}$.
\end{assumption}

\subsubsection{Data Generation}
We make the following assumptions regarding the data distribution.

Firstly, we make the ``Shared Head, Disjoint Tail'' assumption. In spectral analysis, an eigenvector represents a specific direction or pattern of variation in the data (e.g., a specific texture in images or topic in text), while its corresponding eigenvalue quantifies the variance (or strength) of that pattern.
Intuitively, we assume real-world data consists of \textit{universal patterns} shared across all domains (the head) and \textit{specialized nuances} unique to each domain (the tails).
To make the analysis tractable, we assume that all covariance matrices $\{\mathbf{A}_i\}_{i=1}^K$ share a common orthonormal basis of eigenvectors $\mathbf{U} = [u_1, u_2,\dots]$ (i.e., they are simultaneously diagonalizable). In this basis, each matrix $\mathbf{A}_i$ is diagonal with eigenvalues denoted by $\lambda^{(i)}_k$. We classify these eigenvectors into two categories:

\begin{itemize}
    \item \textbf{Shared Head ($k \le H$):} The first $H$ eigenvectors $\{u_1, \dots, u_H\}$ represent universal components. We assume all domains possess non-zero variance along these directions ($\lambda^{(i)}_k > 0$ for all $i$), meaning these patterns are present in every domain.
    
    \item \textbf{Disjoint Tail ($k > H$):} The remaining eigenvectors represent domain-specific components. We assume each domain $i$ possesses a unique set of eigenvectors $\{u_k^{(i)}\}$ that are orthogonal to the specific components of other domains. Following~\Cref{sec:pre_linear}, we assume that the variance along these directions follows a domain-specific power law:
    \begin{equation*}
        u_k^{(i)\top} \mathbf{A}_j u_k^{(i)} = 
        \begin{cases}
            k^{-\alpha_i} & \text{if } j = i \quad  \\
            0 & \text{if } j \neq i \quad .
        \end{cases}
    \end{equation*}
    Consequently, for $k > H$, the spectral structures are completely decoupled: each domain $i$ has positive eigenvalues $\lambda^{(i)}_k = k^{-\alpha_i}$ along its own unique eigenvectors and zero along the eigenvectors of others.
\end{itemize}
Under this assumption, the mixture covariance matrix $\mathbf{H}(h) = \sum h_j \mathbf{A}_j$ exhibits a decoupled structure in the tail. Because the domain-specific eigenvectors do not overlap, the eigenvalue of a unique component from domain $i$ in the mixture is simply its original variance scaled by the domain's proportion $h_i$. 
Specifically, for a tail eigenvector $u_k^{(i)}$ belonging to domain $i$, the corresponding eigenvalue in the mixture is:
\begin{equation*}
    \lambda(\mathbf{H}(h), u_k^{(i)}) = \sum_{j=1}^K h_j \cdot u_k^{(i)\top} \mathbf{A}_j u_k^{(i)} = h_i k^{-\alpha_i}.
\end{equation*}

In the following sections, we always work in the basis
\begin{equation*}
\begin{aligned}
\mathbf{U}=\big[&
u_1,\ldots,u_H, \\
&u^{(1)}_{H+1},\ldots,u^{(K)}_{H+1},
u^{(1)}_{H+2},\ldots,u^{(K)}_{H+2},\ldots
\big].
\end{aligned}
\end{equation*}
For ease of notation, we absorb the domain-specific superscripts by flattening the doubly-indexed tail eigenvectors into a singly-indexed sequence, relabeling the set $\{ u_k^{(i)} \mid k > H, i \in [K] \}$ sequentially as $u_{H+1}, u_{H+2}, \dots$. Formally, this establishes a mapping $\Gamma: \mathbb{N} \times \mathbb{N} \to \mathbb{N}$ for the tail indices ($k > H$ and $i \in [K]$), where the $k$-th index of domain $i$ is mapped to its new flattened index $\Gamma(i,k)$ via $\Gamma(i, k) = H + K(k - H - 1) + i$. In the following section, $\lambda_k$ denotes the eigenvalue associated with the eigenvector $u_k$, i.e., $\Hm u_k = \lambda_k u_k$ (but not necessarily the $k$-th largest eigenvalue).
In this basis with flattened indices, $\mathbf{A}_i$ is diagonal with $\mathbf{A}_i=\text{diag}(a_1, a_2, \dots)$ and \begin{equation*}
    a_k= u^{\top}_k\mathbf{A}_{i}u_k =\begin{cases}
        l^{-\alpha_{i}} & k=\Gamma(i,l) \\
        0 & \text{otherwise.}
    \end{cases}
    \end{equation*}
 In addition, $\mathbf{H}(h) = \sum h_j \mathbf{A}_j$ is diagonal as well.

We then make a standard assumption regarding hypercontractivity, which is also used in~\cite{li2025functional}. Intuitively, it requires the fourth moments of the data distribution to be reasonably bounded by its second moments, which ensures that the distribution $\mathcal{P}_i$ does not exhibit overly heavy tails.
\begin{assumption}[Hypercontractivity] \label{assm:d3}
For any domain $i \in [K]$ and any positive semi-definite (PSD) matrix $\mathbf{M}$, there exists a constant $C_0 > 0$ such that 
$$ \E_{\x\sim \mathcal{P}_i} \left[ \x\x^{\top}\mathbf{M}\x\x^{\top} - \mathbf{A}_i\mathbf{M}\mathbf{A}_i \right] \preceq C_0 \tr(\mathbf{A}_i\mathbf{M})\mathbf{A}_i. $$
\end{assumption}
We note that this is a very mild and standard assumption in theoretical analysis. A wide variety of standard distributions satisfy this condition for a small constant $C_0$, including but not limited to Gaussian distributions, sub-Gaussian distributions, and bounded distributions (such as uniform distributions on a sphere or a hypercube). Consequently, this assumption provides a robust foundation for concentration of measure without requiring strict distributional forms.

We assume the following regarding the prior of $\theta^*$. 
\begin{assumption}\label{assm:secd1}
    Assume that $\theta^*$ satisfies a prior such that $\E[(\theta^*)^{\otimes 2}]=\mathbf{I}$.
\end{assumption}
This isotropic prior assumption is widely adopted in the theoretical analysis of overparameterized models \cite{lin2024}. Conceptually, it postulates that the optimal parameter $\theta^*$ exhibits no preferential directional bias in the feature space, meaning that the signal energy is uniformly distributed across all coordinate components. From a technical standpoint, this formulation isolates the effect of the parameter distribution, ensuring that the generalization error and the resulting scaling laws are fundamentally driven by the spectral properties of the data covariance matrix rather than the parameter alignment.

We assume the following regarding the range of power law exponents, based on empirical observations from \citep{kaplan2020scaling, hoffmann2022training}.
\begin{assumption}\label{assm:c4}
    For all $i\in [K]$, $1<\alpha_i<3$.
\end{assumption}

%Assumption~\ref{assm:c4} ensure that the exponents do not vary too much, which is also empirically verified in \citep{kaplan2020scaling}.

\subsubsection{Model Training}

We then formalize the training process.

To model a neural network with a finite capacity of $N$ parameters, we project the high-dimensional input $\mathbf{x}$ into a lower-dimensional feature space using a \textbf{sketching operator} $\mathbf{S} \in \mathbb{R}^{N \times d}$. The model learns a weight vector $\theta \in \mathbb{R}^N$ by minimizing the squared error on the projected features $\tilde{\mathbf{x}} = \mathbf{S}\mathbf{x}$:
    \begin{equation*}
        \widehat\theta = \operatorname*{arg\,min}_{\theta \in \mathbb{R}^N} \frac{1}{D} \sum_{i=1}^D (y_i - \theta^\top \mathbf{S} \mathbf{x}_i)^2.
    \end{equation*}

Now consider an arbitrary mixture $h \in \Delta^{K-1}$ and data covariance $\mathbf{H}=\sum\limits_{i=1}^K h_i \mathbf{A}_i$. Following~\citep{li2025functional}, we consider the \textit{spectral truncation} sketch, which projects the input data onto the subspace spanned by the top-$N$ eigenvectors of $\mathbf{H}$. Formally, 
\begin{equation*}
    \mathbf{S}_{i,j} = \begin{cases}
        1, & \text{ if } \lambda_j \text{ is the } i^{\text{th}} \text{ largest diagonal value, } \\
        0, & \text{otherwise,}
    \end{cases}
\end{equation*} 
such that the eigenvalues of $\mathbb{E}[\tilde{\mathbf{x}} \tilde{\mathbf{x}}^\top] = \mathbb{E}[\mathbf{S}\mathbf{x} \mathbf{x}^\top \mathbf{S}^\top]$ are the largest $N$ eigenvalues of $\mathbf{H}$.

\paragraph{Intuition behind $\mathbf{S}$.} As discussed in \Cref{sec:elrm}, the eigenvalue $\Hm_{j,j}$ can be viewed as the frequency of the $j$-th skill. Therefore, by the top-$N$ sketching operator $\mathbf{S}$, the model essentially learns the $N$ skills with the highest frequencies.

Then the optimal model parameter $\widehat\theta = \operatorname*{arg\,min}_{\theta \in \mathbb{R}^N} \frac{1}{D} \sum_{i=1}^D (y_i - \theta^\top \mathbf{S} \mathbf{x}_i)^2$ is trained by one-pass SGD with cosine learning rate decay~\cite{loshchilov2017sgdrstochasticgradientdescent} on the sketched inputs. More specifically, we initialize the model parameter at $\mathbf{\theta}_0 = \mathbf{0}$. At step $k$, the algorithm samples a data pair $(\mathbf{x}_k, y_k)$, constructs the observation $\tilde{\mathbf{x}}_k = \mathbf{S}\mathbf{x}_k$, and performs a gradient descent update on the squared loss $\ell(\mathbf{\theta}) = \frac{1}{2}(\tilde{\mathbf{x}}_k^\top \mathbf{\theta} - y_k)^2$:
\begin{align*}
    \mathbf{\theta}_{k+1} &= \mathbf{\theta}_k - \eta_k \nabla_{\mathbf{\theta}} \ell(\mathbf{\theta}_k; \tilde{\mathbf{x}}_k, y_k) \nonumber \\
    &= \mathbf{\theta}_k - \eta_k \tilde{\mathbf{x}}_k (\tilde{\mathbf{x}}_k^\top \mathbf{\theta}_k - y_k),
\end{align*}
where $\eta_k=\eta_0\left( 1+\cos(\pi k/D)\right).$

We assume that the training loss is always bounded. 
\begin{assumption}\label{assm:c5}

    The train loss over the training process has a finite upper bound $L$. Formally, for all $t$, $\E\left[(\langle \Sk \x,\theta(t) \rangle-y)^2\right]\le L$.
\end{assumption}

We also impose an assumption on the parameter size $N$ and data size $D$:
\begin{assumption}\label{assm:c6}
    There is a constant $\varepsilon>0$ such that $N^{\alpha_i} \ge D^{1+\varepsilon}$ for all $i \in[K]$.
\end{assumption}
Assumption~\ref{assm:c6} is a remarkably mild condition that is naturally satisfied in standard large-scale training regimes. Following empirical scaling laws~\cite{kaplan2020scaling}, compute-optimal models typically scale the parameter size $N$ linearly with the dataset size $D$ (e.g., $D \approx 20N$). Under this linear scaling relationship ($N \propto D$), the left-hand side of our inequality scales as $\mathcal{O}(D^{\alpha_i})$. Since $\alpha_i > 1$ by definition, there strictly exists a sufficiently small constant $\varepsilon > 0$ such that $1 + \varepsilon \le \alpha_i$. Consequently, the polynomial growth of $N^{\alpha_i}$ will trivially dominate $D^{1+\varepsilon}$ for large-scale $D$. Therefore, rather than imposing a restrictive capacity requirement, this assumption merely formalizes the standard overparameterized operational regime of modern language models, while simultaneously providing the necessary analytical bounds for our subsequent theoretical proofs.

\subsection{SDE Approximation}
Following previous work~\cite{li2025functional,qiu2025scalingcollapserevealsuniversal}, we use a continuous-time approximation of one-pass SGD, which simplifies the analysis.

Since
\begin{align*}
    \theta_{k+1}-\theta_k&= -\eta_k \left( \E [\nabla_{\mathbf{\theta}} \ell(\theta_k)]+\nabla_{\mathbf{\theta}} \ell(\theta_k)-\E[\nabla_{\mathbf{\theta}} \ell(\theta_k)] \right),
\end{align*}
we have
\begin{align*}
    \theta_{D}-\theta_0= -\sum\limits_{k=0}^{D-1} \eta_k \E [\nabla_{\mathbf{\theta}} \ell(\theta_k)] - \sum\limits_{k=0}^{D-1}\eta_k \left( \nabla_{\mathbf{\theta}} \ell(\theta_k)-\E[\nabla_{\mathbf{\theta}} \ell(\theta_k)] \right).
\end{align*}
We generalize the discrete sequence $\{\theta_0, \ldots, \theta_D\}$ to a continuous function $\theta(\cdot)$, and similarly extend $\eta_k$ to $\eta(\cdot)$.

Now we compute $\mathbf{\Sigma}(\theta_k):=\left( \nabla_{\mathbf{\theta}} \ell(\theta_k)-\E[\nabla_{\mathbf{\theta}} \ell(\theta_k)] \right)^{\otimes2}$.

Since
\begin{align*}
    \nabla_{\mathbf{\theta}} \ell(\theta_k)=\mathbf{S}\x_k \x_k^{\top}\left(\mathbf{S}^{\top}\theta_k-\theta^* \right)-\epsilon_k\mathbf{S}\x_k,
\end{align*}
we have
\begin{align*}
    \mathbf{\Sigma}(\theta_k)=\E \left[ \left( \nabla_{\mathbf{\theta}} \ell(\theta_k)-\E[\nabla_{\mathbf{\theta}} \ell(\theta_k)] \right)\right]^{\otimes2}&=\left[\Sk \left(\x_k \x_k^{\top}-\Hm \right)\left(\Sk^{\top}\theta-\theta^* \right)  \right]^{\otimes 2}+\sum\limits_{i=1}^K h_i \sigma_i^2 \Sk \mathbf{A}_i \Sk^{\top}.
\end{align*}
Using Euler-Maclaurin equation, we have
\begin{align}\label{eq:c1}
    \theta_{D}-\theta_0\approx\theta(D)-\theta(0)=\int_0^{D}-\eta(k)\Sk \Hm \left(\Sk^{\top} \theta(k)-\theta^* \right)  \df k + \int_0^{D} \eta(k) \sqrt{\mathbf{\Sigma}(\theta(k))} \df \mathbf{B}_k, 
\end{align}
where $\B _k \in \mathbb{R}^{N}$ is a N-dimensional  Brownian motion.
Following~\cite{li2025functional}, we define \textbf{intrinsic time} $$\tau(t):=\int_0^{t} \eta(k)\df k.$$
We can rewrite equation~\ref{eq:c1} as
\begin{align}
\label{eq:c2}
    \theta(D)-\theta(0)=\int_0^D -\Sk \Hm (\Sk^{\top}\theta(k)-\theta^*)\df \tau(k)+\sqrt{\eta(k)\mathbf{\Sigma}(\theta(k))}\df \B_\tau.
\end{align}
 Taking the derivative of Equation~\ref{eq:c2}, we have the following lemma.
\begin{lemma}\label{lemac.7}

Define $\w(t):=\Sk^{\top}\theta(\tau^{-1}(t))-\theta^*$ and $\gamma(t):=\eta(\tau^{-1}(t))$. Then
    \begin{align*}\label{eq:c3}
    \df \w=-\Sk^{\top} \Sk\Hm \w \df t+ \Sk^{\top}\sqrt{\gamma(t)\mathbf{\Sigma}(\theta(\tau^{-1}(t)))} \Sk \df \B_t,
\end{align*}
where $\B _t \in \mathbb{R}^{d}$ is a $d$-dimensional Brownian motion.

\end{lemma}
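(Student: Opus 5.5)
The argument is a change of variables applied to the integral identity \eqref{eq:c2}, followed by differentiation. Set $s=\tau(k)$, so that $k=\tau^{-1}(s)$. Since $\eta>0$, $\tau$ is strictly increasing and differentiable with $\tau'(k)=\eta(k)$. Hence $\tau^{-1}$ exists and is differentiable, and $\df k=\df s/\gamma(s)$ with $\gamma(s)=\eta(\tau^{-1}(s))$.

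\textbf{Drift term.} I rewrite \eqref{eq:c2} with the upper limit $D$ replaced by a general $u$. The drift integral $\int_0^{u}-\Sk\Hm(\Sk^\top\theta(k)-\theta^*)\,\df\tau(k)$ becomes $\int_0^{\tau(u)}-\Sk\Hm\,\w(s)\,\df s$. Next I multiply the whole identity on the left by $\Sk^\top$. Because $\theta^*$ is constant, $\Sk^\top\theta(\tau^{-1}(t))-\theta^*$ has the same differential as $\Sk^\top\theta(\tau^{-1}(t))$. Differentiating in $t=\tau(u)$ then gives the drift $-\Sk^\top\Sk\Hm\,\w\,\df t$.

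\textbf{Martingale term.} I start from the form in \eqref{eq:c1}, $\int_0^u\eta(k)\sqrt{\mathbf{\Sigma}}\,\df\B_k$, and time-change the Brownian motion. The process $\tilde\B_s:=\int_0^{\tau^{-1}(s)}\sqrt{\eta(k)}\,\df\B_k$ has quadratic variation $\int_0^{\tau^{-1}(s)}\eta(k)\,\df k=s$. By Lévy's characterization it is therefore a standard Brownian motion in $s$, with $\df\B_k=\df\tilde\B_s/\sqrt{\gamma(s)}$. Substituting,
\begin{equation*}
\eta(k)\sqrt{\mathbf{\Sigma}}\,\df\B_k=\gamma(s)\gamma(s)^{-1/2}\sqrt{\mathbf{\Sigma}}\,\df\tilde\B_s=\sqrt{\gamma(s)\mathbf{\Sigma}(\theta(\tau^{-1}(s)))}\,\df\tilde\B_s .
\end{equation*}
This is exactly the scaling $\sqrt{\eta\mathbf{\Sigma}}\,\df\B_\tau$ written informally in \eqref{eq:c2}. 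Left-multiplying by $\Sk^\top$ yields the diffusion coefficient in $\mathbb{R}^d$.

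\textbf{Writing it in $d$ dimensions.} The statement writes the noise as $\Sk^\top\sqrt{\cdot}\,\Sk\,\df\B_t$ with $\B_t$ a $d$-dimensional Brownian motion. To match this, note that $\Sk\Sk^\top=\mathbf{I}_N$, since $\Sk$ is a row selection of the identity. Hence $\Sk\,\df\B^{(d)}_t$ is an $N$-dimensional standard Brownian motion whenever $\B^{(d)}_t$ is a $d$-dimensional one. I therefore represent $\tilde\B_t=\Sk\B^{(d)}_t$, which gives the stated form.

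\textbf{Main obstacle.} The delicate step is the Brownian time change. One must justify that $\eta>0$ on $[0,D)$ so that $\tau$ is invertible; the cosine schedule vanishes only at the endpoint $k=D$, so this can be handled by working on $[0,D-\delta]$ and letting $\delta\to0$. One must also confirm that the diffusion factor comes out as $\sqrt{\gamma}$ rather than $\gamma$. I will verify this by checking quadratic variations directly, showing that both sides have quadratic variation $\int\gamma\,\Sk^\top\mathbf{\Sigma}\Sk\,\df t$. The rest is a routine application of the chain rule.
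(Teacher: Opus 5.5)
Your proposal is correct and follows the same route as the paper: the paper simply passes from \eqref{eq:c1} to the intrinsic-time form \eqref{eq:c2} and then differentiates. You left-multiply by $\Sk^\top$ and do the same. Your version is more careful in making explicit the two steps the paper leaves informal: the Brownian time change, via L\'evy's characterization, which confirms the $\sqrt{\gamma}$ scaling, and the lift to a $d$-dimensional Brownian motion through $\Sk\Sk^\top=\mathbf{I}_N$.
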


\subsection{Test Loss}
We first compute the expected test loss on a certain domain $\tau\in [K]$ when the model is trained with mixture $h \in \Delta^{K-1}$.

In our projected linear regression framework, the test loss on domain $\tau$ equals
\begin{align*}
    L_{\tau}&=\E_{\x \sim \mathcal{P}_{\tau}}\left[ (\langle \Sk \x,\theta\rangle -\langle \x,\theta^* \rangle)^2\right]+\sigma_\tau^2 \\
    &=\E \left[\w^{\top}(D\eta_0) \mathbf{A}_{\tau}\w(D\eta_0)\right]+\sigma_\tau^2.
\end{align*}

\begin{theorem}\label{thm:c3}

    Consider the projected linear regression model defined in \Cref{app:theory_des} and an arbitrary domain $\tau$. Let $r_k$ be the rank of the eigenvalue $\lambda_k$ among all eigenvalues of $\Hm=\sum h_j \mathbf{A}_j$, sorted in descending order. Let $\mathbf{A}_\tau=\text{diag}(a_1, a_2, \dots)$ be the data covariance of domain $\tau$ (i.e., $\mathbf{A}_{\tau}=\E_{\mathcal{P}_{\tau}}[\x \x^{\top}]$), where
    \begin{equation*}
    a_k =\begin{cases}
        l^{-\alpha_{\tau}} & k=\Gamma(\tau,l) \\
        0 & \text{otherwise}
    \end{cases}
\end{equation*} 
    Then the expected test loss on domain $\tau$, denoted by $L_\tau$, satisfies
    \begin{align}
        \text{Bias}+\text{Approx}+\text{Var}_1 + \sigma^2_{\tau} \le  L_\tau \le \text{Bias}+\text{Approx}+\text{Var}_2 + \sigma^2_{\tau},
    \end{align}

    where \begin{align*}
    \text{Bias}&:=\sum\limits_{k:\, r_k\le N} \exp(-2\lambda_kD\eta_0)a_k,\\
    \text{Approx}&:=\sum\limits_{k: \, r_k>N} a_k,\\
    \text{Var}_1&:=\sum\limits_{k: \, r_k \le N,k>H} h_{\tau}\sigma^2_{\tau} a_k\lambda_k \int_0^{D\eta_0} \exp(-2\lambda_k(D\eta_0-t))\gamma(t)\df t ,\\
    \text{Var}_2&:=\sum\limits_{k: \, r_k\le N,k>H} a_k \lambda_k\int_0^{D\eta_0}\exp(-2\lambda_k(D\eta_0-t))\gamma(t)\left(h_{\tau} \sigma^2_{\tau}+ \frac{C_0}{\lh} \E \left[\w(t)^{\top}\Hm \w(t)\right]\right)\df t \\
        &+\sum\limits_{k: \, r_k\le N,k\le H} a_k \lambda_k\int_0^{D\eta_0}\exp(-2\lambda_k(D\eta_0-t))\gamma(t)\left(\sigma_{\max}^2+ \frac{C_0}{\lh} \E \left[\w(t)^{\top}\Hm \w(t)\right]\right)\df t,
\end{align*}
and $\gamma(t) =\eta(\tau^{-1}(t)), \lambda_k = u_k^{\top}\Hm u_k = \Hm_{k,k}, \sigma_{\max}=\max_{i\in[K]}\sigma_i.$

\end{theorem}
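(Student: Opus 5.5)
The plan is to work coordinatewise in the eigenbasis $\mathbf{U}$, where $\Hm$, every $\mathbf{A}_i$, and $\Sk^\top\Sk$ are all diagonal. Then the SDE of Lemma~\ref{lemac.7} decouples in its drift, and the only coupling between coordinates comes through the noise covariance. Write $P:=\Sk^\top\Sk$; this is the diagonal projector onto the coordinates with $r_k\le N$.

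\textbf{Unlearned coordinates.} For $r_k>N$ the drift and diffusion vanish, so $\w_k(t)=\w_k(0)=-\theta^*_k$. Using $\E[(\theta^*_k)^2]=1$ (Assumption~\ref{assm:secd1}), these coordinates contribute $\sum_{r_k>N}a_k=\text{Approx}$ to $\E[\w^\top\mathbf{A}_\tau\w]$.

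\textbf{Learned coordinates.} For $r_k\le N$ I would apply It\^o's formula to $\w_k(t)^2$, or equivalently solve the linear SDE by variation of constants:
\[
\w_k(T)=e^{-\lambda_k T}\w_k(0)+\int_0^T e^{-\lambda_k(T-t)}\bigl(\text{noise}\bigr)_k\,\df \B_t,\qquad T=D\eta_0 .
\]
Taking expectations and using the It\^o isometry, $\E[\w_k(T)^2]$ splits into two parts. The first is the bias part $e^{-2\lambda_kT}$, which after weighting by $a_k$ gives $\text{Bias}$. The second is $\int_0^T e^{-2\lambda_k(T-t)}\gamma(t)\,\E[\mathbf{\Sigma}_{kk}(t)]\,\df t$. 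The cross terms vanish because the Brownian increments are independent of $\w(0)$ and have zero mean. Since $\mathbf{A}_\tau$ is diagonal, only the diagonal second moments $\E[\w_k^2]$ are needed.

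\textbf{Bounding the noise covariance.} The remaining work is to sandwich the diagonal entry $\E[\mathbf{\Sigma}_{kk}]$. It has two pieces.
\begin{itemize}
\item The label-noise piece is $\sum_i h_i\sigma_i^2(\mathbf{A}_i)_{kk}$.
 \begin{itemize}
 \item For a tail coordinate $k>H$ belonging to domain $\tau$, only $\mathbf{A}_\tau$ is nonzero there, so this piece equals $h_\tau\sigma_\tau^2 a_k$ exactly. Multiplying by the test weight $a_k$ should produce the $h_\tau\sigma_\tau^2a_k\lambda_k$-type term; I need to reconcile the factor $a_k^2$ versus $a_k\lambda_k$, noting that $\lambda_k=h_\tau a_k$ there.
 \item For tail coordinates of other domains, $a_k=0$, so they contribute nothing to $L_\tau$.
 \item For head coordinates $k\le H$, I would bound the piece above by $\sigma_{\max}^2\lambda_k$.
 \end{itemize}
\item The multiplicative (fourth-moment) piece $\E[(\x\x^\top-\Hm)\w\w^\top(\x\x^\top-\Hm)]_{kk}$ is PSD, so dropping it gives the lower bound $\text{Var}_1$; I also drop the head terms in the lower bound since they are nonnegative. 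For the upper bound, I would write the mixture fourth moment as $\sum_i h_i\E_{\mathcal{P}_i}[\cdot]$ and apply Assumption~\ref{assm:d3} with $\mathbf{M}=\w\w^\top$ in each domain. This gives at most $C_0\sum_i h_i(\w^\top\mathbf{A}_i\w)\mathbf{A}_i$ plus a $-\Hm\w\w^\top\Hm$ correction, which can be discarded because it is negative semidefinite after centering. Then I would use $h_i\ge\lh$ (Assumption~\ref{assm:d2}) to bound $\sum_i h_i(\w^\top\mathbf{A}_i\w)(\mathbf{A}_i)_{kk}\le \frac{1}{\lh}(\w^\top\Hm\w)\lambda_k$. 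This holds because $(\mathbf{A}_i)_{kk}\le\lambda_k/h_i$ and $\sum_i h_i\w^\top\mathbf{A}_i\w=\w^\top\Hm\w$.
\end{itemize}

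\textbf{Assembling the bounds.} Summing over $k$ with weights $a_k$ and adding $\sigma_\tau^2$ yields both inequalities. I expect the main obstacle to be this hypercontractivity step: turning a matrix inequality for each domain separately into the stated diagonal bound with the clean factor $\frac{C_0}{\lh}\E[\w^\top\Hm\w]\lambda_k$. This requires careful handling of how the projected noise $\Sk^\top\sqrt{\cdot}\,\Sk$ interacts with the mixture, and of taking the expectation of $\w$ inside a quantity that is nonlinear in $\w$. Assumption~\ref{assm:c5} keeps $\E[\w^\top\Hm\w]$ finite, so the integral is well defined.
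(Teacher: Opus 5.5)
Your route is the paper's: you work with $y_k=\E[\w_k^2]$ in the common eigenbasis and note $y_k'=0$ for $r_k>N$, which gives $\text{Approx}$. You solve the linear equation for $r_k\le N$ by variation of constants, giving $\text{Bias}$ plus a noise integral. You drop the PSD multiplicative noise for the lower bound, and use hypercontractivity with $h_i\ge\lh$ for the upper bound. Taking $\mathbf{M}=\w\w^\top$ rather than the paper's $\mathbf{M}=\Sk^\top\mathbf{E}_{k,k}\Sk$ changes nothing. Both reduce to the scalar $\E_{\mathcal{P}_i}[(u_k^\top\x)^2(\x^\top\w)^2]$ with the same bound $C_0(\mathbf{A}_i)_{kk}\,\w^\top\mathbf{A}_i\w$. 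Two points need correcting, one of which you flagged yourself.

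\textbf{The label-noise factor cannot be reconciled with the stated upper bound.} Your computation is right. For a tail coordinate $k=\Gamma(\tau,l)$ the label-noise entry is $h_\tau\sigma_\tau^2a_k=\sigma_\tau^2\lambda_k$. So this part of $L_\tau$ equals $\sum_{k:\,r_k\le N,\,k>H}\sigma_\tau^2a_k\lambda_k\int_0^{D\eta_0}e^{-2\lambda_k(D\eta_0-t)}\gamma(t)\,\df t$, which is $1/h_\tau$ times the corresponding term in the statement.

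\begin{itemize}
\item Because $h_\tau\le 1$, this still gives the lower bound with $\text{Var}_1$.
\item It gives the upper bound only with $\sigma_\tau^2$ in place of $h_\tau\sigma_\tau^2$ in the first sum of $\text{Var}_2$. Nothing in the assumptions lets you absorb the missing $(1-h_\tau)\sigma_\tau^2$ into the $\frac{C_0}{\lh}\E[\w^\top\Hm\w]$ term.
\end{itemize}

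The paper's proof does not resolve this either. It records the drift as $\gamma(t)\lambda_k h_\tau\sigma_\tau^2$, whereas its own It\^o expansion with $\mathbf{A}=\mathbf{E}_{k,k}$ gives $\gamma(t)\tr\bigl[\mathbf{E}_{k,k}\sum_ih_i\sigma_i^2\mathbf{A}_i\bigr]=\gamma(t)\sigma_\tau^2\lambda_k$. Compare its head case, where it correctly gets $\sigma_{\max}^2\lambda_k$. So prove the upper bound with $\sigma_\tau^2$, and treat the extra $h_\tau$ as a slip in the statement rather than something to reconcile.

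\textbf{The centering remainder is PSD, not NSD.} Applying hypercontractivity domain by domain leaves a remainder besides $C_0\sum_ih_i(\w^\top\mathbf{A}_i\w)\mathbf{A}_i$, because
\begin{align*}
\E_{\x\sim\mathcal{D}}\bigl[\x\x^\top\w\w^\top\x\x^\top\bigr]-\Hm\w\w^\top\Hm
&=\sum_{i} h_i\bigl(\E_{\mathcal{P}_i}\bigl[\x\x^\top\w\w^\top\x\x^\top\bigr]-\mathbf{A}_i\w\w^\top\mathbf{A}_i\bigr)\\
&\quad+\Bigl(\sum_{i} h_i\mathbf{A}_i\w\w^\top\mathbf{A}_i-\Hm\w\w^\top\Hm\Bigr).
\end{align*}
The last bracket is the covariance of $\mathbf{A}_I\w$ for a random domain index $I\sim h$, hence positive semidefinite. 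It cannot be discarded in an upper bound. Dropping only $-\Hm\w\w^\top\Hm$ still leaves $\sum_ih_i\mathbf{A}_i\w\w^\top\mathbf{A}_i$ to be bounded.

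The repair is one line. By Cauchy--Schwarz, $\mathbf{A}_i\w\w^\top\mathbf{A}_i\preceq(\w^\top\mathbf{A}_i\w)\mathbf{A}_i$, so the multiplicative piece is $\preceq(C_0+1)\sum_ih_i(\w^\top\mathbf{A}_i\w)\mathbf{A}_i$. Your diagonal argument then gives $\frac{C_0+1}{\lh}\lambda_k\E[\w^\top\Hm\w]$.

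The paper's proof skips the same remainder. Its step rewriting the mixture deviation as the $h$-weighted sum of per-domain deviations holds only up to this PSD term. So the constant in $\text{Var}_2$ should be read with $C_0+1$ in place of $C_0$.

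Your remaining worry about expectations of nonlinear functions of $\w$ does not arise. After these bounds everything is linear in $\w\w^\top$.
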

\begin{proof}
We consider the quadratic function $f(\w) = \w^{\top}\mathbf{A}\w$ for any diagonal matrix $\mathbf{A}=\text{diag}(a'_1,a'_2,\ldots)$ (not necessarily $\mathbf{A}_i$). Recall that for a multidimensional stochastic process, Itô's lemma expands a twice-differentiable function $f(\w)$ as
\begin{equation*}
    \df f(\w(t)) = (\nabla f(\w(t)))^{\top} \df \w(t) + \frac{1}{2}\tr\left[\nabla^2 f(\w(t)) \cdot (\df \w(t))^{\otimes 2}\right].
\end{equation*}
For our specific function, the gradient is $\nabla f = 2\mathbf{A}\w$ and the Hessian is $\nabla^2 f = 2\mathbf{A}$. Substituting these derivatives into Itô's formula and taking the expectation yields
\begin{align*}
\df \E[\w^{\top}(t)\mathbf{A}\w(t)]
&= 2\E\left[\w^{\top}(t)\mathbf{A}\df\w(t)\right]
 + \E\left[\tr\left(\mathbf{A}(\df\w(t))^{\otimes2}\right)\right] \\
&= -2\E\left[\w^{\top}(t)\mathbf{A}\Sk^{\top}\Sk\Hm\w(t)\right]\df t \\
&\quad +\gamma(t)\E\left[
\w^{\top}(t)(\x\x^{\top}-\Hm)\Sk^{\top}\mathbf{A}\Sk
(\x\x^{\top}-\Hm)\w(t)\right]\df t \\
&\quad +\gamma(t)\tr\left[
\mathbf{A}\sum_{i=1}^K h_i\sigma_i^2
\Sk\mathbf{A}_i\Sk^{\top}\right]\df t.
\end{align*}
     We first take the expectation over $\x$. Using Assumptions~\ref{assm:d3} and~\ref{assm:d2}, we bound the term $\w^{\top}(\x \x^{\top}-\Hm)\Sk^{\top} \mathbf{A}\Sk(\x\x^{\top}-\Hm)\w$ as follows:
     \begin{align*}
         \E_{\x \sim \mathcal{D}(h)}\left[\w^{\top}(\x \x^{\top}-\Hm)\Sk^{\top} \mathbf{A}\Sk(\x\x^{\top}-\Hm)\w\right] &= \w^{\top}\E\left[\x\x^{\top}\Sk^{\top}\mathbf{A}\Sk \x\x^{\top}-\Hm \Sk^{\top}\mathbf{A}\Sk \Hm\right] \w \\
         &= \w^{\top}\sum h_i\E_{\x \sim \mathcal{P}_i}\left[\x\x^{\top}\Sk^{\top}\mathbf{A}\Sk \x\x^{\top}-\mathbf{A}_i \Sk^{\top}\mathbf{A}\Sk \mathbf{A}_i\right] \w \\
         & \le \w^{\top} \sum h_i C_0 \tr[\Sk^{\top}\mathbf{A}\Sk \mathbf{A}_i]\mathbf{A}_i \w \\
         & \le \w^{\top}\frac{C_0}{\lh}\sum \tr[\Sk^{\top}\mathbf{A}\Sk \mathbf{A}_i] h_i \mathbf{A}_i \w \\
         &=\frac{C_0}{\lh} \tr[\Sk^{\top}\mathbf{A}\Sk \Hm] \w(t)^{\top}\Hm \w(t).
     \end{align*}
     Setting $\mathbf{A}=\mathbf{E}_{k,k}$ and letting $y_k(t)=\E[\w_k^2(t)]$, we obtain the following two-sided bounds for $y_k$: 
     \begin{itemize}
         \item For $r_k \le N$ and $k>H$, \begin{align*}
             -2\lambda_k y_k(t)+\gamma(t)\lambda_k\cdot h_{\tau}\sigma_{\tau}^2\le y'_k(t) \le -2\lambda_k y_k(t)+\gamma(t)\lambda_k\left(h_{\tau} \sigma^2_{\tau}+\frac{C_0}{\lh} \E[ \w(t)^{\top}\Hm \w(t) ] \right).
         \end{align*}
         \item For $r_k \le N$ and $k\le H$, \begin{align*}
             -2\lambda_ky_k(t) \le y'_k(t) \le -2\lambda_ky_k(t)+\gamma(t) \cdot \lambda_k \cdot \sigma^2_{\max} +\gamma(t)\lambda_k\frac{C_0}{\lh}\E[\w(t)^{\top}\Hm \w(t)],
         \end{align*}
	         where $\sigma_{\max}=\max_{i\in [K]} \sigma_i$.
         \item For $r_k >N$, $y'_k(t)=0.$
     \end{itemize}
	     Solving the differential equation, for all $k>H$ such that $r_k\le N$, we have \begin{align*}
	         y_k(D\eta_0) &\ge \exp(-2\lambda_k D\eta_0) y_k(0)+ h_{\tau} \sigma^2_{\tau}\lambda_k\int_0^{D\eta_0}\exp(-2\lambda_k(D\eta_0-t))\gamma(t) \df t ,\\
         y_k(D\eta_0) &\le \exp(-2\lambda_k D\eta_0) y_k(0)+ \lambda_k\int_0^{D\eta_0}\exp(-2\lambda_k(D\eta_0-t))\gamma(t)\left(h_{\tau} \sigma^2_{\tau}+ \frac{C_0}{\lh}                                     \E[\w(t)^{\top}\Hm \w(t)]\right)\df t .
     \end{align*}
	     Since $\E[\w^{\top}(t)\mathbf{A}\w(t)]=\sum_k y_k(t)a_k$ and $\E[\w^2_i(0)]=1$ by Assumption~\ref{assm:secd1}, 
     we have
    \begin{align*}
        \E[\w^{\top}(D\eta_0)\mathbf{A}\w(D\eta_0)]&=\sum y_k(D\eta_0)a_k\\
        &\ge \sum\limits_{k:r_k>N}a_k \\
        &+ \sum\limits_{k:\, r_k\le N} \exp(-2\lambda_kD\eta_0)a_k \\
        &+\sum\limits_{k: \, r_k \le N,k>H} h_{\tau}\sigma^2_{\tau} a_k\lambda_k \int_0^{D\eta_0} \exp(-2\lambda_k(D\eta_0-t))\gamma(t)\df t 
    \end{align*}
    and
    \begin{align*}
        \E[\w^{\top}(D\eta_0)\mathbf{A}\w(D\eta_0)]&=\sum y_k(D\eta_0)a_k\\
        &\le \sum\limits_{k:r_k>N}a_k \\
        &+ \sum\limits_{k:\, r_k\le N} \exp(-2\lambda_kD\eta_0)a_k \\
        &+\sum\limits_{k: \, r_k\le N,k>H} a_k \lambda_k\int_0^{D\eta_0}\exp(-2\lambda_k(D\eta_0-t))\gamma(t)\left(h_{\tau} \sigma^2_{\tau}+ \frac{C_0}{\lh} \E \left[\w(t)^{\top}\Hm \w(t)\right]\right)\df t \\
        &+\sum\limits_{k: \, r_k\le N,k\le H} a_k \lambda_k\int_0^{D\eta_0}\exp(-2\lambda_k(D\eta_0-t))\gamma(t)\left(\sigma_{\max}^2+ \frac{C_0}{\lh} \E \left[\w(t)^{\top}\Hm \w(t)\right]\right)\df t
    \end{align*}
\end{proof}

In the following sections, we estimate each term in the bounds separately.

\subsection{The Approximation Error Term}

For an arbitrary domain $\tau$, we first provide a bound for the approximation error term $\text{Approx}=\sum\limits_{k: \, r_k>N} a_k$.
We demonstrate that, aside from a gap introduced by discretization, this term is equivalent to the domain loss $L_\tau(h) = c_\tau x_\tau^*(h)^{-b_\tau}$ in our Extended Quantization Model, with $c_\tau = \frac{1}{\alpha_\tau-1}$ and $b_\tau = \alpha_\tau-1$.

\begin{theorem}\label{thm:d4}
    Consider the projected linear regression model defined in \Cref{app:theory_des}. Let $x^*$ be the solution to Problem~\ref{eq:quant_opt} with parameters $c_i=\frac{1}{\alpha_i-1}$ and $b_i=\alpha_i-1$. Then, for any domain $\tau$ and sufficiently large $N$, the term $\text{Approx}$ in \Cref{thm:c3} satisfies
    \begin{equation*}
        \left|\text{Approx}-c_\tau (x^*_{\tau})^{-b_\tau}\right| \le \frac{C_{13}}{N^{\alpha_{\min}}},
    \end{equation*}
    where $C_{13}$ is a constant that depends only on $\alpha$.
\end{theorem}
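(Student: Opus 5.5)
We describe the top-$N$ spectral truncation sketch explicitly, as a discrete capacity allocation, and then compare it with the continuous problem~\eqref{eq:quant_opt}. The sketch $\Sk$ keeps the $N$ largest diagonal entries of $\Hm=\sum_j h_j\mathbf{A}_j$. We assume $N>H$ is large and that all $H$ head directions are retained, which is consistent with the ``negligible shared head'' regime. The remaining $N-H$ slots go to tail eigenvalues $h_i l^{-\alpha_i}$, $l>H$. Because these are decreasing in $l$ within each domain, the kept set is of the form $\{\Gamma(i,l): H<l\le m_i\}$ for integers $m_i$ with $\sum_i (m_i-H)=N-H$. Hence
\[
\text{Approx}=\sum_{l>m_\tau} l^{-\alpha_\tau}.
\]
The $m_i$ solve the discrete problem of maximizing the retained weighted mass $\sum_i h_i\sum_{l=H+1}^{m_i} l^{-\alpha_i}$ under the budget. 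Greedy selection is optimal here since the marginal gains decrease. In particular, there is a common threshold $\mu$ such that $h_i (m_i+1)^{-\alpha_i}\le \mu \le h_i m_i^{-\alpha_i}$ for every $i$, up to tie-breaking.

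Next we analyze the continuous problem. With $c_i=1/(\alpha_i-1)$ and $b_i=\alpha_i-1$, we have $\frac{d}{dx}\,h_ic_ix^{-b_i}=-h_i x^{-\alpha_i}$, so the KKT conditions of~\eqref{eq:quant_opt} give $h_i (x_i^*)^{-\alpha_i}=\lambda$ for all $i$ with $x_i^*>H$, together with $\sum_i(x_i^*-H)=N-H$. The problem is strictly convex, so $x^*$ is unique. Its threshold equation is the continuous analogue of the discrete one. We then compare $m_i$ and $x_i^*$. Write $g(\mu)=\sum_i (h_i/\mu)^{1/\alpha_i}$, which is strictly decreasing. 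The discrete threshold satisfies $g(\mu)\in[N-K,\,N+K]$ in the appropriate shifted form, while $g(\lambda)=N$ with the same shift. Since $\lh\le h_i\le1$ (Assumption~\ref{assm:d2}), each $x_i^*$ is of order $N$ up to constants depending on $\alpha$ and $\lh$, because $x_i^*=(h_i/\lambda)^{1/\alpha_i}$ and every $x_j^*\le N$. A mean-value argument on $g$ then shows $|\mu-\lambda|\lesssim \lambda/N$, which yields $|m_i-x_i^*|\le C$ with $C$ depending only on $\alpha$ and $K$.

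Finally, we compare the tail sum with the integral. For $m\ge1$,
\[
\Bigl|\sum_{l>m} l^{-\alpha_\tau}-\int_m^\infty k^{-\alpha_\tau}\,\df k\Bigr|\le m^{-\alpha_\tau},
\]
by monotonicity. In addition, $|c_\tau m_\tau^{-b_\tau}-c_\tau (x_\tau^*)^{-b_\tau}|\le \sup_{\xi} \xi^{-\alpha_\tau}\,|m_\tau-x_\tau^*|$, where the supremum is over $\xi$ between $m_\tau$ and $x^*_\tau$. Since $m_\tau$ and $x^*_\tau$ are both $\Theta(N)$, both errors are $O(N^{-\alpha_\tau})\le O(N^{-\alpha_{\min}})$, which gives the constant $C_{13}$.

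The main obstacle is the second step: showing that the discrete greedy cutoffs $m_i$ stay within $O(1)$ of the continuous optimum $x_i^*$ uniformly, and that every $x_i^*$ is of order $N$. Both are needed so that the error is $N^{-\alpha_{\min}}$ and not something worse. The $\Theta(N)$ claim uses the lower bound $h_i\ge\lh$; without it a domain could receive only $O(1)$ capacity and the relative rounding error would not be small. We will also need to check that no constraint $x_i\ge H$ is active for large $N$, which follows from the same $\Theta(N)$ scaling.
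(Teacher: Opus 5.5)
Your reduction of the sketch to integer cutoffs $m_i$ with a common threshold $\mu$ is fine, and so are your sum-versus-integral estimates. The problem is the scaling claim in your second paragraph, which is false, and your final step depends on it. The bound $h_i\ge\lh$ does not make every $x_i^*=(h_i/\lambda)^{1/\alpha_i}$ of order $N$ when the exponents differ. As $\lambda\to0$, the domain with exponent $\alpha_{\min}$ absorbs almost the whole budget, and $x_\tau^*\asymp N^{\alpha_{\min}/\alpha_\tau}$. This is $o(N)$ whenever $\alpha_\tau>\alpha_{\min}$; for example, $\alpha_1=1.5$, $\alpha_2=2.5$, $h_1=h_2=\tfrac12$ gives $x_2^*\asymp N^{0.6}$.

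Consequently your intermediate bound $O(N^{-\alpha_\tau})$ is wrong in general. The rounding gap between $m_\tau$ and $x_\tau^*$ is generically of order one, so the error is genuinely of order $(x_\tau^*)^{-\alpha_\tau}\asymp N^{-\alpha_{\min}}$. That is exactly why the statement carries $\alpha_{\min}$ rather than $\alpha_\tau$. It also means your closing remark, that $h_i\ge\lh$ is what keeps each domain's capacity linear, misdiagnoses the mechanism.

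What you need instead is the growth bound the paper proves as Lemma~\ref{thm:propertyx}, namely $x_\tau^*=\Omega(N^{\alpha_{\min}/\alpha_\tau})$. It takes one line. Some $j$ has $x_j^*\ge N/K$, so $\lambda=h_j(x_j^*)^{-\alpha_j}\le (N/K)^{-\alpha_{\min}}$ once $N\ge K$. Since $h_\tau (x_\tau^*)^{-\alpha_\tau}\le\lambda$ by KKT, this gives $x_\tau^*\ge (\lh)^{1/\alpha_\tau}(N/K)^{\alpha_{\min}/\alpha_\tau}$. The same bound makes the constraints $x_i\ge H$ inactive for large $N$. Both of your error terms then become $O((x_\tau^*)^{-\alpha_\tau})=O(N^{-\alpha_{\min}})$, as required; the constant also depends on $\lh$ and $K$, which the paper's argument uses implicitly too.

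Your discrete-versus-continuous comparison survives the correction, since it only uses the sum $g\approx N$ and $x_i^*\le N+(K-1)H$. In fact the mean-value argument is unnecessary. Set $N'=N+(K-1)H$; you have $g(\mu)\in[N',N'+K]$ and $g(\lambda)=N'$. Monotonicity of $g$ then gives $\mu\le\lambda$, so each difference $(h_i/\mu)^{1/\alpha_i}-x_i^*$ is nonnegative, and these differences sum to at most $K$. Hence $x_i^*-1\le m_i\le x_i^*+K$. This is essentially the paper's argument, which thresholds at the continuous multiplier $\lambda^*$ and counts $N-K<|U|\le N$.
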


%In our framework, the loss induced by model capacity is proportional to the sum of the spectrum not captured by the model. In the following proof, we show that the Top-$N$ feature model is asymptotically equivalent to Problem~\ref{eq:quant_opt}.

\subsubsection{Proof of the Approximation-Error Bound}
To understand why the term $\text{Approx}$ is equivalent to the domain loss in the Extended Quantization Model for an arbitrary domain $\tau$, note that Theorem~\ref{thm:c3} establishes:
$$\text{Approx} = \sum_{k: \, r_k > N} a_k,$$
where $r_k$ is the rank of the eigenvalue $\lambda_k$ among all eigenvalues of $\Hm=\sum h_j \mathbf{A}_j$, sorted in descending order, and
\begin{equation*}
    a_k=\begin{cases}
        l^{-\alpha_{\tau}} & k=\Gamma(\tau,l) \\
        0 & \text{otherwise.}
    \end{cases}
\end{equation*}
If we interpret the eigenvalues as the frequencies of skills, this formulation indicates that $\text{Approx}$ is precisely the sum of the frequencies of the unlearned skills in domain $\tau$, assuming the model learns the $N$ skills with the highest frequencies (under a training mixture $h$). This aligns perfectly with the Extended Quantization Model if we define $c_\tau$ as the normalization constant such that $\sum_k a_k/c_\tau = 1$. Under this formulation, the optimization objective (Problem~\ref{eq:quant_opt}) minimizes the total loss by learning the highest-frequency skills across all $K$ domains, and the loss for any domain $\tau$ is incurred  by its unlearned skills. However, since the Extended Quantization Model considers a continuous skill space, there will be a gap due to discretization.

We first prove a lower bound for $x^*_\tau$ as a function of $N$, ensuring that $x^*_\tau$ grows strictly monotonically with $N$. The lemma also guarantees the asymptotic relationship $\text{Approx}=\frac{(x^*_{\tau})^{1-\alpha_{\tau}}}{\alpha_{\tau}-1}(1+o_N(1))$.

\begin{lemma}\label{thm:propertyx}
Consider the Extended Quantization Model in \Cref{sec:theory_quant} (Problem~\ref{eq:quant_opt}):
\begin{equation}
\begin{aligned}
    \min_{\mathbf{x}} \quad & L(\mathbf{x}) = \sum_{i=1}^K h_i c_i x_i^{-b_i} \\
    \text{s.t.} \quad & \sum_{i=1}^K (x_i - H) \le N - H, \\
    & x_i \ge H, \quad \forall i \in \{1, \dots, K\},
\end{aligned}
\end{equation}
where $h_i, c_i, b_i > 0$ for all $i$. Let $b_{\min} = \min_{1 \le i \le K} b_i$. As $N \to \infty$, the optimal solution  $x^*$ satisfies the asymptotic lower bound:
$$x_{\tau}^* = \Omega\left( N^{\frac{b_{\min} + 1}{b_{\tau} + 1}} \right).$$
\end{lemma}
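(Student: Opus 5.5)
The plan is to work directly from the KKT conditions of Problem~\eqref{eq:quant_opt}. The problem is convex: the objective $\sum_i h_i c_i x_i^{-b_i}$ is strictly convex and decreasing in each $x_i$ on $x_i\ge H>0$, and the constraints are linear. Hence a unique optimum exists, and the capacity constraint is tight, $\sum_i x_i^* = N+(K-1)H$.

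With multiplier $\lambda\ge 0$ for the capacity constraint and $\mu_i\ge 0$ for $x_i\ge H$, stationarity gives
\[
h_i c_i b_i (x_i^*)^{-b_i-1} = \lambda - \mu_i ,
\]
with $\mu_i=0$ whenever $x_i^*>H$. Tightness forces at least one coordinate to be large, namely $\max_j x_j^* \ge (N+(K-1)H)/K$. Let $j$ be such a coordinate. For large $N$ it is interior, so
\[
\lambda = h_j c_j b_j (x_j^*)^{-b_j-1} \le h_j c_j b_j \left(\tfrac{N}{K}\right)^{-(b_j+1)} \le C\, N^{-(b_{\min}+1)} .
\]
The last inequality uses $N/K\ge 1$ and $b_j\ge b_{\min}$. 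The constant $C$ depends on $K$, on $\max_j h_j c_j b_j$, and on the $b$'s.

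Next I transfer this to domain $\tau$. For any $\tau$, stationarity gives $h_\tau c_\tau b_\tau (x_\tau^*)^{-b_\tau-1} = \lambda-\mu_\tau \le \lambda$ in every case. Rearranging,
\[
x_\tau^* \ge \left(\frac{h_\tau c_\tau b_\tau}{\lambda}\right)^{\frac{1}{b_\tau+1}} \ge \left(\frac{h_\tau c_\tau b_\tau}{C}\right)^{\frac{1}{b_\tau+1}} N^{\frac{b_{\min}+1}{b_\tau+1}} .
\]
This is the claimed $\Omega$ bound. The hidden constant depends on $h_\tau\ge\lh$ (Assumption~\ref{assm:d2}), on $c_\tau$, on $b_\tau$, and on $K$. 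This bound also shows that the boundary case $x_\tau^*=H$ eventually cannot occur.

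I expect the main obstacle to be bounding $\lambda$ uniformly, since the large coordinate $j$ is not known in advance. The device is to bound $(N/K)^{-(b_j+1)}$ by the worst exponent $b_{\min}$, which requires $N\ge K$.

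A secondary point is that the constants depend on $h$. For uniformity over mixtures I will invoke $h_i\ge\lh$ and $h_i\le 1$, so that all constants depend only on $\alpha$, $K$, and $\lh$.
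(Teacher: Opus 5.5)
Your argument is correct, and it takes a different, more careful route than the paper's.

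\textbf{The paper's route.} The paper asserts without proof that for large $N$ every constraint $x_i\ge H$ is inactive. It then uses the common multiplier to write $x_i^*=\bigl(b_iA_i/(b_\tau A_\tau)\bigr)^{1/(b_i+1)}(x_\tau^*)^{(b_\tau+1)/(b_i+1)}$, substitutes this into the tight budget constraint, and keeps only the terms with $b_i=b_{\min}$, since these dominate as $x_\tau^*\to\infty$. This gives more than the lemma states: the asymptotic equivalence $x_\tau^*\sim c\,N^{\frac{b_{\min}+1}{b_\tau+1}}$ with an explicit leading constant $c>0$. However, the result holds only pointwise in $h$, and it rests on the unproved claim that every $x_i^*\to\infty$, which is essentially what is being shown.

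\textbf{Your route.} You use pigeonhole to obtain one interior coordinate of size at least $N/K$, which bounds $\lambda\le CN^{-(b_{\min}+1)}$. You then pass to $\tau$ via $h_\tau c_\tau b_\tau(x_\tau^*)^{-b_\tau-1}=\lambda-\mu_\tau\le\lambda$, which holds whether or not $\tau$'s constraint is active. This yields only the one-sided rate, but it has three advantages:
\begin{itemize}
\item It is non-asymptotic.
\item Its constants are uniform over mixtures with $h_i\ge\lh$. The later uses in \Cref{thm:d4}, \Cref{thm:d5} and the main theorem need exactly this, since they claim $h$-independent constants.
\item It proves, rather than assumes, that every $x_i^*$ eventually exceeds $H$. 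The discretization lemma inside the proof of \Cref{thm:d4} relies on this.
\end{itemize}

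\textbf{Matching upper rate.} Your device also gives the upper bound if you want it. Take an index $i$ with $b_i=b_{\min}$. Then $x_i^*\le N$, so $\lambda\ge h_ic_ib_{\min}N^{-(b_{\min}+1)}$. Once $\tau$ is interior, this yields $x_\tau^*=O\bigl(N^{\frac{b_{\min}+1}{b_\tau+1}}\bigr)$, recovering the $\Theta$ rate, again uniformly in $h$.
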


\begin{proof}
Let $A_i = h_i c_i > 0$ for all $i \in \{1, \dots, K\}$. Since $A_i > 0$ and $b_i > 0$, the objective function $L(\mathbf{x})$ is strictly monotonically decreasing with respect to each variable $x_i$. Consequently, to minimize the objective function, the variables $x_i$ must take the largest possible values permitted by the feasible region. This implies that the sum constraint must be active at the optimal solution, yielding the equality $\sum_{i=1}^K x_i = N + (K-1)H$. Furthermore, as the total available resource $N$ approaches infinity, the optimal values $x_i^*$ will also approach infinity. Thus, for sufficiently large $N$, the lower bound constraints $x_i \ge H$ become strictly inactive and can be omitted from the asymptotic analysis.

We proceed by applying the method of Lagrange multipliers. The Lagrangian associated with the equality constraint is given by
$$\mathcal{L}(\mathbf{x}, \lambda) = \sum_{i=1}^K A_i x_i^{-b_i} + \lambda \left( \sum_{i=1}^K x_i - N - (K-1)H \right),$$
where $\lambda > 0$ is the Lagrange multiplier. Taking the partial derivative of $\mathcal{L}$ with respect to $x_i$ and equating it to zero yields the first-order necessary conditions for optimality:
$$\frac{\partial \mathcal{L}}{\partial x_i} = -b_i A_i x_i^{-(b_i + 1)} + \lambda = 0, \quad \forall i \in \{1, \dots, K\}.$$
Rearranging this expression, we obtain a relationship between the optimal variable $x_i$ and the multiplier $\lambda$:
$$\lambda = b_i A_i x_i^{-(b_i + 1)}.$$
Since $\lambda$ is a global constant across all dimensions, we can equate the expressions for an arbitrary index $i$ and the specific index $\tau$, yielding
$$b_i A_i x_i^{-(b_i + 1)} = b_{\tau} A_{\tau} x_{\tau}^{-(b_{\tau} + 1)}.$$
Solving this equation for $x_i$ in terms of $x_{\tau}$, we find
$$x_{i} = \left( \frac{b_i A_i}{b_{\tau} A_{\tau}} \right)^{\frac{1}{b_i + 1}} x_{\tau}^{\frac{b_{\tau} + 1}{b_i + 1}}.$$
Substituting this relationship back into the active resource constraint gives
$$\sum_{i=1}^K \left( \frac{b_i A_i}{b_{\tau} A_{\tau}} \right)^{\frac{1}{b_i + 1}} x_{\tau}^{\frac{b_{\tau} + 1}{b_i + 1}} = N + (K-1)H.$$
We now analyze the asymptotic behavior of this equation as $N \to \infty$. On the right-hand side, the constant term $(K-1)H$ becomes negligible, so the right-hand side is asymptotically equivalent to $N$. On the left-hand side, we have a sum of fractional powers of $x_{\tau}$. As $N \to \infty$ implies $x_{\tau} \to \infty$, the behavior of the sum is completely dominated by the term with the highest exponent. The exponent of $x_{\tau}$ for the $i$-th term is $\frac{b_{\tau} + 1}{b_i + 1}$. This exponent is maximized when its denominator, $b_i + 1$, is minimized, which occurs exactly when $b_i = b_{\min} = \min_{1 \le j \le K} b_j$.

Let $\mathcal{I}_{\min} = \{ i \mid b_i = b_{\min} \}$ be the index set of all terms achieving this minimum exponent. Extracting these dominant terms, we establish the asymptotic equivalence
$$\sum_{i \in \mathcal{I}_{\min}} \left( \frac{b_{\min} A_i}{b_{\tau} A_{\tau}} \right)^{\frac{1}{b_{\min} + 1}} x_{\tau}^{\frac{b_{\tau} + 1}{b_{\min} + 1}} \sim N.$$
Letting $C = \sum_{i \in \mathcal{I}_{\min}} \left( \frac{b_{\min} A_i}{b_{\tau} A_{\tau}} \right)^{\frac{1}{b_{\min} + 1}}$, which is a strictly positive constant, the relation simplifies to
$$C x_{\tau}^{\frac{b_{\tau} + 1}{b_{\min} + 1}} \sim N.$$
Solving this asymptotic equivalence for $x_{\tau}$ yields
$$x_{\tau} \sim \left( \frac{1}{C} \right)^{\frac{b_{\min} + 1}{b_{\tau} + 1}} N^{\frac{b_{\min} + 1}{b_{\tau} + 1}}.$$
This demonstrates that the growth rate of $x_{\tau}^*$ is proportional to $N^{\frac{b_{\min} + 1}{b_{\tau} + 1}}$. Therefore, we conclude that the optimal solution $x_{\tau}^*$ satisfies the strict asymptotic lower bound $x_{\tau}^* = \Omega\left( N^{\frac{b_{\min} + 1}{b_{\tau} + 1}} \right)$, completing the proof.
\end{proof}

We are now ready to prove \Cref{thm:d4}. We prove the equivalence of $\text{Approx}$ and the domain loss under the Extended Quantization Model, and we bound the discretization gap as follows.

    By setting $c_i=\frac{1}{\alpha_i-1}$, Problem~\ref{eq:quant_opt} can be written as:
    \begin{align*}
    \min_{x} \quad & L=\sum_{i=1}^K h_i \frac{1}{\alpha_i-1} x_i^{1-\alpha_i} \\
    \text{s.t.} \quad & \sum_{i=1}^K (x_i - H) \le N - H, \\
    & x_i \ge H, \quad \forall i.
\end{align*}
    Let $x^*$ be the optimal continuous solution of the optimization problem above, and let $x'_{\tau}:=\max\{k:r_{\Gamma(\tau,k)}\le N\}$ be the number of eigenvalues from domain $\tau$ that are selected by the sketching operator $\mathbf{S}$ (with the shared head counted as well). We are to bound the difference in $x^*_{\tau}$ and $x'_{\tau}$ as $|x^*_{\tau}-x'_{\tau}|\le K$, which is tight enough to obtain the conclusion in this theorem.
    \begin{lemma}
    	For any domain $\tau$, $|x^*_{\tau}-x'_{\tau}|\le K$, where $K$ is the number of domains.
    \end{lemma}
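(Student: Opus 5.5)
We compare the continuous optimality conditions for Problem~\ref{eq:quant_opt} with the discrete greedy rule implemented by the top-$N$ sketch $\mathbf{S}$.

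\emph{Continuous side.} By \Cref{thm:propertyx}, for $N$ large the constraints $x_i\ge H$ are inactive and the budget constraint is tight. With $c_i=1/(\alpha_i-1)$, $b_i=\alpha_i-1$, the KKT conditions give a common multiplier
\[
h_i (x_i^*)^{-\alpha_i}=\lambda \quad\text{for all } i,
\qquad \sum_{i=1}^K x_i^* = N+(K-1)H .
\]
So $x_i^*$ is exactly the point where the continuous eigenvalue profile $k\mapsto h_i k^{-\alpha_i}$ of domain $i$ crosses the common level $\lambda$.

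\emph{Discrete side.} $\mathbf{S}$ selects the $H$ shared head directions together with the $N-H$ largest tail eigenvalues $h_i l^{-\alpha_i}$, $l>H$, over all domains. Let $\mu$ be the $(N-H)$-th largest tail eigenvalue. Since each domain's tail sequence is decreasing in $l$, the selected set in domain $i$ is an initial segment $\{H+1,\dots,x'_i\}$, where $x'_i$ is the largest $l$ with $h_i l^{-\alpha_i}\ge\mu$, up to tie-breaking at the level $\mu$. Counting indices as in the definition of $x'_\tau$ (head included), this gives
\[
\sum_i (x'_i-H)=N-H ,
\]
the same budget identity as on the continuous side. For any level $\nu$, define $g_i(\nu)$ as the unique solution of $h_i g^{-\alpha_i}=\nu$. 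Then $x^*_i=g_i(\lambda)$, and $x'_i=\lfloor g_i(\mu)\rfloor$ up to ties, so $x'_i\in(g_i(\mu)-1,\,g_i(\mu)]$.

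\emph{Comparing the two levels.} Each $g_i$ is strictly decreasing in $\nu$. Suppose $\mu\ge\lambda$. Then $g_i(\mu)\le g_i(\lambda)=x^*_i$ for all $i$, so $x'_i\le x^*_i$ for every $i$. Combined with $\sum_i x'_i=\sum_i x^*_i$, this gives, for any fixed $\tau$,
\[
x^*_\tau-x'_\tau=\sum_{i\ne\tau}(x'_i-x^*_i)\le 0 .
\]
Therefore $x^*_\tau=x'_\tau$ up to the floor slack of each term. More carefully, $x'_i\ge g_i(\mu)-1$ and the sums are equal, so
\[
\sum_i\big(x^*_i-g_i(\mu)\big)\le K ,
\]
and since every summand is nonnegative, each one is at most $K$. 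Hence $|x^*_\tau-x'_\tau|\le |x^*_\tau-g_\tau(\mu)|+1\le K+1$. To sharpen this to $K$, I would note that $\sum_i x'_i$ is an integer and use the exact floor slack: it is $<1$ per term, so the total slack is $<K$, and integrality of $x'_\tau$ then yields $\le K$. The case $\mu<\lambda$ is symmetric, with all inequalities reversed and $x'_i\le g_i(\mu)$ used in place of the lower bound.

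\emph{Main obstacle.} The hard part is the bookkeeping. First, the head offset $H$ and the index shift in $x'_\tau$ must be matched so that the discrete and continuous budgets coincide exactly. Second, ties in eigenvalues at the level $\mu$, where the sketch's tie-breaking can shift one index between domains, must be handled. Third, the off-by-one between $K$ and $K+1$ must be controlled. I would handle ties by perturbation: break ties with an arbitrary consistent order, which changes each $x'_i$ by at most one index at the level $\mu$ and still keeps the total slack strictly below $K$.
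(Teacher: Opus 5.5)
There is a genuine gap: the case that carries the lemma is the one you dismiss as ``symmetric'', and it is not symmetric. The case $\mu\ge\lambda$ that you work out is degenerate. The number of tail eigenvalues $\ge\lambda$ is $\sum_i(\lfloor x_i^*\rfloor-H)\le N-H$, so $\mu\ge\lambda$ forces every $x_i^*$ to be an integer, and your first line then already gives $x'_\tau=x^*_\tau$ exactly. The generic case is $\mu<\lambda$, and there the floor breaks the symmetry. From $g_i(\mu)>x_i^*$ you cannot conclude $x_i'\ge x_i^*$: e.g.\ $x_i^*=5.5$ and $g_i(\mu)=5.7$ give $x_i'=5$. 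Likewise, $x_i'\le g_i(\mu)$ together with $\sum_i x_i'=\sum_i x_i^*$ yields only the useless bound $\sum_i(g_i(\mu)-x_i^*)\ge 0$; an upper bound requires the lower bound $x_i'\ge g_i(\mu)-1$. Finally, the sharpening from $K+1$ to $K$ via integrality of $x'_\tau$ is invalid. Since $x^*_\tau$ is not an integer, $|x^*_\tau-x'_\tau|<K+1$ does not imply $|x^*_\tau-x'_\tau|\le K$.

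The missing idea is the one the paper's proof rests on: every tail eigenvalue $\ge\lambda$ is selected by $\mathbf{S}$. There are at most $N-H$ of them, and every other tail eigenvalue is strictly smaller. Hence $x_i'\ge\lfloor x_i^*\rfloor>x_i^*-1$ for all $i$, and the budget identity gives $-1<x'_\tau-x^*_\tau=\sum_{i\ne\tau}(x_i^*-x_i')<K-1$, which proves the lemma. The paper phrases this as: the set $U$ of eigenvalues $\ge\lambda^*$ has size in $(N-K,N]$, so $\mathbf{S}$ takes all of $U$ plus at most $K-1$ further eigenvalues.

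You can also repair your own framework. For $\mu<\lambda$ you have $0\le g_\tau(\mu)-x^*_\tau\le\sum_i(g_i(\mu)-x_i^*)=\sum_i(g_i(\mu)-x_i')\le K$. Both $x^*_\tau$ and $x'_\tau$ lie below $g_\tau(\mu)$, so $x'_\tau-x^*_\tau\in[g_\tau(\mu)-x^*_\tau-1,\;g_\tau(\mu)-x^*_\tau]\subseteq[-1,K]$. The two deviations partially cancel rather than add, so no integrality argument is needed. Thresholding at the continuous level $\lambda$ rather than at the discrete cutoff $\mu$ also makes tie-breaking irrelevant, so your perturbation step can be dropped.
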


\begin{proof}
We first characterize the optimal continuous solution $x^*$ via KKT conditions. By the KKT conditions, there exist multipliers $\lambda \ge 0$ and $\mu_i \ge 0$ such that:
    \begin{equation}\label{eq:kkt_sum}
    \sum_{i=1}^K x^*_i = N + (K-1)H,
    \end{equation}
    \begin{equation} \label{eq:kkt_deriv}
    \frac{h_i}{(x^*_i)^{\alpha_i}} - \mu_i = \lambda, \quad \forall i,    
    \end{equation}
    \begin{equation} \label{eq:kkt_slack}
    \mu_i(x^*_i - H) = 0, \quad \forall i. 
    \end{equation}
    Note that equality holds in \eqref{eq:kkt_sum} because the objective function strictly decreases as $x_i$ increases.
    
    By Lemma~\ref{thm:propertyx}, for sufficiently large $N$, we have $x_i^* > H$ for all $i \in [K]$, which by \Cref{eq:kkt_slack} implies that $\mu_i = 0$ for all $i \in [K]$. As a result, by \Cref{eq:kkt_deriv}, we have 
    $$
    \frac{h_i}{(x^*_i)^{\alpha_i}}= \lambda, \quad \forall i
    $$
	that is,
	$$
    x^*_i = \left(\frac{h_i}{\lambda} \right)^{\frac{1}{\alpha_i}}, \quad \forall i.
    $$
Then by the capacity constraint \Cref{eq:kkt_sum}, we easily find the optimal $x^*$ by finding the $\lambda$ that satisfies the capacity constraint. Define $S(\lambda) := \sum_{i \in [K]} \left( \frac{h_i}{\lambda} \right)^{\frac{1}{\alpha_i}}$. It is easy to verify that the equation $S(\lambda) = N + (K-1)H$ has a unique solution $\lambda^*$. Then for all $i \in [K]$, 
$$h_i (x^*_i)^{-\alpha_i} = \lambda^*.$$

Next we bound the difference between $x^*_\tau$ and $x'_\tau$, where $x'_\tau$ represents the number of eigenvalues from domain $\tau$ that are selected by the sketching operator $\mathbf{S}$ (with the shared head counted). Recall that the sketching operator $\mathbf{S}$ picks the largest $N$ eigenvalues and the $k$-th largest eigenvalue from domain $i$ equals $h_i k^{-\alpha_i}$. We connect $x^*_\tau$ and $x'_\tau$ by picking eigenvalues instead by a common threshold $\lambda^* = h_i (x^*_i)^{-\alpha_i}$. Define 
$$
U = \{j : \Hm_{j,j} \ge \lambda^*\}.
$$
Then $U$ must pick the first $y_i = \lfloor x_i^* \rfloor$ eigenvalues from domain $i$ for all $i$ (with the shared head counted). Since $x_i^* - 1 < y_i \le x_i^*$ and $H + \sum_{i=1}^K (x^*_i - H) = N$, we must have the following for the size of $U$:
$$ N - K < |U| = \left(H + \sum_{i=1}^K (y_i-H) \right) \le N. $$
Thus, the threshold $\lambda^* = h_i (x^*_i)^{-\alpha_i}$ selects at least $N - K+1$ eigenvalues, and these eigenvalues must be the largest ones. This implies that the sketching operator $\mathbf{S}$ must select all eigenvalues in $U$, while adding at most $K-1$ additional eigenvalues from any given domain $\tau$. Therefore, we have: 
$$
x_\tau^* - K \le y_\tau \le x'_\tau \le y_\tau + (K-1) \le x_\tau^* + K.
$$

\end{proof}

We can now bound the difference between $\text{Approx}$ and $\frac{(x^*_\tau)^{1-\alpha_\tau}}{\alpha_\tau - 1}$. 
 By $\text{Approx} = \sum_{k:r_k>N} a_k=\sum\limits_{j>x'_{\tau}}j^{-\alpha_{\tau}}$, we have

$$ \sum_{j > y_\tau + K} j^{-\alpha_\tau} \le \text{Approx} \le \sum_{j > y_\tau} j^{-\alpha_\tau} $$

To bound these discrete sums, define the continuous integral tail $F_\tau(z) = \int_z^\infty x^{-\alpha_\tau} \df x = \frac{z^{1-\alpha_\tau}}{\alpha_\tau - 1}$. Bounding the sums with integrals gives:
$$ F_\tau(y_\tau + K + 2) \le \text{Approx} \le F_\tau(y_\tau) $$

Because $y_\tau \le x_\tau^* < y_\tau + 1$, the deviation between the discrete approximation and the continuous ideal $F_\tau(x_\tau^*)$ is constrained by the maximum index gap. Therefore, we have:
\begin{align*}
    \left| \text{Approx} - \frac{(x^*_\tau)^{1-\alpha_\tau}}{\alpha_\tau - 1} \right| &\le \frac{K+2}{(x^*_\tau)^{\alpha_\tau}} \\
    &\le \frac{C_{13}}{N^{\alpha_{\min}}}
\end{align*}
where the final inequality follows from Lemma~\ref{thm:propertyx}, and $C_{13}$ is a constant depending only on $\alpha$. This completes the proof of \Cref{thm:d4}.

    %Now we begin to proof Theorem~\ref{thm:d4}.

\subsection{The \text{Bias} Term}

In this section, we bound the bias term $\text{Bias}=\sum\limits_{k:\, r_k\le N} \exp(-2\lambda_kD\eta_0)a_k$ where \begin{equation*}
    a_k=\begin{cases}
        l^{-\alpha_{\tau}} & k=\Gamma(\tau,l) \\
        0 & \text{otherwise.}
    \end{cases}
    \end{equation*} in the expected test loss for an arbitrary domain $\tau$.

\begin{theorem}\label{thm:d5}
   Consider the projected linear regression model defined in \Cref{app:theory_des}. For any model size $N$, data size $D$, any mixture $h \in \Delta^{K-1} $, and any domain $\tau \in [K]$, when the model is trained with mixture $h$, the bias term in the expected loss for domain $\tau$ in \Cref{thm:c3} has
    \begin{align*}
        \text{Bias} &= \frac{\Gamma\left(1-\frac{1}{\alpha_{\tau}}\right)}{\alpha_{\tau}(2\eta_0)^{1-\frac{1}{\alpha_{\tau}}}} \frac{1}{\left(Dh_{\tau}\right)^{1-\frac{1}{\alpha_{\tau}}}} + \mathcal{E},
    \end{align*}
    where the error $\mathcal{E}$ is bounded by
    \begin{align*}
        |\mathcal{E}| &\le \frac{C_9}{D}+\frac{C_{10}}{N^{\alpha_{\min}\left(1-\frac{1}{\alpha_{\tau}}\right)}},
    \end{align*}
    where $C_9$ and $C_{10}$ are constants that only depend on $\alpha$ and $\lh$, but not $h, N, D$.
\end{theorem}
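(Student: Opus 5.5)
The plan is to reduce $\text{Bias}$ to a one-dimensional sum over the eigen-directions of domain $\tau$ and then compare that sum with an integral. Since $a_k=0$ unless $k=\Gamma(\tau,l)$, and for such $k$ we have $\lambda_k=h_\tau l^{-\alpha_\tau}$, only domain $\tau$'s own tail indices contribute. (For the shared head, $a_k=0$ under the flattened-index convention of Theorem~\ref{thm:c3}, so it plays no role.) With $x'_\tau$ the number of domain-$\tau$ eigenvalues selected by $\Sk$, as in the proof of Theorem~\ref{thm:d4}, this gives
\begin{equation*}
\text{Bias}=\sum_{l\le x'_\tau} l^{-\alpha_\tau}\exp\left(-2\eta_0 D h_\tau l^{-\alpha_\tau}\right),
\end{equation*}
with the sum over the tail indices of domain $\tau$ up to $x'_\tau$.

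Set $T:=2\eta_0 D h_\tau$ and $g(l):=l^{-\alpha_\tau}e^{-T l^{-\alpha_\tau}}$. The main term comes from $\int_0^\infty g(l)\,\df l$. Substituting $u=T l^{-\alpha_\tau}$, so that $l=(T/u)^{1/\alpha_\tau}$ and $\df l=-\frac{1}{\alpha_\tau}T^{1/\alpha_\tau}u^{-1/\alpha_\tau-1}\df u$, turns this integral into
\begin{equation*}
\frac{T^{-(1-1/\alpha_\tau)}}{\alpha_\tau}\int_0^\infty u^{-1/\alpha_\tau}e^{-u}\,\df u=\frac{\Gamma(1-1/\alpha_\tau)}{\alpha_\tau (2\eta_0)^{1-1/\alpha_\tau}(Dh_\tau)^{1-1/\alpha_\tau}}.
\end{equation*}
This is exactly the claimed leading term, and it is finite because $\alpha_\tau>1$.

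The error $\mathcal{E}$ splits into two pieces.

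\textbf{(i) Discretization error.} We compare $\sum_{l\ge1} g(l)$ with $\int_0^\infty g$. The function $g$ is unimodal: it increases and then decreases, with maximum at $l^*=(T)^{1/\alpha_\tau}$ and maximal value $\sup g=e^{-1}/T$. The standard bound for a unimodal function gives $|\sum_{l\ge1} g(l)-\int_0^\infty g|\le 2\sup g+\int_0^1 g$. The piece $\int_0^1 g\le e^{-T}\cdot$const is negligible, and $2\sup g\le 2e^{-1}/(2\eta_0 D h_\tau)\le C_9/D$ using $h_\tau\ge\lh$ from Assumption~\ref{assm:d2}. Indices below $H$ or other offsets in the index range only shift finitely many terms, each bounded by $\sup g=O(1/D)$, so they are absorbed into $C_9/D$.

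\textbf{(ii) Truncation error.} This is the dropped tail $\sum_{l>x'_\tau} g(l)\le\sum_{l>x'_\tau} l^{-\alpha_\tau}\min\left(1,T l^{-\alpha_\tau}\right)$. Using only $g(l)\le l^{-\alpha_\tau}$ gives $O((x'_\tau)^{1-\alpha_\tau})$, which is too weak. Interpolating via $e^{-s}\le 1$ and $1-e^{-s}$ instead, I will use the bound $g(l)\le l^{-\alpha_\tau}\cdot\min(1,(T l^{-\alpha_\tau})^{-\beta})$ for a suitable exponent. Two inputs drive this step: $x'_\tau\ge x^*_\tau-K=\Omega(N^{\alpha_{\min}/\alpha_\tau})$, from Lemma~\ref{thm:propertyx} with $b_i+1=\alpha_i$, and Assumption~\ref{assm:c6}, $N^{\alpha_i}\ge D^{1+\varepsilon}$. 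Together they give $T (x'_\tau)^{-\alpha_\tau}\lesssim D N^{-\alpha_{\min}}\to0$.

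Because the tail lies in the regime $T l^{-\alpha_\tau}\le 1$, the cleanest route is to bound the tail of the integral representation directly:
\begin{equation*}
\int_{x'_\tau}^\infty g=\frac{T^{-(1-1/\alpha_\tau)}}{\alpha_\tau}\int_0^{T(x'_\tau)^{-\alpha_\tau}}u^{-1/\alpha_\tau}e^{-u}\,\df u\le \frac{T^{-(1-1/\alpha_\tau)}}{\alpha_\tau-1}\bigl(T(x'_\tau)^{-\alpha_\tau}\bigr)^{1-1/\alpha_\tau}.
\end{equation*}
The right-hand side equals $\frac{(x'_\tau)^{1-\alpha_\tau}}{\alpha_\tau-1}$. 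Substituting $x'_\tau\gtrsim N^{\alpha_{\min}/\alpha_\tau}$ yields $C_{10}N^{-\alpha_{\min}(1-1/\alpha_\tau)}$, which is exactly the stated rate, with constants depending only on $\alpha$ and $\lh$.

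The main obstacle is keeping the constants uniform in $h$. Both the lower bound on $x'_\tau$ and the $1/T$ bound require $h_\tau\ge\lh$, so the implicit constant in Lemma~\ref{thm:propertyx} must be tracked as depending only on $\lh$ and $\alpha$. Assembling $\text{Bias}=\int_0^\infty g-\int_{x'_\tau}^\infty g+(\text{sum}-\text{integral})$ then gives the theorem.
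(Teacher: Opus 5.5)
Your proof is correct and follows essentially the paper's route: the full Gamma integral gives the main term, and the unimodality/total-variation bound $2\sup g = 1/(e\eta_0 D h_\tau)$ controls discretization. The truncated tail is bounded by $(x'_\tau)^{1-\alpha_\tau}/(\alpha_\tau-1)$ via Lemma~\ref{thm:propertyx}; using the actual cutoff $x'_\tau \ge x^*_\tau - K$ instead of the paper's $x^*_\tau$, and tracking uniformity in $h$ through $\lh$, are small improvements. One correction: the bound $g(l)\le l^{-\alpha_\tau}$ is not ``too weak'', since your integral computation returns exactly $(x'_\tau)^{1-\alpha_\tau}/(\alpha_\tau-1)$, which already gives the stated $N^{-\alpha_{\min}(1-1/\alpha_\tau)}$ rate, so you can drop the interpolation detour.
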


\begin{proof}
    We decompose the bias term for domain $\tau$ into two components: the shared head ($k \le H$) and the disjoint tail ($H < k \le x_{\tau}^*$). By definition, the eigenvalue for a tail component of domain $\tau$ in the mixture covariance is $\lambda_k^{(\tau)} = h_{\tau}k^{-\alpha_{\tau}}$. Thus, we can write:
    \begin{align} \label{eq:bias_decomp}
        \text{Bias} &= \sum_{k=1}^H k^{-\alpha_{\tau}} \exp\left(-2D\eta_0\sum_{j=1}^K h_j\lambda^{(j)}_k\right) + \sum_{k=H+1}^{x_{\tau}^*} k^{-\alpha_{\tau}} \exp\left(-2D\eta_0 h_{\tau}k^{-\alpha_{\tau}}\right).
    \end{align}
    
    \textbf{Step 1: Simplify Notation and Bound the Shared Head} \\
    For the shared head ($k \le H$), the mixture eigenvalue is bounded below by $H^{-\alpha_{\max}}$. Let $A$ denote the upper bound for this head term:
    \begin{align*}
        A := \sum_{k=1}^H k^{-\alpha_{\tau}} \exp(-2D\eta_0 H^{-\alpha_{\max}}) \le H \exp(-2D\eta_0 H^{-\alpha_{\max}}).
    \end{align*}
    To simplify the tail term, we introduce the constant $c := 2h_{\tau}D\eta_0$. Let $B$ represent the disjoint tail summation:
    \begin{align*}
        B := \sum_{k=H+1}^{x_{\tau}^*} k^{-\alpha_{\tau}} \exp(-c k^{-\alpha_{\tau}}).
    \end{align*}
    It trivially follows that $B \le \text{Bias} \le A + B$, meaning the gap is bounded by $|\text{Bias} - B| \le A$.
    
    \textbf{Step 2: Convert the Discrete Sum to a Continuous Integral} \\
    We approximate $B$ using the continuous function $f(x) := x^{-\alpha_{\tau}} \exp(-c x^{-\alpha_{\tau}})$. 
    Taking the derivative $f'(x)$, we find that $f(x)$ increases, peaks, and then decreases, achieving its absolute maximum at $x_0 = c^{\frac{1}{\alpha_{\tau}}}$. The approximation error between the discrete sum and the continuous integral is strictly bounded by the total variation of $f(x)$ across the interval:
    \begin{align*}
        \left| B - \int_{H}^{x_{\tau}^*} f(x) \df x \right| \le \int_{H}^{x_{\tau}^*} |f'(x)| \df x \le 2f(x_0) = \frac{2}{c \cdot e} = \frac{1}{eh_{\tau}D\eta_0}.
    \end{align*}
    
    \textbf{Step 3: Evaluate the Continuous Integral} \\
    Applying the change of variables $p = c x^{-\alpha_{\tau}}$, we have $\df p = -\alpha_{\tau} c x^{-\alpha_{\tau}-1} \df x$. The integral transforms into a lower incomplete Gamma function:
    \begin{align*}
        \int_{H}^{x_{\tau}^*} f(x) \df x &= \frac{1}{\alpha_{\tau} c^{1-\frac{1}{\alpha_{\tau}}}} \int_{p_{\min}}^{p_{\max}} p^{-\frac{1}{\alpha_{\tau}}}\exp(-p) \df p,
    \end{align*}
    where the integration limits are $p_{\min} = c (x_{\tau}^*)^{-\alpha_{\tau}}$ and $p_{\max} = c H^{-\alpha_{\tau}}$.
    This integral can be evaluated as the complete Gamma function minus the two truncation tails:
    \begin{align*}
        \int_{p_{\min}}^{p_{\max}} p^{-\frac{1}{\alpha_{\tau}}}\exp(-p) \df p = \Gamma\left(1-\frac{1}{\alpha_{\tau}}\right) - \underbrace{\int_0^{p_{\min}} p^{-\frac{1}{\alpha_{\tau}}}\exp(-p) \df p}_{\text{Lower Tail}} - \underbrace{\int_{p_{\max}}^{\infty} p^{-\frac{1}{\alpha_{\tau}}}\exp(-p) \df p}_{\text{Upper Tail}}.
    \end{align*}
    
    \textbf{Step 4: Bound the Truncation Tails} \\
    For the lower tail, since $\exp(-p) \le 1$:
    \begin{align*}
        \int_0^{p_{\min}} p^{-\frac{1}{\alpha_{\tau}}}\exp(-p) \df p \le \int_0^{p_{\min}} p^{-\frac{1}{\alpha_{\tau}}} \df p = \frac{\alpha_{\tau}}{\alpha_{\tau}-1} p_{\min}^{1-\frac{1}{\alpha_{\tau}}} = \frac{\alpha_{\tau}}{\alpha_{\tau}-1} \left( \frac{c}{(x_{\tau}^*)^{\alpha_{\tau}}} \right)^{1-\frac{1}{\alpha_{\tau}}}.
    \end{align*}
    For the upper tail, since $p_{\max} > 1$, we have $p^{-\frac{1}{\alpha_{\tau}}} \le p_{\max}^{-\frac{1}{\alpha_{\tau}}}$ for all $p \ge p_{\max}$:
    \begin{align*}
        \int_{p_{\max}}^{\infty} p^{-\frac{1}{\alpha_{\tau}}}\exp(-p) \df p \le p_{\max}^{-\frac{1}{\alpha_{\tau}}} \int_{p_{\max}}^{\infty} \exp(-p) \df p = p_{\max}^{-\frac{1}{\alpha_{\tau}}} \exp(-p_{\max}) = \frac{H}{c^{\frac{1}{\alpha_{\tau}}}} \exp\left(-\frac{c}{H^{\alpha_{\tau}}}\right).
    \end{align*}
    
    \textbf{Step 5: Combine Errors to Bound $\mathcal{E}$} \\
    Let $M$ be the principal order (the main term) derived from the complete Gamma function:
    \begin{align*}
        M := \frac{\Gamma\left(1-\frac{1}{\alpha_{\tau}}\right)}{\alpha_{\tau} c^{1-\frac{1}{\alpha_{\tau}}}} = \frac{\Gamma\left(1-\frac{1}{\alpha_{\tau}}\right)}{\alpha_{\tau}(2\eta_0)^{1-\frac{1}{\alpha_{\tau}}}} \frac{1}{\left(Dh_{\tau}\right)^{1-\frac{1}{\alpha_{\tau}}}}.
    \end{align*}
    The total error $\mathcal{E} = \text{Bias} - M$ is bounded by the sum of all accumulated discrepancies: the head error ($A$), the discrete-to-continuous gap, and the integral truncation tails (multiplied by the prefactor $\frac{1}{\alpha_{\tau} c^{1-1/\alpha_{\tau}}}$). 
    \begin{align*}
        |\mathcal{E}| &\le A + \left| B - \int f(x)\df x \right| + (\text{Lower Tail Error}) + (\text{Upper Tail Error}) \\
        &\le H \exp(-2D\eta_0 H^{-\alpha_{\max}}) + \frac{1}{e h_{\tau}D\eta_0} + \frac{1}{\alpha_{\tau}-1} \frac{1}{(x_{\tau}^*)^{\alpha_{\tau}-1}} + \frac{H}{\alpha_{\tau} c} \exp\left(-\frac{c}{H^{\alpha_{\tau}}}\right).
    \end{align*}
    By applying the inequality $\exp(x) \ge ex$ to the exponential terms and utilizing $x_{\tau}^* = \Omega(N^{\frac{\alpha_{\min}}{\alpha_{\tau}}})$ from Lemma~\ref{thm:propertyx}, these error terms are bounded asymptotically. Gathering the constants into $C_9$ and $C_{10}$ (which depend strictly on $\alpha$ and the mixture lower bound $\underline{h}$), we obtain the final explicit bound:
    \begin{align*}
        |\mathcal{E}| \le \frac{C_9}{D} + \frac{C_{10}}{N^{\alpha_{\min}\left(1-\frac{1}{\alpha_{\tau}}\right)}}.
    \end{align*}
\end{proof}

\subsection{The Variance Term}

In this section, we analyze the variance terms $\text{Var}_1, \text{Var}_2$ for an arbitrary domain $\tau$. We provide a bound as follows. 
\begin{theorem}\label{thm:c10}
Consider the projected linear regression model defined in \Cref{app:theory_des}. For any $N,D$, any mixture $h\in\Delta^{K-1},h_i\ge \lh$, and for an arbitrary domain $\tau$, the terms $\text{Var}_1$, $\text{Var}_2$ in \Cref{thm:c3} satisfy
    \begin{align*}
        \text{Var}_1 &= h_{\tau}\sigma^2_{\tau} C_{\alpha_{\tau}} I_\gamma \eta_0^{1/\alpha_{\tau}} (Dh_{\tau})^{1/\alpha_{\tau} - 1} + \mathcal{E}_1\\
    \text{Var}_1 \le \text{Var}_2 &\le \text{Var}_1+\frac{C_{14}}{D^{1-\frac{1}{\alpha_{\tau}}+\varepsilon}}.
    \end{align*}
    where %$C_{\alpha_{\tau}}$ and $I_\gamma$ are the absolute constants defined in Lemma~\ref{lemac12} and 
    \begin{align*}
        |\mathcal{E}_1|\le C_1(Dh_{\tau})^{-2/3}+C_2DN^{1-2\alpha_{\tau}},
    \end{align*}
    and $C_{\alpha_{\tau}}, I_\gamma, C_{14}$ are constants that depend only on $\eta_0,\alpha,H$ but not on mixture $h$.
\end{theorem}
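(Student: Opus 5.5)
I would treat the two claims separately, starting with the asymptotic form of $\text{Var}_1$. Only the disjoint tail of domain $\tau$ contributes, since $a_k=0$ unless $k=\Gamma(\tau,l)$ with $l>H$. On those indices $\lambda_k = h_\tau l^{-\alpha_\tau}$ and $a_k=l^{-\alpha_\tau}$, so
\begin{equation*}
\text{Var}_1 = h_\tau\sigma_\tau^2\sum_{l=H+1}^{x'_\tau} h_\tau l^{-2\alpha_\tau}\int_0^{D\eta_0} e^{-2h_\tau l^{-\alpha_\tau}(D\eta_0-t)}\gamma(t)\,\df t .
\end{equation*}
The plan is as follows.
\begin{enumerate}
\item Exchange the sum and the integral, and write $s=D\eta_0-t$.
\item Replace the sum over $l$ by an integral over $[H,\infty)$. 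Substituting $p = 2h_\tau s\, l^{-\alpha_\tau}$ gives, for each fixed $s$, an inner integral of the form $\int_0^\infty p^{1-1/\alpha_\tau}e^{-p}\,\df p$. This is a Gamma function, as in Theorem~\ref{thm:d5}, and is proportional to $h_\tau\, (h_\tau s)^{1/\alpha_\tau-2}$.
\item Integrate against $\gamma(D\eta_0-s)$. The cosine schedule makes $\gamma$ vanish near the end of training. Rescaling $s$ by $D\eta_0$ then produces a schedule-dependent constant $I_\gamma$, which is an integral of $\gamma$ against $s^{1/\alpha_\tau-2}$. It converges because $\gamma$ vanishes quadratically at the endpoint in intrinsic time. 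Together with $C_{\alpha_\tau}$ this yields the main term $h_\tau\sigma_\tau^2 C_{\alpha_\tau}I_\gamma\eta_0^{1/\alpha_\tau}(Dh_\tau)^{1/\alpha_\tau-1}$.
\end{enumerate}

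The error $\mathcal{E}_1$ has three sources, and I would bound each one.
\begin{itemize}
\item \emph{Truncation at $l\le x'_\tau$.} The dropped tail $\sum_{l>x'_\tau}$ is at most $h_\tau l^{-2\alpha_\tau}\cdot D\eta_0\max\gamma$ summed over those $l$, which is $O(D (x'_\tau)^{1-2\alpha_\tau})$. By Lemma~\ref{thm:propertyx} and $|x'_\tau-x^*_\tau|\le K$, this is controlled by $C_2DN^{1-2\alpha_\tau}$, possibly after using $\alpha_{\min}$ bounds.
\item \emph{The lower cutoff $H$.} This contributes an exponentially small or $O(1/D)$ error.
\item \emph{The sum-to-integral (Euler--Maclaurin) gap.} I would bound it by the total variation of the summand in $l$, as in Step 2 of Theorem~\ref{thm:d5}. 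The $(Dh_\tau)^{-2/3}$ rate should come from splitting the $s$-integral at a scale $s_0$. For $s<s_0$, where the summand is not smooth on the lattice, I would bound crudely using $\gamma(D\eta_0-s)=O(s^2/(D\eta_0)^2)$ near the endpoint. For $s>s_0$ I would use the variation bound. Optimizing $s_0$ gives the exponent $2/3$.
\end{itemize}

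For $\text{Var}_2$, the inequality $\text{Var}_1\le\text{Var}_2$ is immediate termwise. For the upper bound, the extra pieces are the following.
\begin{itemize}
\item The term $\frac{C_0}{\lh}\E[\w(t)^\top\Hm\w(t)]$ is the population training excess loss, at most $L$ by Assumption~\ref{assm:c5}. Its contribution is therefore at most $\frac{C_0L}{\lh}$ times the same kernel sum without the $h_\tau\sigma_\tau^2$ factor. That is of order $(Dh_\tau)^{1/\alpha_\tau-1}$, which is not smaller than the main term.
\item The head terms $k\le H$ have $\lambda_k\ge H^{-\alpha_{\max}}$, so their kernel integral is $O(1/D)$ times bounded quantities, given the cosine decay of $\gamma$ at the end.
\end{itemize}

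To get the stated extra factor $D^{-\varepsilon}$, I would bootstrap. First apply the crude bound to show $\E[\w(t)^\top\Hm\w(t)]$ decays like $t^{-(1-1/\alpha_{\max})}$ plus approximation error. The approximation error is $\sum_{r_k>N}\lambda_k\lesssim N^{1-\alpha_{\min}}$, and it is small relative to $D^{-\varepsilon}$ by Assumption~\ref{assm:c6}. Then feed this decay back into the kernel integral. The kernel weights mainly late times $t\approx D\eta_0$, where $\E[\w^\top\Hm\w]=O(D^{-\varepsilon'})$, and this produces the factor $D^{-\varepsilon}$.

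I expect this bootstrap for $\text{Var}_2$ to be the main obstacle. It requires controlling $\E[\w(t)^\top\Hm\w(t)]$ uniformly in time through a Gr\"onwall-type self-consistent inequality, which is built from the same upper differential inequalities as in the proof of Theorem~\ref{thm:c3}, summed against $\lambda_k$. It also requires $C_0/\lh$ and the learning rate $\eta_0$ to be small enough for that inequality to close. If the inequality does not close, I would fall back on absorbing the term via a smallness assumption on $\eta_0$.
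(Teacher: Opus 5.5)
Your outline is the same as the paper's. For $\text{Var}_1$ it uses a Gamma-function main term rescaled into $I_\gamma$, a total-variation bound on the sum-to-integral gap, and separate tail integrals. For $\text{Var}_2$ it substitutes the a~priori bound $L$ of Assumption~\ref{assm:c5} once into a decay estimate for the training excess risk (Lemma~\ref{thm:c12}), then feeds that back into the kernel (Lemma~\ref{lema:c15}); the paper does this without any Gr\"onwall closure. The first step that fails is your description of the schedule. The cosine rate vanishes quadratically in the \emph{step} count, but the intrinsic time remaining after step $k$ scales like $(D-k)^3$. Hence $\gamma(D\eta_0-s)\asymp\eta_0\,(s/D\eta_0)^{2/3}$, and your bound $\gamma(D\eta_0-s)=O(s^2/(D\eta_0)^2)$ is false near $s=0$. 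The paper's key input is the global bound $\gamma(T-s)\le\eta_0\frac{\pi^2}{2}(s/T)^{2/3}$, from $x-\sin x\ge x^3/\pi^2$ and $1-\cos x\le x^2/2$ on $[0,\pi]$. This bound has three consequences. Inserted into the total-variation integral split at $s=1/h_\tau$, it gives $(Dh_\tau)^{-2/3}$ directly, with nothing to optimize. It makes the lower-cutoff and head contributions of order $(Dh_\tau)^{-2/3}$, not exponentially small or $O(1/D)$. And $I_\gamma$ is finite only because its integrand behaves like $v^{1/\alpha_\tau-4/3}$ at $0$, i.e.\ because $\alpha_\tau<3$ (Assumption~\ref{assm:c4}), not because of quadratic vanishing. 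Separately, your hedge on the truncation term is justified. Since $x'_\tau\asymp N^{\alpha_{\min}/\alpha_\tau}$, the honest error $D(x'_\tau)^{1-2\alpha_\tau}$ exceeds $DN^{1-2\alpha_\tau}$ unless $\alpha_\tau=\alpha_{\min}$. The paper reaches the stated form only by applying Lemma~\ref{lemac12} as though the sum ran to $N$.

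The second, more serious failure is the bootstrap for $\text{Var}_2$, because $\E[\w(t)^\top\Hm\w(t)]$ does not decay while the learning rate is large. Take the lower differential inequality in the proof of Theorem~\ref{thm:c3} and use that $\gamma$ is monotone. For every tail mode in the sketch this gives $y_k(t)\ge c\,\lambda_k\int_0^t e^{-2\lambda_k(t-s)}\gamma(s)\,\df s\ge\frac{c}{2}\,\gamma(t)\bigl(1-e^{-2\lambda_k t}\bigr)$, where $c>0$ is the label-noise coefficient appearing there. This is the usual SGD noise floor: $\E[\w(t)^\top\Hm\w(t)]\ge g_*>0$ on $[T/4,T/2]$, with $g_*\propto\eta_0$ and independent of $D$ and $N$. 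Lemma~\ref{thm:c12} obtains decay only because its proof writes $e^{-2\lambda_k(D\eta_0-s)}$ where $e^{-2\lambda_k(t-s)}$ belongs. In effect it charges intermediate times with the annealed end-of-training variance, and your plan inherits exactly this step. Now feed the floor into the kernel, keeping the domain-$\tau$ modes with $\lambda_kD\eta_0\le1$; these are present in the sketch by Assumption~\ref{assm:c6} and Lemma~\ref{thm:propertyx}. This gives $\text{Var}_2-\text{Var}_1\ge c'(Dh_\tau)^{1/\alpha_\tau-1}$ with $c'>0$ independent of $D$. So your crude estimate is already sharp in its $D$-dependence, no bootstrap can produce the factor $D^{-\varepsilon}$, and the stated upper bound fails as $D\to\infty$ at fixed $\eta_0$. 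Your fallback points the right way, but a smallness assumption on $\eta_0$ only gives a relative correction: $\text{Var}_2\le\bigl(1+O(\eta_0)\bigr)\text{Var}_1$ up to lower-order terms, with the implied constant depending on $C_0/\lh$ and the noise levels. Also, any decay rate would be governed by $\alpha_{\min}$, not $\alpha_{\max}$.
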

\subsubsection{Proof of Theorem~\ref{thm:c10}}

Recall that

\begin{align*}
     \text{Var}_1&:=\sum\limits_{k: \, r_k \le N,k>H} h_{\tau}\sigma^2_{\tau} a_k\lambda_k \int_0^{D\eta_0} \exp(-2\lambda_k(D\eta_0-t))\gamma(t)\df t ,\\
    \text{Var}_2&:=\sum\limits_{k: \, r_k\le N,k>H} a_k \lambda_k\int_0^{D\eta_0}\exp(-2\lambda_k(D\eta_0-t))\gamma(t)\left(h_{\tau} \sigma^2_{\tau}+ \frac{C_0}{\lh} \E \left[\w(t)^{\top}\Hm \w(t)\right]\right)\df t \\
        &+\sum\limits_{k: \, r_k\le N,k\le H} a_k \lambda_k\int_0^{D\eta_0}\exp(-2\lambda_k(D\eta_0-t))\gamma(t)\left(\sigma_{\max}^2+ \frac{C_0}{\lh} \E \left[\w(t)^{\top}\Hm \w(t)\right]\right)\df t.
\end{align*}

We start with analyzing $\gamma(t)$. As we defined in Lemma~\ref{lemac.7}, $\gamma(t):=\eta(\tau^{-1}(t))$. In the following lemma, we derive the exact closed form of $\gamma(\cdot )$.
\begin{lemma}
    Let $G: \mathbb{R} \to \mathbb{R}$ be the inverse of the map $y \mapsto y + \sin y$. We have $$\gamma(t)=\eta_0\left(1+\cos G\left(\frac{t\pi}{D\eta_0} \right) \right).$$
\end{lemma}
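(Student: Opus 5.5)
We compute $\tau(t)$ explicitly and then invert it. Recall that $\tau(t)=\int_0^t \eta(k)\,\df k$ with $\eta(k)=\eta_0\bigl(1+\cos(\pi k/D)\bigr)$. Integrating gives
\begin{equation*}
\tau(t)=\eta_0\Bigl(t+\frac{D}{\pi}\sin\frac{\pi t}{D}\Bigr)=\frac{\eta_0 D}{\pi}\Bigl(\frac{\pi t}{D}+\sin\frac{\pi t}{D}\Bigr).
\end{equation*}
Setting $y=\pi t/D$, we get $\tau=\frac{\eta_0 D}{\pi}\,(y+\sin y)$. Note that $\tau(D)=D\eta_0$, which matches the integration range used in \Cref{thm:c3}.

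Next we argue that $G$ is well defined on the relevant range. The map $\phi(y)=y+\sin y$ has derivative $1+\cos y\ge 0$. This derivative vanishes only at the isolated points $y=(2m+1)\pi$. Hence $\phi$ is continuous and strictly increasing on $\mathbb{R}$, and since $\phi(y)\to\pm\infty$ as $y\to\pm\infty$, it is a bijection of $\mathbb{R}$. So the inverse $G$ exists, is continuous, and is strictly increasing. In particular $\tau$ is invertible on $[0,D]$, with image $[0,D\eta_0]$.

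We now invert. Given $s\in[0,D\eta_0]$, solve $\frac{\eta_0 D}{\pi}\,\phi\bigl(\pi t/D\bigr)=s$. This gives $\pi t/D=G\bigl(\frac{s\pi}{D\eta_0}\bigr)$, that is,
\begin{equation*}
\tau^{-1}(s)=\frac{D}{\pi}\,G\Bigl(\frac{s\pi}{D\eta_0}\Bigr).
\end{equation*}
Substituting into $\gamma(s)=\eta(\tau^{-1}(s))=\eta_0\bigl(1+\cos(\pi\tau^{-1}(s)/D)\bigr)$ yields the claimed formula
\begin{equation*}
\gamma(t)=\eta_0\Bigl(1+\cos G\Bigl(\frac{t\pi}{D\eta_0}\Bigr)\Bigr).
\end{equation*}

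The only subtle point is the invertibility of $\phi$, since $\phi'$ vanishes at $y=\pi$. This is exactly the right endpoint $t=D$ of the training horizon. Strict monotonicity still holds there, because the zeros of $\phi'$ are isolated, so the inverse is well defined on the whole range. Everything else is direct calculation.
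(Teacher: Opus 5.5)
Your proposal is correct and follows the same route as the paper: integrate the cosine schedule to get $\tau(x)=\frac{\eta_0 D}{\pi}\bigl(\frac{\pi x}{D}+\sin\frac{\pi x}{D}\bigr)$, invert via $G$, and substitute into $\gamma=\eta\circ\tau^{-1}$. You also check that $y\mapsto y+\sin y$ is a strictly increasing bijection, even though its derivative vanishes at the endpoint $y=\pi$; this justifies the existence of $G$, which the paper takes for granted.
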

\begin{proof}
    Since $\eta(x)=\eta_0\left( 1+\cos \left(\frac{\pi k}{D} \right)\right),$ we have \begin{align*}
        \tau(x)&=\int_0^x \eta(a)\df a \\
        &=\eta_0\frac{D}{\pi}\left(\frac{\pi}{D}x +\sin \left(\frac{\pi}{D}x \right)\right).
    \end{align*}
    When $\tau(x)=t$, we have
    \begin{align*}
        x=\frac{D}{\pi}G\left(\frac{t\pi}{D\eta_0} \right).
    \end{align*}
    Therefore
    \begin{align*}
        \gamma(t)&=\eta_0\left(1+\cos G\left(\frac{t\pi}{D\eta_0} \right) \right).
    \end{align*}
\end{proof}

We first analyze the common term $\sum_{k=1}^N h k^{-2\alpha} \int_0^{D\eta_0} \exp\left(-2hk^{-\alpha}(D\eta_0-t)\right) \gamma(t) \sigma^2\df t$ that is shared by $\text{Var}_1$ and $\text{Var}_2$. 
\begin{lemma}\label{lemac12}
Let $h, \alpha, \eta_0, \sigma > 0$ be constants such that $1 < \alpha < 2$. For an integer $N \in \mathbb{N}^+$ and a continuous variable $D > 0$, consider the sum: 
$$S_1 = \sum_{k=1}^N h k^{-2\alpha} \int_0^{D\eta_0} \exp\left(-2hk^{-\alpha}(D\eta_0-t)\right) \gamma(t) \sigma^2\df t,$$
where $G: \mathbb{R} \to \mathbb{R}$ is the inverse function of $y \mapsto y + \sin y$, and $\gamma(t) = \eta_0\left(1+\cos G\left(\frac{t\pi}{D\eta_0} \right) \right)$. 
Then we have
$$S_1 = \sigma^2 C_\alpha I_\gamma \eta_0^{1/\alpha} (Dh)^{1/\alpha - 1} + \mathcal{E},$$
where $C_\alpha = \frac{2^{1/\alpha - 2}}{\alpha} \Gamma(2 - 1/\alpha)$ and $I_\gamma = \int_0^1 \left[1+\cos G(\pi(1-v))\right] v^{1/\alpha - 2} \df v$ are finite constants. 
The approximation error $\mathcal{E}$ is bounded by 
\begin{align*}
    |\mathcal{E}| &\le C_6(Dh)^{-2/3}+C_2DN^{1-2\alpha},
\end{align*}
where $C_6$ and $C_2$ depend only on $\alpha,\sigma,\eta_0$.
\end{lemma}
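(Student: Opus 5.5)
The proof converts the sum over $k$ into an integral over $k\in[0,\infty)$, which produces the main term. The finite-$N$ truncation and the sum-to-integral discretization are then bounded separately as error terms. First, substitute $s=D\eta_0-t$ and then $v=s/(D\eta_0)$, so the inner integral becomes
\[
\int_0^{D\eta_0} e^{-2hk^{-\alpha}s}\gamma(D\eta_0-s)\,\df s = D\eta_0^2\int_0^1 e^{-2hk^{-\alpha}D\eta_0 v}\bigl[1+\cos G(\pi(1-v))\bigr]\df v .
\]
Write $g(v):=1+\cos G(\pi(1-v))$; it is bounded by $2$ and vanishes at $v=0$ (since $G(\pi)=\pi$). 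Next, interchange the $k$-sum and the $v$-integral (Tonelli), so that $S_1=\sigma^2 h D\eta_0^2\int_0^1 g(v)\,\Phi(v)\,\df v$ with $\Phi(v):=\sum_{k=1}^N k^{-2\alpha}e^{-\beta v k^{-\alpha}}$ and $\beta:=2hD\eta_0$.

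For the main term, replace $\Phi(v)$ by the full integral $\int_0^\infty x^{-2\alpha}e^{-\beta v x^{-\alpha}}\df x$. The substitution $p=\beta v x^{-\alpha}$, exactly as in Step 3 of Theorem~\ref{thm:d5}, evaluates it to $\frac{1}{\alpha}\Gamma(2-1/\alpha)(\beta v)^{1/\alpha-2}$. Multiplying by $\sigma^2 hD\eta_0^2$ and integrating against $g(v)$ gives $\sigma^2\frac{2^{1/\alpha-2}}{\alpha}\Gamma(2-\tfrac1\alpha)\,\eta_0^{1/\alpha}(Dh)^{1/\alpha-1}\int_0^1 g(v)v^{1/\alpha-2}\df v$. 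This is exactly $\sigma^2C_\alpha I_\gamma\eta_0^{1/\alpha}(Dh)^{1/\alpha-1}$.

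It remains to check that $I_\gamma<\infty$. The exponent $1/\alpha-2<-1$, so integrability near $v=0$ relies on $g$ vanishing there. Near $y=\pi$, the map $y\mapsto y+\sin y$ satisfies $\pi-(y+\sin y)\approx(\pi-y)^3/6$. Hence $\pi-G(\pi(1-v))\asymp v^{1/3}$, and therefore $g(v)\asymp v^{2/3}$. This gives $g(v)v^{1/\alpha-2}\asymp v^{1/\alpha-4/3}$, which is integrable because $1/\alpha>1/3$.

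The error terms come from two sources.

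\textbf{(a) Truncation at $N$.} The tail $\sum_{k>N}k^{-2\alpha}\le CN^{1-2\alpha}$ is bounded using $e^{-(\cdot)}\le1$. Multiplying by $\sigma^2hD\eta_0^2\int g\le 2\sigma^2D\eta_0^2$ gives the $C_2DN^{1-2\alpha}$ term.

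\textbf{(b) Sum-to-integral discretization.} For fixed $v$, the function $x\mapsto x^{-2\alpha}e^{-\beta vx^{-\alpha}}$ is unimodal. Its maximum is attained at $x_0=(2\beta v)^{1/\alpha}$, where it equals $\asymp(\beta v)^{-2}$. As in Step 2 of Theorem~\ref{thm:d5}, the discrepancy between the sum over $k\ge1$ and the integral over $[0,\infty)$ is at most a constant times this maximum. In addition, the piece of the integral over $[0,1]$ is at most $\min\{1,(\beta v)^{-2}\}$ up to constants. So the per-$v$ error is $\lesssim\min\{1,(\beta v)^{-2}\}$. Integrating against $\sigma^2hD\eta_0^2\,g(v)$ with $g(v)\lesssim v^{2/3}$, the contribution is
\[
\lesssim \beta\int_0^1 v^{2/3}\min\{1,(\beta v)^{-2}\}\,\df v .
\]
On $v<1/\beta$ this is $\lesssim \beta\cdot\beta^{-5/3}=\beta^{-2/3}$, and on $v>1/\beta$ it is $\lesssim \beta^{-1}\int_{1/\beta}^\infty v^{-4/3}\,\df v\lesssim\beta^{-2/3}$. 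Both pieces are therefore $\asymp (Dh)^{-2/3}$, which yields the $C_6(Dh)^{-2/3}$ term.

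The main obstacle is this discretization error near $v=0$. There $\beta v$ is small, so the summand is essentially $k^{-2\alpha}$ and no cancellation is available. The bound is saved only by the cube-root vanishing of $g$ at the end of the cosine schedule, and that vanishing is exactly what produces the exponent $2/3$. I would therefore first establish carefully the local expansion $g(v)\asymp v^{2/3}$ via the inverse function of $y+\sin y$ at $\pi$. After that, the rest is routine bookkeeping of constants depending on $\alpha,\sigma,\eta_0$.
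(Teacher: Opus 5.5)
Your plan matches the paper's, with the order of integration swapped. The main term comes from $\int_0^\infty$, and the error terms (the tail $k>N$, sum versus $\int_1^\infty$, and the $[0,1]$ piece of the integral) are all controlled by the $v^{2/3}$ vanishing of the schedule. The paper encodes that vanishing as the global bound $g(v)\le\frac{\pi^2}{2}v^{2/3}$. Write $\phi(x)=x^{-2\alpha}e^{-\beta v x^{-\alpha}}$. Your main term is fine, and so are your check that $I_\gamma<\infty$, the truncation, and the sum-versus-$\int_1^\infty$ discrepancy, which is at most $2\max_{x\ge1}\phi\le 2\min\{1,4e^{-2}(\beta v)^{-2}\}$.

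The step that fails is the claim in (b) that $\int_0^1\phi(x)\,\df x\lesssim\min\{1,(\beta v)^{-2}\}$. For $\beta v<1$ this is false. Substituting $p=\beta v x^{-\alpha}$ gives
\[
\int_0^1 x^{-2\alpha}e^{-\beta v x^{-\alpha}}\,\df x=\frac{1}{\alpha}(\beta v)^{1/\alpha-2}\int_{\beta v}^{\infty}p^{1-1/\alpha}e^{-p}\,\df p\;\sim\;\frac{\Gamma(2-1/\alpha)}{\alpha}(\beta v)^{1/\alpha-2}\to\infty \quad (v\to 0).
\]
The reason is that the mass of $\phi$ sits near its maximizer $x_0=(\beta v/2)^{1/\alpha}\to 0$ (not $(2\beta v)^{1/\alpha}$, though that slip is harmless). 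Your remark that ``the summand is essentially $k^{-2\alpha}$'' is true of the sum, which stays $O(1)$. But the integral you subtract to form the main term is large there, so the per-$v$ error is of order $(\beta v)^{1/\alpha-2}$, not $O(1)$. Nor can you fall back on the always-valid bound $(\beta v)^{-2}$, since $\beta\int_0^{1/\beta}v^{2/3}(\beta v)^{-2}\,\df v=\infty$.

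The repair is short. On $v<1/\beta$, bound the $[0,1]$ piece by the full integral $\frac{1}{\alpha}\Gamma(2-\frac{1}{\alpha})(\beta v)^{1/\alpha-2}$. Then
\[
\beta\int_0^{1/\beta}v^{2/3}(\beta v)^{1/\alpha-2}\,\df v=\frac{\beta^{-2/3}}{1/\alpha-1/3},
\]
which is still of order $(Dh)^{-2/3}$; this uses $\alpha<3$ a second time. Alternatively, do what the paper does for this piece: for fixed $x$, integrate over time first using $g(v)\le\frac{\pi^2}{2}v^{2/3}$ on all of $[0,1]$. This bounds the time-integrated summand at $x$ by a constant times $(hD\eta_0)^{-2/3}x^{-\alpha/3}$, which is integrable on $[0,1]$ because $\alpha<3$. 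With either fix your proof goes through.

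Two small points in your favour. Your decomposition $\sum_{k\le N}=\sum_{k\ge1}-\sum_{k>N}$ is slightly cleaner than the paper's comparison with $\int_1^N$, which omits a harmless $f(N)$ term. Your explicit check that $I_\gamma<\infty$ also fills in a point the paper only asserts.
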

\begin{proof}
Let $T = D\eta_0$ and let $s = T-t$. Define the function 
$$f(x) := h x^{-2\alpha} \int_0^T \exp\left(-2hx^{-\alpha}s\right) \gamma(T-s) \sigma^2 ds.$$
The target summation can be written as $S_1 = \sum_{k=1}^N f(k)$. 

First, we need to bound the schedule function $\gamma(T-s)$ near $s=0$. By definition, $\gamma(T-s) = \eta_0(1 - \cos x_s)$ where $x_s - \sin x_s = \frac{\pi s}{T}$. For $x_s \in [0, \pi]$, the function $(x_s-\sin x_s)/x_s^3$ achieves its minimum $1/\pi^2$ at $x_s = \pi$. Thus, $x_s - \sin x_s \ge x_s^3/\pi^2$, which implies $x_s \le \pi (s/T)^{1/3}$. Utilizing $1 - \cos x_s \le x_s^2/2$, we obtain a strict global bound:
\begin{align}
    \gamma(T-s) \le \eta_0 \frac{\pi^2}{2} \left(\frac{s}{T}\right)^{2/3}. \label{eq:gamma_bound}
\end{align}

Now we bound the sum-to-integral gap $|S_1 - \int_1^N f(x) \df x|$ by $|S_1 - \int_1^N f(x) \df x| \le \int_1^N |f'(x)| \df x$. This inequality follows from analyzing the error on each subinterval $[k, k+1]$. Specifically, the difference can be expressed as $\int_k^{k+1} (f(k) - f(x)) \df x$. Since $|f(k) - f(x)| \le \int_k^{k+1} |f'(t)| dt$ for any $x \in [k, k+1]$, summing these local bounds from $k=1$ to $N-1$ gives the desired result.

Now we take the derivative of $f(x)$ with respect to $x$:
\begin{align*}
    |f'(x)| = \int_0^T h x^{-2\alpha-1} \exp\left(-2hx^{-\alpha}s\right) \left( 2\alpha + 2\alpha h s x^{-\alpha} \right) \gamma(T-s) \sigma^2 \df s.
\end{align*}
Therefore, we have
\begin{align*}
    \int_1^{N}|f'(x)| \df x & = \int_1^{N} \int_0^T h x^{-2\alpha-1} \exp\left(-2hx^{-\alpha}s\right) \left( 2\alpha + 2\alpha h s x^{-\alpha} \right) \gamma(T-s) \sigma^2 \df s \df x \\
    & = \int_0^{T}\gamma(T-s)\sigma^2 \int_{1}^N h x^{-2\alpha-1} \exp\left(-2hx^{-\alpha}s\right) \left( 2\alpha + 2\alpha h s x^{-\alpha} \right)  \df x \df s \\
    &\le \int_0^{T}\gamma(T-s)\sigma^2 \int_1^{\infty}\left(2\alpha h x^{-2\alpha-1} \exp({-2hx^{-\alpha}s})+2\alpha s h^2 x^{-3\alpha-1} \exp({-2hx^{-\alpha}s}) \right) \df x \df s.
\end{align*}
Applying the change of variable $v = x^{-\alpha}$ gives  $$\int_1^\infty 2\alpha h x^{-2\alpha-1} \exp({-2hx^{-\alpha}s}) \df x = 2h \int_0^1 v e^{-2hsv} dv \le \min\left(h, \frac{1}{2hs^2}\right),$$ and similarly the second term yields $\min\left(\frac{2}{3}h^2 s, \frac{1}{2hs^2}\right)$.  Thus, the total variation is bounded by: 
\begin{align*}
    \int_1^N |f'(x)| \df x \le \sigma^2 \int_0^T \gamma(T-s) \left[ \min\left(h, \frac{1}{2hs^2}\right) + \min\left(\frac{2}{3}h^2s, \frac{1}{2hs^2}\right) \right] \df s.
\end{align*}
Splitting the integral at $s = 1/h$ and substituting the bound \eqref{eq:gamma_bound}, we get:
\begin{align}\label{eq:291}
    \int_1^N |f'(x)| \df x &\le \sigma^2 \int_0^{1/h} \eta_0 \frac{\pi^2}{2} \left(\frac{s}{T}\right)^{2/3} \left(\frac{5}{3}h\right) \df s + \sigma^2 \int_{1/h}^T \eta_0 \frac{\pi^2}{2} \left(\frac{s}{T}\right)^{2/3} \frac{1}{hs^2} \df s \\
    &= \frac{5\pi^2}{6} \sigma^2 \eta_0 T^{-2/3} h \left[ \frac{3}{5} \left(\frac{1}{h}\right)^{5/3} \right] + \frac{\pi^2}{2} \sigma^2 \eta_0 T^{-2/3} \frac{1}{h} \left[ 3 \left(\frac{1}{h}\right)^{-1/3} \right] \\
    &= 2\pi^2 \sigma^2 \eta_0 (hT)^{-2/3} = 2\pi^2 \sigma^2 \eta_0^{1/3} (Dh)^{-2/3}.
\end{align}

Next, we evaluate the continuous continuous integral $\int_1^N f(x) \df x$. Since integrate over $(1,N)$ requires incomplete gamma function, we split $\int_1^N$ into $\int_0^{\infty}-\int_0^1-\int_N^{\infty}$. We first integrate over $x \in (0, \infty)$ and extract the main order $M:=\int_0^{\infty}f(x) \df x$. Swapping the integration order yields:
\begin{align*}
    \int_0^\infty f(x) \df x &= \sigma^2 \int_0^T \gamma(T-s) \left( \int_0^\infty h x^{-2\alpha} \exp(-2hx^{-\alpha}s) \df x \right) \df s \\
    &= \sigma^2 C_\alpha h^{1/\alpha - 1} \int_0^T \gamma(T-s) s^{1/\alpha - 2} \df s.
\end{align*}
Using the dimensionless variable $v = s/T$, we factor out $T$ to match the stated integral $I_\gamma$:
\begin{align*}
    \int_0^\infty f(x) \df x &= \sigma^2 C_\alpha I_\gamma \eta_0^{1/\alpha} (Dh)^{1/\alpha - 1} =: M.
\end{align*}

Let $\mathcal{E}:=|S_1-M|$, we decompose the error as $\mathcal{E} = \left(S_1 - \int_1^N f(x)\df x\right) - \int_0^1 f(x)\df x - \int_N^\infty f(x)\df x$. We bound the upper and lower truncation tails individually.
For the upper integral tail $x \in [0, 1]$, we substitute \eqref{eq:gamma_bound} into $f(x)$ and evaluate the $s$-integral exactly:
\begin{align*}
    f(x) &\le \sigma^2 \int_0^\infty h x^{-2\alpha} \exp(-2hx^{-\alpha}s) \left[ \eta_0 \frac{\pi^2}{2} \left(\frac{s}{T}\right)^{2/3} \right] \df s \\
    &= \frac{\pi^2}{2} \sigma^2 \eta_0 h x^{-2\alpha} T^{-2/3} \Gamma(5/3) (2hx^{-\alpha})^{-5/3} = \frac{\pi^2 \Gamma(5/3)}{2^{8/3}} \sigma^2 \eta_0 T^{-2/3} h^{-2/3} x^{-\alpha/3}.
\end{align*}
Here, $\Gamma(\cdot)$ denotes the Gamma function, defined as$$\Gamma(x) = \int_0^\infty t^{x-1} e^{-t} \, dt$$for $x > 0$ (or $\text{Re}(x) > 0$). Note that this is distinct from the previously defined mapping function $\Gamma(\cdot, \cdot)$, which takes two arguments.
Integrating this bounding function over $x \in [0, 1]$ directly gives:
\begin{align}\label{eq:292}
    \int_0^1 f(x) \df x \le \frac{\pi^2 \Gamma(5/3)}{2^{8/3}} \sigma^2 \eta_0 (hT)^{-2/3} \int_0^1 x^{-\alpha/3} \df x = \frac{3\pi^2 \Gamma(5/3)}{2^{8/3}(3-\alpha)} \sigma^2 \eta_0^{1/3} (Dh)^{-2/3}.
\end{align}

For the lower integral tail $x \in [N, \infty)$, we use the trivial bound $\gamma(T-s) \le 2\eta_0$:
\begin{align}\label{eq:293}
    \int_N^\infty f(x) \df x &\le \sigma^2 \int_N^\infty \left( \int_0^T h x^{-2\alpha} \exp(-2hx^{-\alpha}s) (2\eta_0) \df s \right) \df x \\
    &\le \sigma^2 \int_N^\infty h x^{-2\alpha} (2\eta_0 T) \df x = \frac{2\eta_0 \sigma^2 h T}{2\alpha-1} N^{1-2\alpha} = \frac{2\eta_0^2 \sigma^2 Dh}{2\alpha-1} N^{1-2\alpha}.
\end{align}

Combining all error sources yields
\begin{align*}
    \mathcal{E} & \le  2\pi^2 \sigma^2 \eta_0^{1/3} (Dh)^{-2/3}+\frac{3\pi^2 \Gamma(5/3)}{2^{8/3}(3-\alpha)} \sigma^2 \eta_0^{1/3} (Dh)^{-2/3}+\frac{2\eta_0^2 \sigma^2 Dh}{2\alpha-1} N^{1-2\alpha} \\
    & \le C_6(Dh)^{-2/3}+C_2DN^{1-2\alpha},
\end{align*}
where $C_6$ and $C_2$ depend only on $\alpha,\sigma,\eta_0$.
\end{proof}

Now we focus on $\text{Var}_1$ and $\text{Var}_2$. These terms differ only through the contribution of $\E[\w^{\top}(t)\Hm \w(t)]$, and $\E[\w^{\top}(t)\Hm \w(t)]+\sigma^2$ is exactly the expected training loss. We first show that the expected training loss satisfies the following lemma.
\begin{lemma}\label{thm:c12}
    $$\E \left[\w^{\top}(t)\Hm \w(t)\right]\le \min\left( \frac{C'}{t^{1-\frac{1}{\alpha_{\min}}}}+\frac{C''}{N^{\alpha_{\min}\left(1-\frac{1}{\alpha_{\max}} \right)}}+\frac{C'''}{D^{1-\frac{1}{\alpha_{\min}}}},L\right),$$ where $L$ is defined in Assumption~\ref{assm:c5} and $C',C'',C'''$ are constants that only depend on $\alpha,H$.
\end{lemma}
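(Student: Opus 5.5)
The bound $\E[\w^\top(t)\Hm\w(t)]\le L$ is immediate from Assumption~\ref{assm:c5}, since $\E[\w^\top\Hm\w]+\sum_i h_i\sigma_i^2$ is the expected training loss. So the work is in the first, decaying bound. The plan is to rerun the argument of Theorem~\ref{thm:c3} with $\mathbf{A}=\Hm$ instead of $\mathbf{A}_\tau$, so that $a_k$ is replaced by $\lambda_k$. The coordinatewise ODE bounds for $y_k(t)=\E[\w_k^2(t)]$ from that proof give, for $r_k\le N$,
\begin{equation*}
y_k(t)\le e^{-2\lambda_k t}+\lambda_k\int_0^t e^{-2\lambda_k(t-s)}\gamma(s)\Big(\sigma_{\max}^2+\tfrac{C_0}{\lh}\phi(s)\Big)\df s,
\end{equation*}
and $y_k\equiv 1$ for $r_k>N$. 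Here $\phi(s):=\E[\w^\top(s)\Hm\w(s)]$. Summing against $\lambda_k$ yields a Volterra-type inequality
\begin{equation*}
\phi(t)\le \sum_{r_k>N}\lambda_k+\sum_{r_k\le N}\lambda_k e^{-2\lambda_k t}+\int_0^t \kappa(t-s)\gamma(s)\Big(\sigma_{\max}^2+\tfrac{C_0}{\lh}\phi(s)\Big)\df s,
\end{equation*}
with kernel $\kappa(u)=\sum_{r_k\le N}\lambda_k^2e^{-2\lambda_k u}$.

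I would bound the three pieces separately.
\begin{itemize}
\item \textbf{Approximation term.} Split $\sum_{r_k>N}\lambda_k$ by domain into $\sum_i h_i\sum_{l>x_i'}l^{-\alpha_i}$. By the lemma comparing $x_i'$ with $x_i^*$ and by Lemma~\ref{thm:propertyx} (with $b_i=\alpha_i-1$), $x_i'\gtrsim N^{\alpha_{\min}/\alpha_i}$. Each domain's tail is therefore $O(N^{-\alpha_{\min}(1-1/\alpha_i)})$, which is at most $C''N^{-\alpha_{\min}(1-1/\alpha_{\max})}$.
\item \textbf{Bias term.} Treat $\sum_{r_k\le N}\lambda_ke^{-2\lambda_kt}$ exactly as in Theorem~\ref{thm:d5}, with $D\eta_0$ replaced by $t$. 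The head contributes $He^{-2tH^{-\alpha_{\max}}}$, which is dominated by a power of $t$. Domain $i$'s tail becomes, after the substitution $p=2h_itl^{-\alpha_i}$, a Gamma integral of order $h_i(h_it)^{-(1-1/\alpha_i)}$. Since $h_i\ge\lh$, the worst exponent gives $C't^{-(1-1/\alpha_{\min})}$ (for $t\ge1$; for small $t$ the bound $L$ takes over).
\item \textbf{Noise term.} Bound $\gamma\le 2\eta_0$. The constant part $\sigma_{\max}^2\int_0^t\kappa(t-s)\gamma(s)\df s$ is handled as in Lemma~\ref{lemac12}, using the endpoint behaviour \eqref{eq:gamma_bound} of $\gamma$ near $t=D\eta_0$ to gain decay. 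The intended result is a term of order $D^{-(1-1/\alpha_{\min})}$, which is the source of the $C'''$ term.
\end{itemize}

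The main obstacle is the feedback term $\frac{C_0}{\lh}\int_0^t\kappa(t-s)\gamma(s)\phi(s)\df s$, since $\phi$ appears on both sides. I would first use the crude bound $\phi\le L$ together with $\int_0^\infty\kappa(u)\df u=\tfrac12\sum_{r_k\le N}\lambda_k\le \tfrac12\tr\Hm$. That gives a contraction only if $\eta_0 C_0\tr(\Hm)/\lh<1$, which is a small-step-size condition that I would state as implicit in the choice of $\eta_0$. Under it, a Gr\"onwall/renewal argument for Volterra inequalities with an integrable kernel shows that $\phi$ inherits the decay rate of the forcing terms, up to a constant factor $1/(1-2\eta_0C_0\tr\Hm/\lh)$. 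Concretely, I would run a bootstrap: assume the claimed bound holds on $[0,s]$, plug it in, and split $\int_0^t\kappa(t-s)s^{-\beta}\df s$ at $s=t/2$. On the near part use $s^{-\beta}\le (t/2)^{-\beta}$ together with integrability of $\kappa$; on the far part use the decay $\kappa(u)\lesssim u^{-(2-1/\alpha_{\min})}$. This closes the induction with adjusted constants depending only on $\alpha$ and $H$, and taking the minimum with $L$ completes the proof.
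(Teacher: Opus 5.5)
\textbf{The step that fails is your noise term.} You need $\sigma_{\max}^2\int_0^t\kappa(t-s)\gamma(s)\,\df s=O(D^{-(1-1/\alpha_{\min})})$ for every $t\in[0,D\eta_0]$, and you plan to get this from \eqref{eq:gamma_bound}. But \eqref{eq:gamma_bound} only says that $\gamma$ is small near the \emph{end} of training. For $t$ bounded away from $D\eta_0$, the step size just before $t$ is of order $\eta_0$. Concretely, for $t\le D\eta_0/2$ we have $G(\pi s/(D\eta_0))\le G(\pi/2)<\pi/2$, hence $\gamma(s)\ge\eta_0$ on $[0,t]$, and
\[
\int_0^t\kappa(t-s)\gamma(s)\,\df s\;\ge\;\eta_0\int_0^{t}\kappa(u)\,\df u\;=\;\frac{\eta_0}{2}\sum_{r_k\le N}\lambda_k\bigl(1-e^{-2\lambda_k t}\bigr).
\]
At $t=D\eta_0/2$ this stays bounded below by a positive constant as $D\to\infty$ (it tends to $\frac{\eta_0}{2}\tr\Hm$) instead of decaying. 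This is the usual constant-step-size noise floor of SGD, and it is not an artifact of your upper bound. Read the It\^o computation in the proof of \Cref{thm:c3} as a lower bound: the fourth-moment term is nonnegative, and the label-noise injection into a learned coordinate $k$ is $\gamma(t)\sum_i h_i\sigma_i^2(\mathbf{A}_i)_{kk}\ge\gamma(t)\lambda_k\min_i\sigma_i^2$. This gives $\E[\w^\top(t)\Hm\w(t)]\ge\min_i\sigma_i^2\int_0^t\kappa(t-s)\gamma(s)\,\df s$. So with nonzero label noise, no treatment of the feedback term can give a bound of the stated form uniformly in $t$. The $C'''D^{-(1-1/\alpha_{\min})}$ term is available only at (or very near) $t=D\eta_0$, and your bootstrap cannot close.

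\textbf{How the paper avoids this, and why that does not help you.} The paper uses no bootstrap. It inserts $\E[\w^\top\Hm\w]\le L$ from Assumption~\ref{assm:c5}, so the feedback becomes extra constant noise of level $\frac{C_0}{\lh}L$. It then writes the time-$t$ noise integral with the kernel $e^{-2\lambda_k(D\eta_0-s)}$ in place of $e^{-2\lambda_k(t-s)}$, extends it to $[0,D\eta_0]$, and applies \Cref{lemac12}. In effect it bounds the variance at time $t$ by the annealed variance at the final time. Your kernel $\kappa(t-s)$ is the correct solution of the coordinate inequalities, and it shows that this comparison does not follow from them. For $\lambda_k t\gg1$, $\int_0^t e^{-2\lambda_k(t-s)}\gamma(s)\,\df s\approx\gamma(t)/(2\lambda_k)$, which is larger mid-training than at $t=D\eta_0$, where $\gamma$ vanishes.

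Your observation that the crude bound $\phi\le L$ only buys an $O(1)$ contribution is therefore correct. It applies equally to the label-noise part, though. So neither the Volterra bootstrap nor the extra step-size condition $\eta_0C_0\tr\Hm/\lh<1$ (which is not among the paper's assumptions) rescues the argument.

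What your route does prove is the stated bound with $C'''D^{-(1-1/\alpha_{\min})}$ replaced by $(\sigma_{\max}^2+\frac{C_0}{\lh}L)\int_0^t\kappa(t-s)\gamma(s)\,\df s$. By \Cref{lemac12}, this is $O(D^{-(1-1/\alpha_{\min})})$ only at $t=D\eta_0$. The later uses in \Cref{lema:c15} and \Cref{thm:c10}, which apply the lemma at all $t\in[0,D\eta_0]$, would need to be re-examined with this weaker input.

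\textbf{A smaller slip.} In your approximation step the inequality runs the wrong way, since $\alpha_{\min}(1-1/\alpha_i)\le\alpha_{\min}(1-1/\alpha_{\max})$. The dominant tail is $N^{-(\alpha_{\min}-1)}$. This is harmless, because Assumption~\ref{assm:c6} lets you absorb it into the $D$-term.
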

\begin{proof}
Since $\Hm=\sum h_i \mathbf{A}_i$, we have
\begin{align*}
    \w^{\top}(t)\Hm \w(t)=\sum\limits_{i=1}^K h_i\w^{\top}(t) \mathbf{A}_i \w(t).
\end{align*}
Now we focus on a certain domain $j$. By Theorem~\ref{thm:c3}, we have
\begin{align*}
    \E[ \w^{\top}(t)\mathbf{A}_j \w(t)]&\le \underbrace{\sum\limits_{k: \, r_k>N} a_k}_{\text{Approx}} \\
     &+\underbrace{\sum\limits_{k:\, r_k\le N} \exp(-2\lambda_kt)a_k}_{\text{Bias}} \\
     &+\underbrace{\sum\limits_{k: \, r_k\le N} a_k \lambda_k\int_0^{t}\exp(-2\lambda_k(D\eta_0-s))\gamma(t)\left( \sigma^2_{\max}+ \frac{C_0}{\lh} \E \left[\w(s)^{\top}\Hm \w(s)\right]\right)\df t}_{\text{Var}_2 \text{ Replacing } h_{\tau}\sigma^2_{\tau} \text{ with } \sigma_{\max}^2},
\end{align*}
where $a_k=u_k^{\top}\mathbf{A}_j u_k$.
Directly applying Theorem~\ref{thm:d4} and Lemma~\ref{thm:propertyx}, we have
\begin{align*}
    \sum\limits_{k: \, r_k>N} a_k &\le \frac{(x^*_j(h))^{1-\alpha_j}}{\alpha_j-1}+\frac{C_{13}}{N^{\alpha_{\min}}}\\
    & \le \mathcal{O}\left(\frac{1}{N^{\frac{\alpha_j-1}{\alpha_j}\alpha_{\min}}}\right)
\end{align*}
where the $\mathcal{O}$ notation does not hide any term that depends on $h$.

By Theorem~\ref{thm:d5}, we have
\begin{align*}
    \sum\limits_{k:\, r_k\le N} \exp(-2\lambda_kD\eta_0)a_k \le \frac{\Gamma\left(1-\frac{1}{\alpha_{j}}\right)}{\alpha_{j}(2\eta_0)^{1-\frac{1}{\alpha_{j}}}} \frac{1}{\left(h_{j}t\right)^{1-\frac{1}{\alpha_{j}}}}+\frac{C_9}{t}+\frac{C_{10}}{N^{\alpha_{\min}\left(1-\frac{1}{\alpha_{j}}\right)}}
\end{align*}
 Let $\sigma'^2:=\sum\limits_{i\in [K]}h_i \sigma^2_i+ \frac{C_0}{\lh} L$. The last term can be upper-bounded as follows:
 \begin{align*}
     &\sum\limits_{k: \, r_k\le N} a_k \lambda_k\int_0^{t}\exp(-2\lambda_k(D\eta_0-s))\gamma(t)\left(\sum\limits_{i\in [K]}h_i \sigma^2_i+ \frac{C_0}{\lh} \E \left[\w(s)^{\top}\Hm \w(s)\right]\right)\df t \\&\le \sum\limits_{k: \, r_k\le N} a_k \lambda_k\int_0^{D\eta_0}\exp(-2\lambda_k(D\eta_0-s))\gamma(t)\sigma'^2\df t. \\
     & \le \sum\limits_{k \in [K]}h_j k^{-2\alpha_j}\int_0^{D\eta_0}\exp(-2hk^{-\alpha_j}(D\eta_0-s))\gamma(t)\sigma'^2\df t.
 \end{align*}
 By Lemma~\ref{lemac12}, we have
 \begin{align*}
     &\sum\limits_{k \in [K]}h_j k^{-2\alpha_j}\int_0^{D\eta_0}\exp(-2hk^{-\alpha_j}(D\eta_0-s))\gamma(t)\sigma'^2\df t \\
     &\le \sigma'^2C_{\alpha}I_{\gamma}\eta_0^{1/\alpha_j}(Dh_j)^{1/\alpha_j-1}+C_6(Dh_j)^{-2/3}.
 \end{align*}

    Combining these three terms, we have
    \begin{align*}
        \E[\w^{\top}(t)\mathbf{A}_j \w(t)] &\le \mathcal{O}\left(\frac{1}{N^{\frac{\alpha_j-1}{\alpha_j}\alpha_{\min}}}\right)+\frac{\Gamma\left(1-\frac{1}{\alpha_{j}}\right)}{\alpha_{j}(2\eta_0)^{1-\frac{1}{\alpha_{j}}}} \frac{1}{\left(h_{j}t\right)^{1-\frac{1}{\alpha_{j}}}}+\frac{C_9}{t}+\frac{C_{10}}{N^{\alpha_{\min}\left(1-\frac{1}{\alpha_{j}}\right)}} \\
        &+\sigma'^2C_{\alpha}I_{\gamma}\eta_0^{1/\alpha_j}(Dh_j)^{1/\alpha_j-1}+C_6(Dh_j)^{-2/3}.
    \end{align*}
    Since $\E \left[\sum\limits_{i=1}^K h_i\w^{\top}(t) \mathbf{A}_i \w(t)\right] \le \max_{j\in [K]}\E[\w^{\top}(t) \mathbf{A}_j \w(t)]$ and $t<D\eta_0 \le N^{\frac{\alpha_{\min}}{1+\varepsilon}}\eta_0$, extracting minimum exponents on $t,N,D$, we have 
    \begin{align*}
        \E \left[\sum\limits_{i=1}^K h_i\w^{\top}(t) \mathbf{A}_i \w(t)\right]\le \frac{C'}{t^{1-\frac{1}{\alpha_{\min}}}}+\frac{C''}{N^{\alpha_{\min}\left(1-\frac{1}{\alpha_{\max}} \right)}}+\frac{C'''}{D^{1-\frac{1}{\alpha_{\min}}}},
    \end{align*}
    where $C',C'',C'''$ are constants that depend only on $\alpha,H$. With Assumption~\ref{assm:c5}, we complete the proof.
\end{proof}

After obtaining a rough bound on $\w^{\top}(t)\Hm \w(t)$, we refine the bound on $\text{Var}_2$. The next lemma analyzes the crucial term in $\text{Var}_2-\text{Var}_1$.

\begin{lemma}\label{lema:c15}
Let $h, \alpha, \alpha_2, \eta_0, C > 0$ be positive constants. For an integer $N \in \mathbb{N}^+$, and a continuous scale parameter $D > 0$, consider the sum:
$$S_2 = \sum_{k=1}^N h k^{-2\alpha} \int_0^{D\eta_0} \exp\left(-2hk^{-\alpha}(D\eta_0-t)\right) \gamma(t) \min(C,t^{-\alpha_2}) dt,$$
where $G: \mathbb{R} \to \mathbb{R}$ is the inverse function of $y \mapsto y + \sin y$, and $\gamma(t) = \eta_0\left(1+\cos G\left(\frac{t\pi}{D\eta_0} \right) \right)$.

The summation $S_2$ is strictly bounded by:
$$S_2 \le C_3 (Dh)^{-2/3} + C_4 D N^{1-2\alpha}+C_5(Dh)^{1/\alpha-1}D^{-1},$$
where $C_3,C_4,C_5$ are constants that depend only on $\alpha,C,\eta_0$.

\end{lemma}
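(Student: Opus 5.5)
The plan is to repeat the proof of Lemma~\ref{lemac12} with the constant noise level $\sigma^2$ replaced by the time-dependent weight $g(t):=\min(C,t^{-\alpha_2})$. The one fact carried over is the uniform bound $0\le g\le C$. Write $T=D\eta_0$ and $s=T-t$, and set
\begin{equation*}
f(x):=h x^{-2\alpha}\int_0^T \exp(-2hx^{-\alpha}s)\,\gamma(T-s)\,g(T-s)\,\df s ,
\end{equation*}
so that $S_2=\sum_{k=1}^N f(k)$. As in Lemma~\ref{lemac12}, split
\begin{equation*}
S_2=\Big(S_2-\int_1^N f\Big)+\int_0^\infty f-\int_0^1 f-\int_N^\infty f .
\end{equation*}

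For the three error pieces the earlier estimates apply verbatim, with $\sigma^2$ replaced by $C$. The sum-to-integral gap is at most $\int_1^N|f'|$. Using $g\le C$ together with the schedule bound \eqref{eq:gamma_bound}, this piece repeats the computation leading to \eqref{eq:291} and is $O((Dh)^{-2/3})$. The piece $\int_0^1 f$ repeats \eqref{eq:292} and is also $O((Dh)^{-2/3})$. For $\int_N^\infty f$, use $\gamma\le 2\eta_0$ and $g\le C$ as in \eqref{eq:293}; this gives $O(DhN^{1-2\alpha})\le O(DN^{1-2\alpha})$ since $h\le 1$. Together these produce the $C_3$ and $C_4$ terms.

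The new work is the main term. After swapping the integrals and integrating in $x$ exactly (as for $M$ in Lemma~\ref{lemac12}),
\begin{equation*}
\int_0^\infty f = C_\alpha h^{1/\alpha-1}\int_0^T \gamma(T-s)\,g(T-s)\,s^{1/\alpha-2}\,\df s .
\end{equation*}
The target is $C_5 (Dh)^{1/\alpha-1}D^{-1}$, which amounts to bounding the $s$-integral by a constant times $T^{1/\alpha-2}$. I would split the $s$-range at $T/2$.

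On $s\in[T/2,T]$, that is $t\in[0,T/2]$, use $s^{1/\alpha-2}\le (T/2)^{1/\alpha-2}$ and $\gamma\le 2\eta_0$. This leaves $\int_0^{T/2}g(t)\,\df t$, which should be bounded by a constant through the split $t\le 1$ (where $g\le C$) versus $t\ge 1$ (where $g\le t^{-\alpha_2}$). This piece then contributes $O(T^{1/\alpha-2})=O\big(T^{1/\alpha-1}D^{-1}\big)$, exactly the claimed form.

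On $s\in[0,T/2]$, the factor $s^{1/\alpha-2}$ is not integrable at $0$ when $\alpha>1$. The cosine schedule rescues it: by \eqref{eq:gamma_bound}, $\gamma(T-s)\le \eta_0\frac{\pi^2}{2}(s/T)^{2/3}$. The resulting exponent $1/\alpha-2+2/3$ exceeds $-1$, so the integral converges and scales like $T^{1/\alpha-1}$. It is multiplied by $\sup_{t\ge T/2}g(t)\le (T/2)^{-\alpha_2}$, so this piece is $O\big(T^{1/\alpha-1}T^{-\alpha_2}\big)$.

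I expect the main obstacle to be these two main-term estimates: the bound on $\int_0^{T/2}g$ by a constant and the extra factor $T^{-\alpha_2}$ from the $s\le T/2$ piece. They deliver the exact factor $D^{-1}$ cleanly only when the decay of $g$ is at least $t^{-1}$. I would first check that this holds under the lemma's hypotheses, treating $\alpha_2$ as a fixed constant absorbed into $C_5$. If it does not, I would try to gain more from the vanishing of $\gamma(T-s)$ near $s=0$, which the cosine schedule provides, so that the $s\le T/2$ contribution is dominated by $T^{1/\alpha-2}$. All constants depend only on $\alpha,C,\eta_0$, since the $h$-dependence factors out as $h^{1/\alpha-1}$ and $(Dh)^{-2/3}$ throughout.
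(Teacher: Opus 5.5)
Your plan matches the paper's proof step for step. You use the same $f$ after $s=T-t$, and you bound the three error pieces by rerunning \eqref{eq:291}--\eqref{eq:293} with $\min(C,t^{-\alpha_2})\le C$. You then split the main term $C_\alpha h^{1/\alpha-1}\int_0^T (T-t)^{1/\alpha-2}\gamma(t)\min(C,t^{-\alpha_2})\,\df t$ at $t=T/2$. On $t\in[T/2,T]$ the paper pulls out $(T/2)^{-\alpha_2}$, extends the integral to $[0,T]$ and recognizes $\eta_0T^{1/\alpha-1}I_\gamma$. Finiteness of $I_\gamma$ is exactly the integrability you get from \eqref{eq:gamma_bound}.

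\textbf{The obstacle you flag is real, and the paper's proof has it too.}
\begin{itemize}
\item The bound $\int_0^{T/2}\min(C,t^{-\alpha_2})\,\df t\le\frac{\alpha_2}{\alpha_2-1}C^{1-1/\alpha_2}$ requires $\alpha_2>1$.
\item Replacing the $\mathcal{I}_2$ contribution $(Dh)^{1/\alpha-1}D^{-\alpha_2}$ by $(Dh)^{1/\alpha-1}D^{-1}$ requires $\alpha_2\ge1$ and $D\ge1$.
\item The lemma assumes only $\alpha_2>0$, and the constants necessarily depend on $\alpha_2$ as well.
\end{itemize}

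\textbf{Your fallback cannot close this.} The excess mass does not come from $t$ near $T$, where $\gamma$ vanishes. It comes from $t$ of order $T$ but bounded away from $T$, where $\gamma$ is not small. Restrict the main term to $t\in[T/2,3T/4]$. There $\gamma(t)\ge\eta_0$ (since $G(3\pi/4)<\pi/2$), $\min(C,t^{-\alpha_2})\ge(3T/4)^{-\alpha_2}$ for large $T$, and $(T-t)^{1/\alpha-2}\ge(T/2)^{1/\alpha-2}$. This gives $\int_0^\infty f\ge c\,h^{1/\alpha-1}T^{1/\alpha-1-\alpha_2}$, while every error piece is $O((Dh)^{-2/3})+O(DN^{1-2\alpha})$. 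Hence for $\alpha_2<1$ no refinement yields the $D^{-1}$ rate for this piece. Letting $N\to\infty$ shows the stated inequality is false whenever $\alpha_2<1/\alpha-1/3$.

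\textbf{The repair} is to add the hypothesis $\alpha_2>1$, with $C_5$ depending on $\alpha_2$. Under it, your argument is complete, as is the paper's. Note that the paper later invokes this lemma in the proof of \Cref{thm:c10} with $\alpha_2=1-1/\alpha_{\min}<1$. So the check you proposed fails exactly where the lemma is used.
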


\begin{proof}
Let $T = D\eta_0$. Applying the change of variable $s = T-t$, we define the function

$$f(x) := h x^{-2\alpha} \int_0^T \exp\left(-2hx^{-\alpha}s\right) \gamma(T-s) \min(C, (T-s)^{-\alpha_2}) \df s.$$

The target summation can be exactly written as $S_2 = \sum_{k=1}^N f(k)$.

We define the continuous principal integral as $\mathcal{I} := \int_0^\infty f(x) \df x$ (integrating from 0 to infinity yields simpler results) and
\begin{align}\label{eq:edeco}
\mathcal{E} &:= \left(\sum_{k=1}^N f(k) - \int_1^N f(x)\df x\right) - \int_0^1 f(x)\df x - \int_N^\infty f(x)\df x.
\end{align}

To bound $\left|\sum_{k=1}^N f(k) - \int_1^N f(x)\df x\right|$, we first take the derivative of $f(x)$ with respect to $x$:
\begin{align*}
f'(x) &= \int_0^T h x^{-2\alpha-1} \exp\left(-2hx^{-\alpha}s\right) \left( -2\alpha + 2\alpha h s x^{-\alpha} \right) \gamma(T-s) \min(C, (T-s)^{-\alpha_2}) \df s.
\end{align*}
Therefore,
\begin{align*}
|f'(x)| &\le \int_0^T h x^{-2\alpha-1} \exp\left(-2hx^{-\alpha}s\right) \left| -2\alpha + 2\alpha h s x^{-\alpha} \right| \gamma(T-s) \min(C, (T-s)^{-\alpha_2}) \df s \\
&\le \int_0^T h x^{-2\alpha-1} \exp\left(-2hx^{-\alpha}s\right) \left( 2\alpha + 2\alpha h s x^{-\alpha} \right) \gamma(T-s) C \df s.
\end{align*}

By Equation~\ref{eq:edeco}, we obtain: 
\begin{align*}
|\mathcal{E}| &\le \int_1^N |f'(x)| \df x + \int_0^1 f(x) \df x + \int_N^\infty f(x) \df x \\
&\le 2\pi^2 C \eta_0^{1/3} (Dh)^{-2/3} + \frac{3\pi^2 \Gamma(5/3)}{2^{8/3}(3-\alpha)} C \eta_0^{1/3} (Dh)^{-2/3} + \frac{2\eta_0^2 C h}{2\alpha-1} D N^{1-2\alpha} \\
&= C_3 (Dh)^{-2/3} + C_4 DN^{1-2\alpha}.
\end{align*}
The second inequality follows the same steps as in~\Cref{eq:291,eq:292,eq:293}.
 
Next, we evaluate $\mathcal{I}$. Swapping the order of integration yields:
\begin{align*}
\mathcal{I} &=\int_0^\infty f(x) \df x= \int_0^T \gamma(T-s) \min(C, (T-s)^{-\alpha_2}) \left( \int_0^\infty h x^{-2\alpha} \exp\left(-2hx^{-\alpha}s\right) \df x \right) \df s.
\end{align*}
Applying the substitution $v = x^{-\alpha}$ with $\df x = -\frac{1}{\alpha} v^{-1/\alpha - 1} \df v$, the inner integral evaluates to $C_\alpha h^{1/\alpha-1} s^{1/\alpha-2}$, where $C_\alpha = \frac{2^{1/\alpha - 2}}{\alpha} \Gamma(2 - 1/\alpha)$. Reversing the initial temporal substitution ($t = T-s$), the principal integral becomes exactly:
\begin{align*}
\mathcal{I} &= C_\alpha h^{1/\alpha-1} \int_0^T (T-t)^{1/\alpha-2} \gamma(t) \min(C, t^{-\alpha_2}) \df t.
\end{align*}

Since $\min(C, t^{-\alpha_2})$ is a constant when $t$ is small, we split the integration domain at $t = T/2$ such that $$\mathcal{I} = \underbrace{C_\alpha h^{1/\alpha-1} \int_0^{\frac{T}{2}} (T-t)^{1/\alpha-2} \gamma(t) \min(C, t^{-\alpha_2}) \df t}_{\mathcal{I}_1} +\underbrace{C_\alpha h^{1/\alpha-1} \int_{\frac{T}{2}}^{T} (T-t)^{1/\alpha-2} \gamma(t) \min(C, t^{-\alpha_2}) \df t}_{\mathcal{I}_2} .$$

For the first half-domain $t \in [0, T/2]$, the term $(T-t) \ge T/2$. Since $1/\alpha - 2 < 0$, we have $(T-t)^{1/\alpha-2} \le (T/2)^{1/\alpha-2}$. Furthermore, $\gamma(t) \le 2\eta_0$. We bound the integral of the cutoff function over this local domain by its global integral over $(0, \infty)$:
\begin{align*}
\int_0^{T/2} \min(C, t^{-\alpha_2}) \df t &\le \int_0^{C^{-1/\alpha_2}} C \df t + \int_{C^{-1/\alpha_2}}^\infty t^{-\alpha_2} \df t \\
&= C^{1 - 1/\alpha_2} + \frac{1}{\alpha_2 - 1} C^{1 - 1/\alpha_2} \\
&= \frac{\alpha_2}{\alpha_2 - 1} C^{1 - 1/\alpha_2}.
\end{align*}
Multiplying these individual maximum bounds, we obtain:
\begin{align*}
\mathcal{I}_1 &\le C_\alpha h^{1/\alpha-1} \left( \frac{T}{2} \right)^{1/\alpha-2} (2\eta_0) \left( \frac{\alpha_2}{\alpha_2 - 1} C^{1 - 1/\alpha_2} \right) \\
&= \frac{2^{3-1/\alpha} \alpha_2 C_\alpha \eta_0^{1/\alpha-1}}{\alpha_2-1} C^{1-1/\alpha_2} \frac{(Dh)^{1/\alpha-1}}{D}.
\end{align*}

For the second half-domain $t \in [T/2, T]$, the variable is bounded away from zero. Consequently, the minimum function is strictly bounded by its algebraic tail: $\min(C, t^{-\alpha_2}) \le t^{-\alpha_2} \le (T/2)^{-\alpha_2} = 2^{\alpha_2} T^{-\alpha_2}$. Factoring out this upper bound allows us to conservatively extend the remaining integral to the full domain $[0, T]$:
\begin{align*}
\mathcal{I}_2 &\le 2^{\alpha_2} T^{-\alpha_2} C_\alpha h^{1/\alpha-1} \int_{T/2}^T (T-t)^{1/\alpha-2} \gamma(t) \df t \\
&\le 2^{\alpha_2} T^{-\alpha_2} C_\alpha h^{1/\alpha-1} \int_0^T (T-t)^{1/\alpha-2} \gamma(t) \df t.
\end{align*}
Applying the dimensionless change of variable $v = 1 - t/T$ sets $\df t = -T \df v$ and $T-t = Tv$, fully recovering the constant $I_\gamma$ from Lemma~\ref{lemac12}:
\begin{align*}
\int_0^T (T-t)^{1/\alpha-2} \gamma(t) \df t &= T^{1/\alpha-1} \eta_0 \int_0^1 v^{1/\alpha-2} \left[1+\cos G(\pi(1-v))\right] \df v \\
&= T^{1/\alpha-1} \eta_0 I_\gamma.
\end{align*}
Substituting this back and recalling $T=D\eta_0$, we obtain:
\begin{align*}
\mathcal{I}_2 &\le 2^{\alpha_2} (D\eta_0)^{-\alpha_2} C_\alpha h^{1/\alpha-1} \left( (D\eta_0)^{1/\alpha-1} \eta_0 I_\gamma \right) \\
&= 2^{\alpha_2} C_\alpha I_\gamma \eta_0^{1/\alpha-\alpha_2} \frac{(Dh)^{1/\alpha-1}}{D^{\alpha_2}}.
\end{align*}

Summing the evaluated components gives the final strict bound:
\begin{align*}
S_2 &\le \mathcal{I}_1 + \mathcal{I}_2 + |\mathcal{E}| \\
&\le C_3 (Dh)^{-2/3} + C_4 D N^{1-2\alpha} + \frac{2^{3-1/\alpha} \alpha_2 C_\alpha \eta_0^{1/\alpha-1}}{\alpha_2-1} C^{1-1/\alpha_2} \frac{(Dh)^{1/\alpha-1}}{D} + 2^{\alpha_2} C_\alpha I_\gamma \eta_0^{1/\alpha-\alpha_2} \frac{(Dh)^{1/\alpha-1}}{D^{\alpha_2}} \\
&\le C_3 (Dh)^{-2/3} + C_4 D N^{1-2\alpha} + C_5(Dh)^{1/\alpha-1}D^{-1},
\end{align*}
where $C_3, C_4, C_5$ are absolute constants that only depend on $\alpha, \alpha_2, C, \eta_0$, completing the proof.
\end{proof}

We are now ready to prove Theorem~\ref{thm:c10}.

Let 
\begin{align*}
    \mathcal{E}_1:=\text{Var}_1-h_{\tau}\sigma^2_{\tau}  C_{\alpha_{\tau}} I_\gamma \eta_0^{1/\alpha_{\tau}} (Dh_{\tau})^{1/\alpha_{\tau} - 1},
\end{align*}
where $C_{\alpha_{\tau}}$ and $I_\gamma$ are the  constants defined in Lemma~\ref{lemac12}. Then by Lemma~\ref{lemac12}, $\mathcal{E}_1$ satisfies
\begin{align*}
    |\mathcal{E}_1| &\le C_6(Dh_{\tau})^{-2/3}+C_2DN^{1-2\alpha_{\tau}}+\int_0^{D\eta_0}\exp(-2H^{-\alpha_{\max}}(D\eta_0-t))\gamma(t)\sigma'^2\df t \\
    &\le C_6(Dh_{\tau})^{-2/3}+C_2DN^{1-2\alpha_{\tau}}+\frac{\pi^2}{2}  (D\eta_0)^{-2/3} \Gamma(5/3) (2h_{\tau}H^{-\alpha_{\max}})^{-5/3} \\
    &\le C_1(Dh_{\tau})^{-2/3}+C_2DN^{1-2\alpha_{\tau}}.
\end{align*}
The integral
\begin{equation*}
    \int_0^{D\eta_0}\exp\left(-2H^{-\alpha_{\max}}(D\eta_0-t)\right)
    \gamma(t)\sigma'^2\df t
\end{equation*}
is induced by the error of the shared head. Moreover,
\begin{equation*}
    C_1=C_6+\frac{\pi^2}{2}\eta_0^{-2/3}\Gamma(5/3)H^{5\alpha/3},
\end{equation*}
which does not depend on $h$.

By Lemma~\ref{thm:c12}, we have
\begin{equation*}
\begin{aligned}
R(t):={}&\min\left(\frac{C'}{t^{1-\frac{1}{\alpha_{\min}}}},L\right)
+\frac{C''}{N^{\alpha_{\min}\left(1-\frac{1}{\alpha_{\max}}\right)}} \\
&+\frac{C'''}{D^{1-\frac{1}{\alpha_{\min}}}}.
\end{aligned}
\end{equation*}
\begin{align*}
	     \text{Var}_1\le \text{Var}_2
         \le \text{Var}_1+\frac{C_0}{\lh}\sum a_k\lambda_k
         \int_0^{D\eta_0}\exp(-2\lambda_k(D\eta_0-t))
         \gamma(t)R(t)\df t.
\end{align*}
By Lemma~\ref{lema:c15}, we have
\begin{align*}
    &\frac{C_0}{\lh}\sum a_k\lambda_k
    \int_0^{D\eta_0}\exp(-2\lambda_k(D\eta_0-t))
    \gamma(t)R(t)\df t \\
    &\le (C_1+C_3)(Dh_{\tau})^{-2/3}
    +C_4 D N^{1-2\alpha_{\tau}}
    + C_5(Dh_{\tau})^{1/\alpha_{\tau}-1}D^{-1} \\
    &+\left(h_{\tau}\sigma^2_{\tau} C_{\alpha_{\tau}} I_\gamma \eta_0^{1/\alpha_{\tau}} (Dh_{\tau})^{1/\alpha_{\tau} - 1}+|\mathcal{E}_1| \right)\cdot \left(\frac{C''}{N^{\alpha_{\min}\left(1-\frac{1}{\alpha_{\max}} \right)}}+\frac{C'''}{D^{1-\frac{1}{\alpha_{\min}}}} \right).
\end{align*}

Therefore, 
\begin{align*}
    \text{Var}_2&\le \text{Var}_1+(C_1+C_3)(Dh_{\tau})^{-2/3}
    +C_4 D N^{1-2\alpha_{\tau}}
    + C_5(Dh_{\tau})^{1/\alpha_{\tau}-1}D^{-1}\\ &+\left(h_{\tau}\sigma^2_{\tau}  C_{\alpha_{\tau}} I_\gamma \eta_0^{1/\alpha_{\tau}} (Dh_{\tau})^{1/\alpha_{\tau} - 1}+|\mathcal{E}_1| \right)\cdot \left(\frac{C''}{N^{\alpha_{\min}\left(1-\frac{1}{\alpha_{\max}} \right)}}+\frac{C'''}{D^{1-\frac{1}{\alpha_{\min}}}} \right) \\
    & \le \text{Var}_1+\frac{C_{14}}{D^{ 1-\frac{1}{\alpha_{\tau}}+\varepsilon}}.
\end{align*}
The last inequality follows by taking the minimum exponent of $D$; consequently, $C_{14}$ is independent of $h$.
\subsection{Proof of the Main Theorem} \label{app:main_theorem}

Combining Theorem~\ref{thm:d4}, Theorem~\ref{thm:d5}, and Theorem~\ref{thm:c10} yields our final main theorem under the following additional assumption.

\begin{assumption} \label{assump:range}
We assume that the value of the term $ \frac{\Gamma\left(1-\frac{1}{\alpha_{\tau}}\right)}{\alpha_{\tau}(2\eta_0)^{1-\frac{1}{\alpha_{\tau}}}}$ is much larger than $h_{\tau}\sigma^2_{\tau} C_{\alpha_{\tau}} I_\gamma \eta_0^{1/\alpha_{\tau}}$ for all $h\in \Delta^{K-1}$, so that 
$$A(h) := \frac{\Gamma\left(1-\frac{1}{\alpha_{\tau}}\right)}{\alpha_{\tau}(2\eta_0)^{1-\frac{1}{\alpha_{\tau}}}}+h_{\tau}\sigma^2_{\tau}  C_{\alpha_{\tau}} I_\gamma \eta_0^{1/\alpha_{\tau}}  \approx \text{a constant } A. $$
Note that when $\alpha_\tau \in (1,3)$, $\eta_0 \in[0.1,1]$, and $\sigma_\tau^2\le 1$, we have 
$A(h)\in [8,9]$.
\end{assumption}

Then our main theorem can be formally stated as follows.

\begin{theorem}[Formal Statement of Theorem~\ref{thm:multidomain_loss}]

Consider the projected linear regression model defined in \Cref{app:theory_des} and an arbitrary domain $\tau \in[K]$. Let $x^*$ be the solution to Problem~\ref{eq:quant_opt} with parameters $c_i=\frac{1}{\alpha_i-1}$ and $b_i=\alpha_i-1$. Then, for any $\epsilon>0$, there exist $N_0$ and $D_0$ such that for all $N>N_0$, $D>D_0$, and $h$ satisfying Assumption~\ref{assm:d2}, the expected test loss on domain $\tau$ is bounded by
\begin{align*}
    L_{\tau}(h \mid N, D) &\ge c_\tau(x^*(h,N)_{\tau})^{-b_\tau}(1-\epsilon) + \frac{A(h)}{\left(Dh_{\tau}\right)^{1-\frac{1}{\alpha_{\tau}}}}(1-\epsilon) + \sigma^2_\tau, \\
    L_{\tau}(h \mid N, D) &\le c_\tau(x^*(h,N)_{\tau})^{-b_\tau}(1+\epsilon) + \frac{A(h)}{\left(Dh_{\tau}\right)^{1-\frac{1}{\alpha_{\tau}}}}(1+\epsilon) + \sigma^2_\tau,
\end{align*} 
where 
$A(h) := \frac{\Gamma\left(1-\frac{1}{\alpha_{\tau}}\right)}{\alpha_{\tau}(2\eta_0)^{1-\frac{1}{\alpha_{\tau}}}}+h_{\tau}\sigma^2_{\tau} C_{\alpha_{\tau}} I_\gamma \eta_0^{1/\alpha_{\tau}}$. With \Cref{assump:range}, we have $A(h) \approx$ a constant $A$, and thus 
\begin{align*}
    L_{\tau}(h \mid N, D) &\ge c_\tau(x^*(h,N)_{\tau})^{-b_\tau}(1-\epsilon) + \frac{A}{\left(Dh_{\tau}\right)^{1-\frac{1}{\alpha_{\tau}}}}(1-\epsilon) + \sigma^2_\tau, \\
    L_{\tau}(h \mid N, D) &\le c_\tau(x^*(h,N)_{\tau})^{-b_\tau}(1+\epsilon) + \frac{A}{\left(Dh_{\tau}\right)^{1-\frac{1}{\alpha_{\tau}}}}(1+\epsilon) + \sigma^2_\tau,
\end{align*} 
for a constant $A$.
\end{theorem}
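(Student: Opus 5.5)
The plan is to sandwich $L_\tau$ using \Cref{thm:c3}, which gives
\[
\text{Bias}+\text{Approx}+\text{Var}_1+\sigma_\tau^2 \le L_\tau \le \text{Bias}+\text{Approx}+\text{Var}_2+\sigma_\tau^2 .
\]
Each of the four terms is then replaced by its leading-order expression, and every error is shown to be a vanishing fraction of one of the two main terms.

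\textbf{Step 1: the approximation term.} By \Cref{thm:d4},
\[
\text{Approx}=c_\tau (x^*_\tau)^{-b_\tau}+O\bigl(N^{-\alpha_{\min}}\bigr).
\]
By Lemma~\ref{thm:propertyx}, $x^*_\tau=O\bigl(N^{\alpha_{\max}/\alpha_\tau}\bigr)$ and $x^*_\tau\to\infty$; the upper bound follows from the capacity constraint $x^*_\tau\le N+(K-1)H$. Hence $c_\tau (x^*_\tau)^{-b_\tau}\gtrsim N^{-(\alpha_\tau-1)}$, up to the exponent bookkeeping. The error satisfies $N^{-\alpha_{\min}}\ll N^{-(\alpha_\tau-1)}$ as long as $\alpha_{\min}>\alpha_\tau-1$, which Assumption~\ref{assm:c4} ($\alpha_i<3$, with $\alpha_i$ close) is meant to supply. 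If that comparison fails for extreme exponent configurations, I would instead compare directly with $(x^*_\tau)^{-\alpha_\tau}$, which is what the proof of \Cref{thm:d4} actually yields before the last inequality. That error is $O(1/x^*_\tau)$ relative to the main term and hence always $o(1)$ of it.

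\textbf{Step 2: bias and variance.} By \Cref{thm:d5},
\[
\text{Bias}=\frac{\Gamma(1-1/\alpha_\tau)}{\alpha_\tau(2\eta_0)^{1-1/\alpha_\tau}}\,(Dh_\tau)^{-(1-1/\alpha_\tau)}+\mathcal{E}.
\]
By \Cref{thm:c10}, $\text{Var}_1$ equals $h_\tau\sigma_\tau^2 C_{\alpha_\tau}I_\gamma\eta_0^{1/\alpha_\tau}(Dh_\tau)^{1/\alpha_\tau-1}+\mathcal{E}_1$, and $\text{Var}_2-\text{Var}_1\le C_{14}D^{-(1-1/\alpha_\tau)-\varepsilon}$. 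The two leading terms add to exactly $A(h)(Dh_\tau)^{-(1-1/\alpha_\tau)}$. Since $h_\tau\ge\lh$, this main term is of order $D^{-(1-1/\alpha_\tau)}$, uniformly in $h$, and it remains to show each error is $o$ of it.
\begin{itemize}
\item The terms $C_9/D$, $C_{14}D^{-(1-1/\alpha_\tau)-\varepsilon}$ and $(Dh_\tau)^{-2/3}$ are all $o(D^{-(1-1/\alpha_\tau)})$. For the last one this uses $1-1/\alpha_\tau<2/3$, i.e.\ $\alpha_\tau<3$, from Assumption~\ref{assm:c4}.
\item The term $C_2DN^{1-2\alpha_\tau}$ is handled by Assumption~\ref{assm:c6}. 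With $N\ge D^{(1+\varepsilon)/\alpha_\tau}$ we get $DN^{1-2\alpha_\tau}\le D^{1+(1+\varepsilon)(1-2\alpha_\tau)/\alpha_\tau}$, which is much smaller than $D^{-(1-1/\alpha_\tau)}$.
\item The term $C_{10}N^{-\alpha_{\min}(1-1/\alpha_\tau)}$ I would absorb into the Approx main term rather than the data term. Using $x^*_\tau\lesssim N^{\alpha_{\max}/\alpha_\tau}$ and the exponent range, this needs a short exponent comparison that I would check case by case. Alternatively I would absorb it into the data term via Assumption~\ref{assm:c6}.
\end{itemize}

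\textbf{Step 3: choosing $N_0,D_0$.} All constants $C_9,C_{10},C_{13},C_{14},C_1,C_2$ are independent of $h$ given $h\ge\lh$. So for a given $\epsilon$, pick $N_0,D_0$ so that each error is at most $\epsilon/4$ times the corresponding main term. Since both main terms are nonnegative, summing gives the two-sided $(1\pm\epsilon)$ bounds. The second display then follows by substituting $A(h)\approx A$ under \Cref{assump:range}.

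\textbf{Main obstacle.} I expect the hardest part to be the cross-term bookkeeping. An $N$-dependent error from the bias must be dominated by either the capacity term or the data term, and this is not automatic for all exponent configurations. Assumptions~\ref{assm:c4} and~\ref{assm:c6} have to be used carefully here, possibly by splitting into the cases where the capacity term or the noise term dominates.
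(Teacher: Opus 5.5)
Your overall route is the paper's: sandwich $L_\tau$ with \Cref{thm:c3}, substitute \Cref{thm:d4}, \Cref{thm:d5} and \Cref{thm:c10}, and choose $N_0,D_0$ uniformly over $h$ with $h_i\ge\underline{h}$. However, two of the forks you leave open have only one working branch, and for the bias term it is not the branch you prefer.

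\textbf{The $C_{10}N^{-\alpha_{\min}(1-1/\alpha_\tau)}$ bias error cannot be absorbed into the capacity term, for any exponent configuration.} In the proof of \Cref{thm:d5}, this term is the lower-tail truncation error. Its bound is exactly $\frac{1}{\alpha_\tau-1}(x^*_\tau)^{1-\alpha_\tau}=c_\tau(x^*_\tau)^{-b_\tau}$, which is the capacity main term itself. The proof of Lemma~\ref{thm:propertyx} also gives $x^*_\tau\asymp N^{\alpha_{\min}/\alpha_\tau}$. So both quantities are of order $N^{-\alpha_{\min}(1-1/\alpha_\tau)}$, their ratio is $\Theta(1)$, and it never drops below $\epsilon$. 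Your crude bound $x^*_\tau\lesssim N^{\alpha_{\max}/\alpha_\tau}$ even makes the comparison point the wrong way.

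The working branch is your alternative, which is what the paper does. Assumption~\ref{assm:c6} gives $N^{\alpha_{\min}}\ge D^{1+\varepsilon}$, hence $N^{-\alpha_{\min}(1-1/\alpha_\tau)}\le D^{-(1+\varepsilon)(1-1/\alpha_\tau)}$. That is a factor $D^{-\varepsilon(1-1/\alpha_\tau)}$ below the data term, uniformly in $h$. No case split is needed, so the ``main obstacle'' you anticipate does not arise.

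\textbf{Step 1 should rest on your fallback, not on Assumption~\ref{assm:c4}.} That assumption only says $1<\alpha_i<3$, which does not give $\alpha_{\min}>\alpha_\tau-1$ (take $\alpha_{\min}=1.2$, $\alpha_\tau=2.5$). The correct argument is the one you give second, and it is the paper's. The pre-final inequality in the proof of \Cref{thm:d4}, $|\text{Approx}-c_\tau(x^*_\tau)^{-b_\tau}|\le (K+2)(x^*_\tau)^{-\alpha_\tau}$, yields relative error $(K+2)(\alpha_\tau-1)/x^*_\tau$. By Lemma~\ref{thm:propertyx}, this tends to $0$ uniformly in $h$.

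With these two choices fixed, the rest of your bookkeeping matches the paper: $C_9/D$; $(Dh_\tau)^{-2/3}$ via $\alpha_\tau<3$; and $DN^{1-2\alpha_\tau}$ and $C_{14}$ via Assumption~\ref{assm:c6}. You measure all bias and variance errors against the combined term $A(h)(Dh_\tau)^{-(1-1/\alpha_\tau)}$. This is slightly more robust than the paper's separate relative bound on the variance, because it never divides by the variance main term, which is proportional to $\sigma_\tau^2$.
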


\begin{proof}
Applying the discretization bound in the proof of Theorem~\ref{thm:d4} and the lower bound on $x_\tau^*$ from Lemma~\ref{thm:propertyx}, we obtain the following relative approximation error for all $h$ satisfying Assumption~\ref{assm:d2}:
\begin{align*}
\frac{\left|\text{Approx}-\frac{(x_\tau^*)^{1-\alpha_\tau}}{\alpha_\tau-1}\right|}
{\frac{(x_\tau^*)^{1-\alpha_\tau}}{\alpha_\tau-1}}
&\le \frac{(K+2)(\alpha_\tau-1)}{x_\tau^*} \\
&\le C_{\mathrm{A}}N^{-\frac{\alpha_{\min}}{\alpha_\tau}},
\end{align*}
where $C_{\mathrm{A}}$ is independent of $h$.

Next, by Theorem~\ref{thm:d5}, for all $h$ satisfying Assumption~\ref{assm:d2}, the relative bias is bounded by:
$$ \begin{aligned}
    \frac{\left|\text{Bias}-\frac{\Gamma\left(1-\frac{1}{\alpha_{\tau}}\right)}{\alpha_{\tau}(2\eta_0)^{1-\frac{1}{\alpha_{\tau}}}} \frac{1}{\left(Dh_{\tau}\right)^{1-\frac{1}{\alpha_{\tau}}}}\right|}{\frac{\Gamma\left(1-\frac{1}{\alpha_{\tau}}\right)}{\alpha_{\tau}(2\eta_0)^{1-\frac{1}{\alpha_{\tau}}}} \frac{1}{\left(Dh_{\tau}\right)^{1-\frac{1}{\alpha_{\tau}}}}}&\le \frac{\frac{C_9}{D}+\frac{C_{10}}{N^{\alpha_{\min}\left(1-\frac{1}{\alpha_{\tau}}\right)}}}{\frac{\Gamma\left(1-\frac{1}{\alpha_{\tau}}\right)}{\alpha_{\tau}(2\eta_0)^{1-\frac{1}{\alpha_{\tau}}}} \frac{1}{\left(Dh_{\tau}\right)^{1-\frac{1}{\alpha_{\tau}}}}}\\
    &\le C_{11} D^{-1/\alpha_{\tau}}+C_{12}D^{-\varepsilon(1-1/\alpha_{\tau})},
\end{aligned} $$
where we define $C_{11} := C_9 \cdot \left(\frac{\Gamma\left(1-\frac{1}{\alpha_{\tau}}\right)}{\alpha_{\tau}(2\eta_0)^{1-\frac{1}{\alpha_{\tau}}}}\right)^{-1}$ which is independent of $h$.

Similarly, define the leading variance term from Theorem~\ref{thm:c10} as
\begin{equation*}
    V_\tau(h,D):=
    h_\tau\sigma_\tau^2 C_{\alpha_{\tau}} I_\gamma \eta_0^{1/\alpha_{\tau}}
    (Dh_{\tau})^{\frac{1}{\alpha_{\tau}}-1}.
\end{equation*}
For all $h$ satisfying Assumption~\ref{assm:d2}, the relative variance error satisfies
\begin{align*}
    \frac{\left|\text{Var}-V_\tau(h,D)\right|}{V_\tau(h,D)}
    &\le C_7D^{-\left(\frac{1}{\alpha_{\tau}}-\frac{1}{3}\right)}
    +C_8D^{-2(1+\varepsilon)\left(1-\frac{1}{\alpha_{\tau}}\right)},
\end{align*}
where $C_7$ and $C_8$ are constants independent of $h$ and $\epsilon$.

%For any $h$ satisfying Assumption~\ref{assm:d2}, 
Under Assumption~\ref{assm:c4}, all decay exponents above are strictly positive: $\frac{\alpha_{\min}}{\alpha_\tau}>0$, $\frac{1}{\alpha_\tau}>0$, $\varepsilon(1-\frac{1}{\alpha_\tau})>0$, $\frac{1}{\alpha_\tau}-\frac{1}{3}>0$, and $2(1+\varepsilon)(1-\frac{1}{\alpha_\tau})>0$. Therefore, we can choose sufficiently large $D_0$ and $N_0$ such that, uniformly over $h$,
\begin{align*}
    C_{\mathrm{A}}N_0^{-\frac{\alpha_{\min}}{\alpha_\tau}} &\le \epsilon, \\
    C_{11} D_0^{-1/\alpha_{\tau}}+C_{12}D_0^{-\varepsilon(1-\frac{1}{\alpha_\tau})} &\le \epsilon, \\
    C_7D_0^{-\left(\frac{1}{\alpha_\tau}-\frac{1}{3}\right)}
    +C_8D_0^{-2(1+\varepsilon)\left(1-\frac{1}{\alpha_\tau}\right)} &\le \epsilon.
\end{align*}

Consequently, for all $N>N_0$ and $D>D_0$ satisfying Assumption~\ref{assm:c6}, for all $h$ satisfying Assumption~\ref{assm:d2}, the relative errors are uniformly bounded by $\epsilon$. This implies that the components fall within the following intervals:
$$ \begin{aligned}
    \text{Approx} &\in \left[\frac{(x^*_{\tau})^{1-\alpha_{\tau}}}{\alpha_{\tau}-1}(1-\epsilon), \, \frac{(x^*_{\tau})^{1-\alpha_{\tau}}}{\alpha_{\tau}-1}(1+\epsilon)\right], \\
    \text{Bias} &\in \left[\frac{\Gamma\left(1-\frac{1}{\alpha_{\tau}}\right)}{\alpha_{\tau}(2\eta_0)^{1-\frac{1}{\alpha_{\tau}}}} \frac{1}{\left(Dh_{\tau}\right)^{1-\frac{1}{\alpha_{\tau}}}}(1-\epsilon), \, \frac{\Gamma\left(1-\frac{1}{\alpha_{\tau}}\right)}{\alpha_{\tau}(2\eta_0)^{1-\frac{1}{\alpha_{\tau}}}} \frac{1}{\left(Dh_{\tau}\right)^{1-\frac{1}{\alpha_{\tau}}}}(1+\epsilon)\right],\\
    \text{Var} &\in \left[V_\tau(h,D)(1-\epsilon), \, V_\tau(h,D)(1+\epsilon)\right].
\end{aligned} $$

Finally, by Theorem~\ref{thm:c3}, adding these three components together completes the proof.
\end{proof}

\section{Proof of Proposition~\ref{thm:bilevel_gradient}}
\label{app:proof_gradient}

The objective function for the bi-level optimization is given by $\mathcal{J}(h) = \sum_{i=1}^K w_i [ A_i (D h_i)^{-a_i} + c_i (x_i^*(h))^{-b_i} ]$. To find the gradient with respect to a mixture weight $h_k$, we apply the chain rule, noting that the data term depends explicitly on $h_k$ while the capacity term depends implicitly on $h$ through the optimal allocation $x^*(h)$:
\begin{equation}
    \frac{d \mathcal{J}}{d h_k} = - w_k a_k A_k D^{-a_k} h_k^{-a_k - 1} + \sum_{i=1}^K w_i \frac{\partial L_i^{\text{cap}}}{\partial x_i} \frac{\partial x_i^*(h)}{\partial h_k}.
\end{equation}
The first term is obtained by direct differentiation. To evaluate the second term (the capacity gradient), we invoke the KKT conditions of the inner optimization problem $\min_{x} \sum h_i c_i x_i^{-b_i}$ subject to $\sum x_i = N$. The stationarity condition implies $\lambda = h_i b_i c_i x_i^{-(b_i+1)}$, which simplifies the marginal capacity loss to $\frac{\partial L_i^{\text{cap}}}{\partial x_i} = - b_i c_i x_i^{-b_i-1} = -\frac{\lambda}{h_i}$.

Next, we derive the Jacobian $\frac{\partial x_i}{\partial h_k}$ using implicit differentiation. Taking the logarithm of the stationarity condition yields $\ln \lambda = \ln h_i + \text{const} - (b_i+1) \ln x_i$. Differentiating both sides with respect to $h_k$ gives:
\begin{equation}
    \frac{1}{\lambda} \frac{\partial \lambda}{\partial h_k} = \frac{\delta_{ik}}{h_i} - \frac{b_i+1}{x_i} \frac{\partial x_i}{\partial h_k}.
\end{equation}
Letting $\gamma_i = \frac{x_i}{b_i+1}$, we can express the sensitivity of the capacity allocation as $\frac{\partial x_i}{\partial h_k} = \gamma_i ( \frac{\delta_{ik}}{h_i} - \frac{1}{\lambda} \frac{\partial \lambda}{\partial h_k} )$. We solve for the unknown multiplier derivative by differentiating the capacity constraint $\sum_j x_j = N$ with respect to $h_k$, which implies $\sum_j \frac{\partial x_j}{\partial h_k} = 0$. Summing the sensitivity equation over all $j$ yields:
\begin{equation}
    \sum_{j=1}^K \gamma_j \left( \frac{\delta_{jk}}{h_j} - \frac{1}{\lambda} \frac{\partial \lambda}{\partial h_k} \right) = 0 \quad \implies \quad \frac{1}{\lambda} \frac{\partial \lambda}{\partial h_k} = \frac{\gamma_k}{Z h_k},
\end{equation}
where $Z = \sum_{j=1}^K \gamma_j$. Substituting this back, we obtain the Jacobian:
\begin{equation}
    \frac{\partial x_i}{\partial h_k} = \gamma_i \left( \frac{\delta_{ik}}{h_i} - \frac{\gamma_k}{Z h_k} \right).
\end{equation}

Finally, we substitute the marginal loss $-\frac{\lambda}{h_i}$ and the Jacobian back into the chain rule equation for the capacity term:
\begin{equation}
\begin{aligned}
    \frac{\partial \mathcal{J}_{\text{cap}}}{\partial h_k} &= \sum_{i=1}^K w_i \left( -\frac{\lambda}{h_i} \right) \gamma_i \left( \frac{\delta_{ik}}{h_i} - \frac{\gamma_k}{Z h_k} \right) \\
    &= -\lambda \left( \frac{w_k \gamma_k}{h_k^2} - \frac{\gamma_k}{Z h_k} \sum_{i=1}^K \frac{w_i \gamma_i}{h_i} \right).
\end{aligned}
\end{equation}
We define the weighted baseline $\bar{R} = \frac{1}{Z} \sum_{i=1}^K \frac{w_i \gamma_i}{h_i}$. Factoring out the common terms and substituting $\gamma_k = \frac{x_k^*}{b_k+1}$, the capacity gradient simplifies to $\frac{\lambda x_k^*}{h_k (b_k+1)} (\bar{R} - \frac{w_k}{h_k})$. Combining this with the data gradient completes the proof.
\qed

%%%%%%%%%%%%%%%%%%%%%%%%%%%%%%%%%%%%%%%%%%%%%%%%%%%%%%%%%%%%%%%%%%%%%%%%%%%%%%%
%%%%%%%%%%%%%%%%%%%%%%%%%%%%%%%%%%%%%%%%%%%%%%%%%%%%%%%%%%%%%%%%%%%%%%%%%%%%%%%

\end{document}